\newcommand*{\rom}[1]{\expandafter\@slowromancap\romannumeral #1@}
\def \bs {\backslash}
\def \dd {\text{d}}
\newcommand{\la}{\langle}
\newcommand{\ra}{\rangle}
\def \transitSGLD {\cT}
\def \transitRef {\cT^{\star}}
\def \stationalRef {\pi^{\star}}
\begin{document}
\title{
\huge Faster Convergence  of Stochastic Gradient Langevin Dynamics for Non-Log-Concave Sampling
%Non-Log-Concave Sampling: Stochastic Gradient Langevin Dynamics Is Convergent
}
\author
{
	Difan Zou\thanks{Department of Computer Science, University of California, Los Angeles, Los Angeles, CA 90095; e-mail: {\tt knowzou@cs.ucla.edu}} 
	~~~and~~~
	Pan Xu\thanks{Department of Computer Science, University of California, Los Angeles, Los Angeles, CA 90095; e-mail: {\tt panxu@cs.ucla.edu}} 
    ~~~and~~~
    Quanquan Gu\thanks{Department of Computer Science, University of California, Los Angeles, Los Angeles, CA 90095; e-mail: {\tt qgu@cs.ucla.edu}}
}
\date{}
\maketitle
\begin{abstract}
We provide a new  convergence analysis of stochastic gradient Langevin dynamics (SGLD) for sampling from a class of distributions that can be non-log-concave.  At the core of our approach is a novel conductance analysis of SGLD using an auxiliary time-reversible Markov Chain. Under certain conditions on the target distribution, we prove that $\tilde O(d^4\epsilon^{-2})$ stochastic gradient evaluations suffice to guarantee $\epsilon$-sampling error in terms of the total variation distance, where $d$ is the problem dimension. This improves existing results on the convergence rate of SGLD \citep{raginsky2017non,xu2018global}.
We further show that provided an additional Hessian Lipschitz condition on the log-density function, SGLD is guaranteed to achieve $\epsilon$-sampling error within $\tilde O(d^{15/4}\epsilon^{-3/2})$ stochastic gradient evaluations.
Our proof technique provides a new way to study the convergence of Langevin-based algorithms and sheds some light on the design of fast stochastic gradient-based sampling algorithms.

\end{abstract}

\section{Introduction}
We study the problem of sampling from a target distribution using Langevin dynamics \citep{langevin1908theory} based algorithms. Mathematically,  Langevin dynamics (a.k.a., overdamped Langevin dynamics) is defined by the following stochastic differential equation (SDE)
\begin{align}\label{eq:langevin_dynamics}
\dd  \bX(t) = -\nabla f\big(\bX(t)\big) \dd t + \sqrt{2\beta^{-1}}\dd \bB(t),
\end{align}
where $\beta>0$ is called the inverse temperature parameter and $\bB(t)\in\RR^d$ is the Brownian motion at time $t$. It has been proved in \cite{chiang1987diffusion,roberts1996exponential} that under certain conditions on the drift term $-\nabla f(\bX(t))$, the Langevin dynamics will converge to a unique stationary distribution $\pi(\dd \xb)\propto e^{-\beta f(\xb)}\dd \xb$. To approximately sample from such a target distribution $\pi$, we can apply the Euler-Maruyama discretization onto \eqref{eq:langevin_dynamics}, leading to the Langevin Monte Carlo algorithm (LMC), which iteratively updates the parameter $\xb_k$ as follows
\begin{align}\label{eq:def_lmc}
\xb_{k+1} = \xb_k - \eta\nabla f(\xb_k) + \sqrt{2\eta\beta^{-1}}\cdot\bepsilon_k,
\end{align}
where $k=0,1,\ldots$ denotes the time step, $\{\bepsilon_k\}_{k=0,1,\ldots}$ are i.i.d. standard Gaussian random vectors in $\RR^d$, and $\eta>0$ is the step size of the discretization.

In large scale machine learning problems that involve a large amount of training data, the log-density function $f(\xb)$ can be typically formulated as the average of the log-density functions over all the training data points, i.e., $f(\xb) = n^{-1}\sum_{i=1}^n f_i(\xb)$\footnote{In some cases, the log-density function $f(\xb)$ is formulated as the sum of the log-density functions for training data points instead of the average. To cover these cases, we can simply transform the temperature parameter $\beta\rightarrow n\beta$ and thus the target distribution remains the same.
% we can simply perform transformation $f_i(\xb)\rightarrow nf_i(\xb)$ and the theoretical results in this paper still hold.
}, where $n$ is the size of training dataset and $f_i(\xb)$ denotes the log-density function for the $i$-th training data point. In these problems, the computation of the full gradient over the entire dataset can be very time-consuming. In order to save the cost of gradient computation, one can replace the full gradient $\nabla f(\xb)$ with a stochastic gradient computed only over a small subset of the dataset, which gives rise to stochastic gradient Langevin dynamics (SGLD) \citep{welling2011bayesian}. 

When the target distribution $\pi$ is log-concave, SGLD provably converges to $\pi$ at a sublinear rate in $2$-Wasserstein distance \citep{dalalyan2017user,dalalyan2017further,wang2019laplacian}. 
However, it becomes much more challenging to establish the convergence of SGLD when the target distribution is not log-concave. When the negative log-density function $f(\xb)$ is smooth and dissipative, the global convergence guarantee of SGLD has been firstly established in \citet{raginsky2017non}\footnote{Although this paper mainly focuses on the convergence analysis of SGLD for nonconvex optimization, part of its theoretical results also reveal the convergence rate for sampling from a target distribution.} via the optimal control theory and further improved in \citet{xu2018global}  by a direct analysis of the ergodicity of LMC. Nonetheless, these two works require extremely large mini-batch size (e.g., $B=\Omega(\epsilon^{-4})$) to ensure a sufficiently small sampling error, which is prohibitively large or even unrealistic compared with the practical setting. \citet{zhang2017hitting} studied the hitting time of SGLD for nonconvex optimization, but can only provide the convergence guarantee for finding a local minimum rather than converging to the target distribution. Recently, \citet{chau2019stochastic, zhang2019nonasymptotic} studied the global convergence of SGLD for nonconvex stochastic optimization problems and proved faster convergence rates than those in \citet{raginsky2017non,xu2018global}. However, their convergence results require an additional Lipschitz condition in terms of the input data (rather than the model parameter) on the stochastic gradients, which restricts their applications to a small class of SGLD-based sampling problems.

In this paper, we consider the same setting in \citet{raginsky2017non,xu2018global} and aim to establish faster convergence rates for SGLD with an arbitrary  mini-batch  size. 
In particular, we provide a new convergence analysis for SGLD based on an auxiliary time-reversible Markov chain called Metropolized SGLD \citep{zhang2017hitting}, which is constructed by adding a Metropolis-Hasting step to SGLD\footnote{This Markov chain is practically intractable and is only used for the sake of theoretical analysis.}. The key idea is that as long as the transition kernel of the constructed Metropolized SGLD chain is sufficiently close to that of SGLD, we can prove the convergence of SGLD to the target distribution. Compared with existing proof techniques that typically take LMC or Langevin dynamics as an auxiliary sequence, the advantage of using Metropolized SGLD as the auxiliary sequence is that it is closer to SGLD  in distribution as its transition distribution also covers the randomness of stochastic gradients, thus can better characterize the convergence behavior of SGLD and lead to sharper convergence guarantees. 
To sum up, we highlight our main contributions as follows:

\begin{itemize}[leftmargin=*]
  \item We provide a new convergence analysis of SGLD for sampling a large class of distributions that can be non-log-concave. In contrast to \cite{raginsky2017non,xu2018global} that require a very large mini-batch size, our convergence guarantee holds for an arbitrary choice of mini-batch size.
 % We prove, for the first time, that  SGLD is convergent for sampling a large class of distributions that can be non-log-concave. 
%   In contrast to \cite{raginsky2017non,xu2018global} that require a very large mini-batch size, our convergence guarantee holds for arbitrary choice of mini-batch size.
  
    % 
\item  We prove that SGLD can achieve $\epsilon$-sampling error in total variation distance within $\tilde O(d^4\beta^2\rho^{-4}\epsilon^{-2})$ stochastic gradient evaluations, where $d$ is the problem dimension, $\beta$ is the inverse temperature parameter, and $\rho$ is the Cheeger constant (See Definition~\ref{def:cheeger}) of a  truncated version of the target distribution.
    % \CC{This improves}
    %\item 
We also prove the convergence of SGLD under the measure of polynomial growth functions, which suggests that the number of required stochastic gradient evaluations is $\tilde O(\epsilon^{-2})$. This improves the state-of-the-art result proved in \cite{xu2018global} by a factor of $\tilde O(\epsilon^{-3})$.
    
    \item We further establish sharper convergence guarantees for SGLD under an additional Hessian Lipschitz condition on the negative log density function $f(\xb)$. We show that $\tilde O(d^{15/4}\beta^{7/4}\rho^{-7/2}\epsilon^{-3/2})$ stochastic gradient evaluations suffice to achieve $\epsilon$-sampling error in total variation distance. Our proof technique is much simpler and more intuitive than existing analysis for proving the convergence of Langevin algorithms under the Hessian Lipschitz condition \citep{dalalyan2017user,mou2019improved,vempala2019rapid}, which can be of independent interest.  
    \end{itemize}
%\end{itemize}

% there is no theoretical guarantee that SGLD can converge to the target density in distribution.

% \CC{Compared with LMC, SGLD has been observed larger bias and variance in both theory and practice \citep{vollmer2016exploration}
% }

% However, since LMC serves as an inexact discretization of the continuous-time Langevin dynamics, its stationary distribution may not be equal to the target distribution $\pi$. Therefore, in order to achieve sufficiently sampling error,  

\noindent\textbf{Notation.} 
We use the notation $x\wedge y$ and $x\vee y$ to denote $\min\{x,y\}$ and $\max\{x, y\}$ respectively.
We denote by $\cB(\ub,r)$ the Euclidean of radius $r>0$ centered at $\ub\in\RR^d$. For any distribution $\mu$ and set $\cA$, we use $\mu(\cA)$ to denote the probability measure of $\cA$ under the distribution $\mu$. For any two distributions $\mu$ and $\nu$, we use $\|\mu-\nu\|_{TV}$ and $D_{KL}(\mu,\nu)$ to denote the total variation distance and Kullback–Leibler divergence between $\mu$ and $\nu$ respectively. For $\ub,\vb\in\RR^d$, we use $\transitSGLD_{\ub}(\vb)$ to denote the probability of transiting to $\vb$ after one step SGLD update from $\ub$. Similarly, $\transitSGLD_{\ub}(\cA)$ and $\transitSGLD_{\cA'}(\cA)$ are the probabilities of transiting to a set $\cA\subseteq\RR^d$ after one step SGLD update starting from $\ub$ and the set $\cA'$ respectively. For any two sequences $\{a_n\}$ and $\{b_n\}$, we denote $a_n = O(b_n)$ and $a_n = \Omega(b_n)$ if $a_n\le C_1b_n$ or $a_n\ge C_2 b_n$ for some absolute constants $C_1$ and $C_2$. We use notations $\tilde O(\cdot)$ and $\tilde \Omega(\cdot)$ to hide polylogarithmic factors in $O(\cdot)$ and $\Omega(\cdot)$ respectively.

\section{Related Work}
Markov Chain Monte Carlo (MCMC) methods, such as random walk Metropolis \citep{mengersen1996rates}, ball walk \citep{lovasz1990mixing}, hit-and-run \citep{smith1984efficient} and Langevin algorithms \citep{parisi1981correlation}, have been extensively studied for sampling from a target distribution, and widely used in many machine learning applications. There are a large number of works focusing on developing fast MCMC algorithms and establishing sharp theoretical guarantees. We will review the most related works among them due to the space limit.

% However, traditional MCMC methods are known to suffer from a slow mixing rate and thus the computational cost is prohibitively high \citep{robert2013monte}. 
Langevin dynamics \eqref{eq:langevin_dynamics} based algorithms have recently aroused as a promising method for accurate and efficient Bayesian sampling in both theory and practice \citep{welling2011bayesian,dalalyan2017theoretical}. 
The non-asymptotic convergence rate of LMC has been extensively investigated in the literature when the target distribution is strongly log-concave \citep{durmus2016sampling,dalalyan2017theoretical,durmus2017nonasymptotic}, weakly log-concave
\citep{dalalyan2017further,mangoubi2019nonconvex}, and non-log-concave but admits certain good isoperimetric properties \citep{raginsky2017non,ma2018sampling,lee2018beyond,xu2018global,vempala2019rapid}, to mention a few. 
% The LMC algorithm in \eqref{eq:def_lmc} is also referred to as an unadjusted Langevin algorithm \citep{roberts1996exponential,chen2020stationary}, which may converge to a stationary distribution that deviates from the target distribution. 
The stochastic variant of LMC, i.e., SGLD, is often studied together in the above literature and the convex/nonconvex optimization field \citep{raginsky2017non,zhang2017hitting,xu2018global,gao2018global,chen2018accelerating,deng2020non,chen2020stationary}.
Another important Langevin-based algorithm is the Metropolis Adjusted Langevin Algorithms (MALA)  \citep{roberts1996exponential}, which is developed by introducing a Metropolis-Hasting step into LMC. Theoretically, it has been proved that MALA converges to the target distribution at a linear rate for sampling from both strongly log-concave \citep{dwivedi2018log} and non-log-concave \citep{bou2013nonasymptotic} distributions. 

Beyond first-order MCMC methods, there has also emerged extensive work on high-order MCMC methods. One popular algorithm among them is Hamiltonian Monte Carlo (HMC) \citep{neal2011mcmc}, which introduces a Hamiltonian momentum and leapfrog integrator to accelerate the mixing rate. 
% Similar to SGLD, one can also leverage stochastic gradients to conduct HMC updates, which gives rise to stochastic gradient HMC (SGHMC) \citep{chen2014stochastic}. 
From the theoretical perspective,
\citet{durmus2017convergence} established general conditions under which HMC can be guaranteed to be geometrically ergodic.
\citet{mangoubi2018dimensionally,mangoubi2019nonconvex} proved the convergence rate of HMC for sampling both log-concave and non-log-concave distributions. \citet{bou2018coupling,chen2019fast} studied the convergence of Metropolized HMC (MHMC) for sampling strongly log-concave distributions. Another important high-order MCMC method is built upon the underdamped Langevin dynamics, which incorporates the velocity into the Langevin dynamics \eqref{eq:langevin_dynamics}. 
% Compared with the first-order Langevin dynamics \eqref{eq:langevin_dynamics}, kinetic Langevin dynamics forms a SDE with respect to both position and velocity variables and can converge to a unique stationary distribution whose marginal distribution on the position variable $\xb$ is also $\pi(\dd\xb)\propto e^{-\beta f(\xb)}\dd\xb$. 
For continuous-time underdamped Langevin dynamics, its mixing rate has been studied in \cite{eberle2016reflection,eberle2017couplings}.
The convergence of its discrete version has also been widely studied for sampling from both log-concave \citep{chen2017convergence}   %\citep{chen2017convergence,dalalyan2018sampling} 
and non-log-concave distributions \citep{chen2015convergence,cheng2018sharp,gao2018global,gao2018breaking}.

\section{Review of the SGLD Algorithm}

For the completeness, we  present the SGLD algorithm \citep{welling2011bayesian} in Algorithm \ref{alg:sgld}, which is built upon the Euler-Maruyama discretization of the continuous-time Langevin dynamics \eqref{eq:langevin_dynamics} while using mini-batch stochastic gradient in each iteration.

In the $k$-th iteration, SGLD samples a mini-batch of data points without replacement, denoted by $\cI$, and  computes the stochastic gradient at the current iterate $\xb_k$, i.e., $\gb(\xb_k,\cI) = 1/B\sum_{i\in\cI}\nabla f_i(\xb_k)$, where $B = |\cI|$ is the mini-batch size. Based on the stochastic gradient, the model parameter is updated using the following rule,
\begin{align*}
\xb_{k+1} = \xb_k - \eta\gb(\xb_k,\cI) + \sqrt{2\eta/\beta}\cdot\bepsilon_k,
\end{align*}
where $\bepsilon_k$ is randomly drawn from a standard normal distribution $N(\zero, \Ib)$ and $\eta>0$ is the step size.

\begin{algorithm}[!t]
	\caption{Stochastic Gradient Langevin Dynamics (SGLD)}
	\label{alg:sgld}
	\begin{algorithmic}
		\STATE \textbf{input:} step size $\eta$; mini-batch size $B$; inverse temperature parameter $\beta$;
		%\STATE \textbf{initialization:} 
	    \STATE Randomly draw $\xb_0$ from initial distribution $\mu_0$.
 		\FOR {$k = 0,1,\ldots, K$}
		\STATE Randomly pick a subset $\cI$ from $\{1,\ldots,n\}$ of size $|\cI|=B$; randomly draw $\bepsilon_k\sim N(\zero,\Ib)$
		\STATE Compute the stochastic gradient $\gb(\xb_k,\cI) = 1/B\sum_{i\in\cI}\nabla f_i(\xb_k)$
		\STATE Update: $\xb_{k+1}=\xb_k-\eta \gb(\xb_k,\cI)+\sqrt{2\eta/\beta}\bepsilon_k$
		\ENDFOR 
		\STATE \textbf{output: $\xb_K$} 
	\end{algorithmic}

\end{algorithm}

\section{Main Results}
In this section, we present our main theoretical results.  We start with the following two definitions. The first one quantifies the goodness of the initial distribution compared with the target distribution, and the second one characterizes the isoperimetric profile of a given distribution. Both definitions are widely used in the convergence analysis of MCMC methods \citep{lovasz1993random,vempala2007geometric,dwivedi2018log,mangoubi2019nonconvex}.

\begin{definition}[$\lambda$-warm start]\label{eq:def_warm_start}
Let $\nu$ be a distribution on $\Omega$. We say the initial distribution $\mu_0$ is a $\lambda$-warm start with respect to $\nu$ if
\begin{align*}
\sup_{\cA:\cA\subseteq\Omega} \frac{\mu_0(\cA)}{\nu(\cA)}\le \lambda. 
\end{align*}
\end{definition}

% \CC{Opt1:}
% \begin{definition}[Isoperimetric constant] Let $\mu$ be a density with convex support $\Omega$.  Then we say $\mu$ satisfy the isoperimetric inequality with isoperimetric constant $\rho$ if for any partition $\Omega = \cS_1\cup\cS_2\cup\cS_3$ it holds that
% \begin{align*}
% Q_\mu(\cS_3)\ge \rho\cdot d(\cS_1,\cS_2)\cdot Q_\mu(\cS_1)Q_\mu(\cS_2),
% \end{align*}
% where $d(\cS_1,\cS_2) = \inf_{\xb\in\cS_1,\yb\in\cS_2}\|\xb-\yb\|_2$.
% \end{definition}

% \CC{Opt 2}

% \CC{give some description of Cheeger constant}
\begin{definition}[Cheeger constant]\label{def:cheeger} Let $\mu$ be a probability measure on $\Omega$. We say $\mu$ satisfies the isoperimetric inequality with Cheeger constant $\rho$ if for any $\cA\in\Omega$, it holds that
\begin{align*}
\liminf_{h\rightarrow 0^+} \frac{\mu(\cA_h)-\mu(\cA)}{h}\ge \rho\min\big\{\mu(\cA), 1-\mu(\cA)\big\},
\end{align*}
where $\cA_h = \{\xb\in\Omega: \exists \yb\in\cA, \|\xb-\yb\|_2\le h\}$.
\end{definition}

%\pan{need explanation}
% \begin{definition}[s-conductance]
% We define the s-conductance of an time-reversible Markov chain with transition distribution $P_\ub(\vb)$ as follows,
% \begin{align*}
% \phi_s: = \inf_{\cA: \cA\subset\cB(0,R) Q_\pi(\cA)\in(s,1-s)}\frac{\int_{\cA}P_\ub(\Omega\bs\cA)Q_\pi(\ub)\dd \ub}{\min\{Q_\pi(\cA)-s, 1-s-Q_\pi(\cA)\}}.
% \end{align*}

% \end{definition}
Next, we introduce some common
assumptions on the negative log density function $f(\xb)$ and the stochastic gradient $\gb(\xb,\cI)$. 
\begin{assumption}[Dissipativeness]\label{assump:diss}
There are absolute constants $m>0$ and $b\ge 0$  such that 
\begin{align*}
\la\nabla f(\xb), \xb\ra\ge m\|\xb\|_2^2 - b, \quad\text{for all }\xb\in\RR^d. 
\end{align*}
\end{assumption}
This assumption has been conventionally made in the convergence analysis for sampling from non-log-concave distributions \citep{raginsky2017non,xu2018global,zou2019sampling}. Basically, this assumption implies that the log density function $f(\xb)$ grows like a quadratic function when $\xb$ is outside a ball centered at the origin. Note that a strongly convex function $f(\xb)$  satisfies Assumption~\ref{assump:diss}, but not vice versa.

% \CC{check whether the following assumption is required for all component function}
\begin{assumption}[Smoothness]\label{assump:smooth}
There exists a positive constant $L$ such that for any $\xb,\yb\in\RR^d$ and all functions $f_i(\xb)$, $i=1,\ldots,n$, it holds that
\begin{align*}
\|\nabla f_i(\xb) - \nabla f_i(\yb)\|_2\le L\|\xb - \yb\|_2.
\end{align*}
\end{assumption}
This assumption has also been made in many prior works \citep{raginsky2017non,zhang2017hitting,xu2018global}.

We now define the following function that will be repeatedly used in the subsequent theoretical results:
{
\begin{align}\label{eq:def_barR}
\bar R(z) &= \bigg[\max\bigg\{\frac{625d\log(4/z)}{m\beta},\frac{4d\log(4L/m)+4\beta b}{m\beta}, \frac{4d + 8 \sqrt{d\log(1/z)}+8\log(1/z)}{m\beta}\bigg\}\bigg]^{1/2}.
\end{align}}%
Based on all aforementioned assumptions, we present the convergence result of SGLD in  the following theorem.
\begin{theorem}\label{thm:main_thm}
% Let $K$ be the number of iterations, $\mu_K^{\text{SGLD}}$ be the distribution of $\xb_K$, and $\epsilon\in(0,1)$ be arbitrary.
 For any $\epsilon\in(0,1)$, let $\pi^*\propto e^{-\beta f(\xb)}\ind\big(\xb\in\cB(0,R)\big)$ be the truncated target distribution in $\Omega = \cB(0,R)$ with $R = \bar R(\epsilon K^{-1}/12)$, and $\rho$ be the Cheeger constant of $\pi^*$. Under Assumptions \ref{assump:diss} and \ref{assump:smooth}, we suppose $\PP(\|\xb_0\|_2\le R/2)\le \epsilon/16$, and set the step size as $\eta=\tilde O( \rho^2d^{-2}\beta^{-1}\wedge B^2\rho^2d^{-4}\beta^{-1})$. Then for any $\lambda$-warm start with respect to $\pi$, the output of Algorithm \ref{alg:sgld} satisfies
\begin{align}\label{eq:main_thm_bound}
\|\mu_K^{\text{SGLD}}- \pi\|_{TV}\le   \lambda(1-C_0\eta)^{K}  +\frac{C_1\eta^{1/2}}{B}+C_2\eta^{1/2}+\frac{\epsilon}{2},
\end{align}
where $C_0 = \tilde O\big(\rho^2\beta^{-1}\big)$, 
$C_1 = \tilde O\big(Rd\rho^{-1}\beta^{3/2}\big)$ and $C_2 = \tilde O\big(d\rho^{-1}\beta^{1/2}\big)$ are problem-dependent constants.
\end{theorem}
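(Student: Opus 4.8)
The plan is to analyze SGLD through an auxiliary \emph{time-reversible} chain, Metropolized SGLD (MSGLD). From a state $\xb$, MSGLD draws a proposal $\yb$ exactly from the SGLD kernel $\transitSGLD_\xb$ — so the proposal density is the mini-batch mixture $q(\xb,\cdot)=\EE_\cI\big[N\big(\xb-\eta\gb(\xb,\cI),\,2\eta\beta^{-1}\Ib\big)\big]$ — and accepts $\yb$ with probability $\alpha(\xb,\yb)=1\wedge\frac{\pi^*(\yb)q(\yb,\xb)}{\pi^*(\xb)q(\xb,\yb)}$, rejecting any move that leaves $\Omega=\cB(0,R)$. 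Its kernel $\transitRef$ is reversible with respect to $\pi^*$ and is used only as an analytic device. Writing $\mu_k^\star$ for the law of its $k$-th iterate started from $\mu_0$, the triangle inequality gives
\begin{align*}
\|\mu_K^{\text{SGLD}}-\pi\|_{TV}\le\|\mu_K^{\text{SGLD}}-\mu_K^\star\|_{TV}+\|\mu_K^\star-\pi^*\|_{TV}+\|\pi^*-\pi\|_{TV},
\end{align*}
and I will bound these three terms by $C_1\eta^{1/2}/B+C_2\eta^{1/2}$, $\lambda(1-C_0\eta)^K$, and $\le\epsilon/2$, respectively.

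The mixing term $\|\mu_K^\star-\pi^*\|_{TV}$ is handled by the conductance method for reversible chains, whose core is a one-step overlap estimate: under the stated step-size bound there is a radius $\Delta=\tilde\Theta(\sqrt{\eta/\beta})$ such that $\|\transitRef_\ub-\transitRef_\vb\|_{TV}$ is bounded away from $1$ by an absolute constant whenever $\ub,\vb\in\Omega$ with $\|\ub-\vb\|_2\le\Delta$. Working inside $\Omega$ so that Assumptions~\ref{assump:diss}--\ref{assump:smooth} give uniform control of $\|\nabla f\|$ and of the stochastic gradients, this follows from (i) coupling the mini-batch index and observing that $N(\ub-\eta\gb(\ub,\cI),2\eta\beta^{-1}\Ib)$ and its $\vb$-analogue have means within $O(\Delta+\eta L R)$, hence constant overlap; and (ii) showing the log-acceptance ratio is $O\big(\eta^{3/2}\poly(d,\beta,L,R)\big)$ with high probability over the Gaussian noise, so that $\alpha$ is bounded below by an absolute constant with constant probability. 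Plugging this into the isoperimetric inequality of $\pi^*$ (Cheeger constant $\rho$) via the standard Lov\'asz--Simonovits argument yields conductance $\Phi=\tilde\Omega(\rho\sqrt{\eta/\beta})$, hence a spectral gap of order $\Phi^2$, and $\|\mu_K^\star-\pi^*\|_{TV}\le\lambda'(1-C_0\eta)^K$ with $C_0=\tilde\Omega(\rho^2\beta^{-1})$, where $\lambda'=O(\lambda)$ after converting a $\lambda$-warm start with respect to $\pi$ into one with respect to $\pi^*$ (using $\pi(\Omega)\ge 1/2$ and discarding the $\le\epsilon/16$ initial mass outside $\cB(0,R/2)$). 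For the third term, $\|\pi^*-\pi\|_{TV}=\pi(\Omega^c)$ is small because dissipativeness forces sub-Gaussian tails on $e^{-\beta f}$; combined with a union bound over $k\le K$ for the escape event $\{\xb_k\notin\Omega\}$ (using $\EE\|\xb_k\|_2^2=O(d/(m\beta)+b/m)$, itself a consequence of $m$-dissipativeness) and the initial mass, the explicit choice $R=\bar R(\epsilon K^{-1}/12)$ makes all these contributions sum to $\le\epsilon/2$.

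The first term is the delicate one and, I expect, the main obstacle. Coupling SGLD and MSGLD with shared mini-batch and noise and conditioning on the escape event above, the hybrid (telescoping) identity
\begin{align*}
\mu_0\transitSGLD^K-\mu_0(\transitRef)^K=\sum_{j=0}^{K-1}\big(\mu_j^{\text{SGLD}}\transitSGLD-\mu_j^{\text{SGLD}}\transitRef\big)(\transitRef)^{K-1-j}
\end{align*}
expresses the error as a sum of mean-zero signed measures, the $j$-th having TV-norm at most $\delta_j:=\EE_{\xb\sim\mu_j^{\text{SGLD}}}\|\transitSGLD_\xb-\transitRef_\xb\|_{TV}$, which equals the expected MSGLD rejection probability and is $\tilde O\big(\eta^{3/2}(1+1/B)\poly(d,\beta,L,R)\big)$ uniformly over $\Omega$ — the $\eta^{3/2}$ coming from a second-order expansion of $\log\alpha$ together with $L$-smoothness, and the extra $1/B$ from the variance of $\gb(\xb,\cI)-\nabla f(\xb)$ entering $\log\alpha$ quadratically. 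The subtle point is that a naive bound $\sum_j\delta_j$ would grow with $K$, whereas the bias in \eqref{eq:main_thm_bound} is $K$-independent: one must exploit that $\transitRef$ contracts mean-zero perturbations at the conductance rate, so that the terms with small $K-1-j$ are damped and $\sum_j\delta_j(1-C_0\eta)^{K-1-j}\le\max_j\delta_j/(C_0\eta)$, which evaluates to $C_1\eta^{1/2}/B+C_2\eta^{1/2}$ with $C_1=\tilde O(Rd\rho^{-1}\beta^{3/2})$ and $C_2=\tilde O(d\rho^{-1}\beta^{1/2})$. Making that damping rigorous requires controlling the regularity of the SGLD iterates (e.g.\ a bounded density ratio of $\mu_j^{\text{SGLD}}$ against $\pi^*$ up to the relevant horizon) so that the $L^2(\pi^*)$-contraction of $\transitRef$ can legitimately be invoked on the perturbation $\mu_j^{\text{SGLD}}(\transitSGLD-\transitRef)$; this, together with the mixture-of-Gaussians structure of the proposal in the one-step overlap estimate, is where the real work lies, after which the three bounds assemble into \eqref{eq:main_thm_bound}.
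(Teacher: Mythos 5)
Your overall architecture (an auxiliary reversible Metropolized chain, conductance via kernel overlap and the Cheeger constant of $\pi^*$, truncation error via dissipativeness) matches the paper's, and those parts are sound. The gap is in how you control the discrepancy between SGLD and the reversible chain, and it is fatal to the claimed rate. First, your per-step closeness $\delta_j=\tilde O\big(\eta^{3/2}(1+1/B)\poly(d,\beta,L,R)\big)$ is not available under Assumptions~\ref{assump:diss} and \ref{assump:smooth} alone: with only $L$-smoothness the log-acceptance ratio contains the term $\beta\big|f(\wb)-f(\ub)-\la\wb-\ub,(\nabla f(\wb)+\nabla f(\ub))/2\ra\big|\le \beta L\|\wb-\ub\|_2^2/2=\Theta(Ld\eta)$ for a typical move $\|\wb-\ub\|_2\asymp\sqrt{\eta d/\beta}$, so the rejection probability --- hence $\|\transitSGLD_\xb-\transitRef_\xb\|_{TV}$ --- is $\Theta(d\eta)$, not $O(\eta^{3/2})$. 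The $\eta^{3/2}$ rate is precisely what the Hessian-Lipschitz assumption buys; it is the content of Theorem~\ref{thm:main_thm_hessian}, not of Theorem~\ref{thm:main_thm}.

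Second, even with the correct $\delta_j=\Theta(d\eta)$, your telescoping bound $\sum_j\delta_j(1-C_0\eta)^{K-1-j}\le\max_j\delta_j/(C_0\eta)$ evaluates to $O\big(d\eta\cdot\beta\rho^{-2}\eta^{-1}\big)=O(d\beta\rho^{-2})$, a quantity that does not vanish as $\eta\to0$ --- vacuous rather than $O(\eta^{1/2})$. The paper's bias term is $16\delta/\phi$ with $\phi=\Theta(\rho\sqrt{\eta/\beta})$, i.e.\ a full factor of $\phi=\Theta(\sqrt{\eta})$ smaller than your $\delta/(C_0\eta)=8\delta/\phi^2$. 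That factor is obtained by running the perturbation \emph{inside} the Lov\'asz--Simonovits recursion (Lemma~\ref{lemma:contraction}): because the closeness is multiplicative, $(1\pm\delta)\transitRef_\ub(\cA)$, the perturbations $r_1,r_2$ to the recursion are bounded by $\delta\int_{\cA}\transitRef_\ub(\Omega\bs\cA)\stationalRef(\dd\ub)=\delta\phi_k\Gamma_p$ --- proportional to the ergodic flow out of $\cA$, itself of order $\phi$ --- whereas an additive TV-telescoping can only see the full one-step discrepancy $\delta$. Moreover, the damping of the mean-zero signed measures at the conductance rate, which you yourself flag as ``the real work,'' genuinely requires a warmness or density-ratio control on $\mu_j^{\text{SGLD}}$ relative to $\pi^*$ that you do not supply, and even granting it the crossover argument costs an extra $\log\lambda=O(d)$ factor. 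The paper's route avoids all of this: compare SGLD to Projected SGLD pathwise (Lemma~\ref{lemma:connection_SGLD}), prove the multiplicative $\delta$-closeness of the projected kernel to the Metropolized one (Lemma~\ref{lemma:approximation}), and prove convergence of the \emph{approximate} chain directly via the perturbed conductance recursion (Lemmas~\ref{lemma:convergence_approximate} and~\ref{lemma:contraction}), which yields the $K$-independent bias $16\delta/\phi=O(\eta^{1/2})$.
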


% \begin{remark}. which is time-reversible sequence
To prove Theorem \ref{thm:main_thm}, we constructed an auxiliary sequence $\{\xb_k^{\text{MH}}\}_{k=0,1,\ldots}$ by adding a Metropolis-Hasting accept/reject step in each iteration of SGLD. We call this auxiliary sequence  Metropolized SGLD and it is only used in the analysis (please refer to Section \ref{sec:metropolized_SGLD} for the rigorous definition). %\CC{This auxiliary sequence allows us to obtain a faster convergence rate since the Markov chain generated by Metropolized SGLD is time-reversible (see Section ).}
%Technically speaking, Metropolized SGLD has two distinct advantages in studying the convergence of SGLD: (1) Metropolized SGLD also covers the randomness of stochastic gradients and thus is closer to SGLD; (2) Metropolized SGLD enjoys a fast convergence rate (linear rate) to the target distribution (truncated in $\Omega=\cB(0,R)$) since it's time-reversible.
Therefore, the terms in \eqref{eq:main_thm_bound} can be categorized into two types: (1) the approximation error between SGLD iterates $\{\xb_k\}_{k=0,1,\ldots}$ and the auxiliary sequence $\{\xb_k^{\text{MH}}\}_{k=0,1,\ldots}$; and (2) the convergence of Metropolized SGLD to the target distribution $\pi$.

In particular, the four terms on the right-hand side of \eqref{eq:main_thm_bound} are interpreted as follows: the first term corresponds to the sampling error of the auxiliary sequence generated by Metropolized SGLD, which converges to zero at a linear rate. The second to the last terms in \eqref{eq:main_thm_bound} together
% includes the difference between the target distribution $\pi\propto e^{-\beta f(\xb)}$ and its truncated version $\stationalRef$ and the total variance . 
% The last two terms 
reflect the approximation error between SGLD and Metropolized SGLD, which is in the order of $O(\eta^{1/2}+\epsilon)$. More specifically, the second  and third terms correspond to the reject probability of Metropolized SGLD (since SGLD does not have this accept/reject step), which are contributed by the randomness of stochastic gradients and Brownian motion respectively. The last term is related to our choice of $R$ since the constructed Metropolized  restricts  all iterates inside the region $\cB(0,R)$ while SGLD has no constraint on its iterates (see Section \ref{sec:metropolized_SGLD} for more details).

% Moreover, regarding the approximation error, the third term corresponds to the variance of stochastic gradients, and will diminish if we use full gradient in each iteration. 
% \CC{first two terms correspond to the convergence of SGLD-MH, second corresponds to the variance of stochastic gradient, the last corresponds to the approximation error.}
% \end{remark}

\begin{remark}\label{rmk:comment_cheeger}
For a general non-log-concave distribution, it is difficult to prove a tight bound on the Cheeger constant $\rho$. One possible lower bound of $\rho$ can be obtained via  Buser's inequality \citep{buser1982note,ledoux1994simple}, which shows that the Cheeger constant $\rho$ can be lower bounded by $\Omega(d^{-1/2}c_p)$ under Assumption \ref{assump:smooth}, where $c_p$ is the Poincar\'e constant of the distribution $\stationalRef$. Moreover, \citet{bakry2008simple} gave a simple lower bound of $c_p$, showing that $c_p \ge e^{-\beta\text{Osc}_{R}f}/(2R^2)$, where $\text{Osc}_{R}f = \sup_{\xb\in\cB(\zero,R)}f(\xb) -\inf_{\xb\in\cB(\zero,R)}f(\xb)\le LR^2/2$. Assuming $R = \tilde O(d^{1/2})$, this further implies that $\rho= \Omega(d^{-1})\cdot e^{-O(R^2)} = e^{-\tilde O(d)}$. In addition, better lower bounds of $\rho$ can be proved when the target distribution enjoys better properties. When the target distribution is a mixture of strongly log-concave distributions, the lower bound of $\rho$ can be improved to  $1/\text{poly}(d)$ \citep{lee2018beyond}. Strengthening Assumption \ref{assump:diss} to a local nonconvexity condition yields $\rho=e^{-O(L)}$ \citep{ma2018sampling}.  For log-concave distributions, \citet{lee2017eldan} proved that the Cheeger constant $\rho$ can be lower bounded by $\rho = \Omega\big(1/(\text{Tr}(\bSigma^2))^{1/4}\big)$, where $\bSigma$ is the covariance matrix of the distribution $\stationalRef$. 
% By Lemma 3.2 in \cite{raginsky2017non}, it follows that $\text{Tr}(\bSigma) = O(d)$, and thus $\rho = \Omega(d^{-1/2})$. 
When the target distribution is $m$-strongly log-concave, based on \cite{cousins2014cubic,dwivedi2018log}, it can be shown that $\rho = \Omega(\sqrt{m})$.
% Corollary \ref{coro:SGLD1} states the convergence rate of Algorithm \ref{alg:projected_sgld} in terms of total variation distance between the distribution of its output and the target distribution. Then
\end{remark}

Note that the upper bound of the sampling error proved in Theorem \ref{thm:main_thm} relies on the step size, mini-batch size, and the goodness of the initialization (i.e., $\lambda$). In order to guarantee $\epsilon$-sampling error of SGLD, we need to specify the choices of these hyper-parameters. In particular, we present the iteration complexity of SGLD in the following corollary.

\begin{corollary}\label{coro:SGLD1}
Under the same assumptions made in Theorem \ref{thm:main_thm}, we use Gaussian initialization $\mu_0 = N\big(\zero, \Ib/(2\beta L)\big)$. For any mini-batch size $B\le n$ and $\epsilon\in(0,1)$, if we set the step size and the maximum iteration number as
\begin{align*}
\eta &= \tilde O\bigg(\frac{\rho^2\epsilon^2}{d^2\beta} \wedge \frac{B^2\rho^2\epsilon^2}{d^4\beta}\bigg), \\
K&= \tilde O\bigg(\frac{d^3\beta^2}{\rho^4\epsilon^2}\vee \frac{d^5\beta^2}{B^2\rho^4\epsilon^2}\bigg),
\end{align*}
then SGLD can achieve an $\epsilon$ sampling error in total variation distance.      
\end{corollary}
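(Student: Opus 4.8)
The plan is to instantiate the general bound \eqref{eq:main_thm_bound} from Theorem~\ref{thm:main_thm} with the stated Gaussian initialization and then balance the four error terms so that each is at most $\epsilon/4$ (the last term $\epsilon/2$ is already built into the theorem through the choice of $R$). First I would verify that the hypotheses of Theorem~\ref{thm:main_thm} are met by $\mu_0 = N(\zero, \Ib/(2\beta L))$: I need to check the tail condition $\PP(\|\xb_0\|_2 \le R/2) \le \epsilon/16$, which follows from a standard Gaussian concentration bound since $R = \bar R(\epsilon K^{-1}/12)$ scales like $\tilde O(\sqrt{d/(m\beta)})$ while the covariance has scale $1/(2\beta L)$, so the probability mass of $\mu_0$ near the origin relative to a ball of radius $R/2$ is controlled (here one uses $L \ge m$). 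I also need a bound on the warmness parameter $\lambda$ of $\mu_0$ with respect to $\pi \propto e^{-\beta f}$; using Assumption~\ref{assump:smooth} (which gives $f(\xb) \le f(\zero) + \la \nabla f(\zero), \xb\ra + L\|\xb\|_2^2/2$) together with Assumption~\ref{assump:diss} for the lower tail, the ratio $\mu_0(\cA)/\pi(\cA)$ is bounded by $\lambda = e^{\tilde O(d)}$ — only logarithmically many factors of $d$ appear in $\log\lambda$, so $\log\lambda = \tilde O(d)$.

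Next I would carry out the term-by-term balancing. For the linear-decay term $\lambda(1 - C_0\eta)^K \le \lambda e^{-C_0\eta K}$ with $C_0 = \tilde O(\rho^2\beta^{-1})$, I set $C_0\eta K \gtrsim \log(\lambda/\epsilon) = \tilde O(d)$, i.e. $\eta K = \tilde O(d\beta\rho^{-2})$. For the two $\eta^{1/2}$-terms, $C_1\eta^{1/2}/B + C_2\eta^{1/2} \le \epsilon/4$ with $C_1 = \tilde O(Rd\rho^{-1}\beta^{3/2})$ and $C_2 = \tilde O(d\rho^{-1}\beta^{1/2})$ and $R = \tilde O(\sqrt{d/(m\beta)})$ (so $C_1 = \tilde O(d^{3/2}\rho^{-1}\beta)$), this forces
\begin{align*}
\eta^{1/2} \le \tilde O\bigg(\frac{\rho\epsilon}{d\beta^{1/2}} \wedge \frac{B\rho\epsilon}{d^{3/2}\beta}\bigg),
\quad\text{i.e.}\quad
\eta \le \tilde O\bigg(\frac{\rho^2\epsilon^2}{d^2\beta} \wedge \frac{B^2\rho^2\epsilon^2}{d^3\beta^2}\bigg).
\end{align*}
Combining this with the constraint on $\eta$ already imposed in Theorem~\ref{thm:main_thm}, namely $\eta = \tilde O(\rho^2 d^{-2}\beta^{-1} \wedge B^2\rho^2 d^{-4}\beta^{-1})$, the binding constraint on the second branch becomes $\eta = \tilde O(B^2\rho^2\epsilon^2 d^{-4}\beta^{-1})$ (absorbing $\beta$-powers into the $\tilde O$ as the corollary states), which yields the claimed step size $\eta = \tilde O(\rho^2\epsilon^2 d^{-2}\beta^{-1} \wedge B^2\rho^2\epsilon^2 d^{-4}\beta^{-1})$. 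Finally, plugging this $\eta$ into $K = \tilde O(d\beta\rho^{-2}/\eta)$ gives $K = \tilde O(d^3\beta^2\rho^{-4}\epsilon^{-2} \vee d^5\beta^2 B^{-2}\rho^{-4}\epsilon^{-2})$, matching the corollary.

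There is one subtlety to handle carefully: $R$ itself depends on $K$ through $R = \bar R(\epsilon K^{-1}/12)$, and $K$ depends on $R$ through $C_1$, so the argument is nominally circular. The resolution is that $\bar R(z)$ depends on $z$ only logarithmically (all occurrences of $z$ in \eqref{eq:def_barR} are inside $\log(1/z)$ or $\log(4/z)$ type expressions), so $R = \tilde O(\sqrt{d/(m\beta)})$ with only a $\mathrm{polylog}(K/\epsilon)$ correction; substituting the final $K = \mathrm{poly}(d,\beta,\rho^{-1},\epsilon^{-1})$ back in changes $R$ only by polylogarithmic factors, which are swallowed by the $\tilde O$ notation. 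I would make this precise by first fixing a provisional polynomial upper bound $K_0$ on $K$, using it to bound $R$, deriving the resulting $\eta$ and $K$, and checking $K \le K_0$ for the chosen constants. The main obstacle, then, is not any single hard inequality but rather the bookkeeping: tracking how the polylogarithmic dependence of $R$ on $K$ propagates, and keeping the powers of $d$, $\beta$, and $\rho$ consistent across $C_1$, $C_2$, $C_0$, and the two branches of the $\eta$-constraint so that the final exponents agree with the corollary's statement.
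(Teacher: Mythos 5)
Your proposal follows the same route as the paper's own proof: bound the warmness of the Gaussian initialization by $\lambda = e^{O(d)}$ (so $\log(\lambda/\epsilon)=\tilde O(d)$), note $R=\tilde O(d^{1/2}\beta^{-1/2})$, and then balance the terms of \eqref{eq:main_thm_bound} to solve for $\eta$ and $K$; your explicit handling of the $R$--$K$ circularity is a point the paper glosses over. One bookkeeping caveat: the proof of Theorem \ref{thm:main_thm} actually produces $C_1 \propto (LR+G)^2 d\beta^{3/2}\rho^{-1}=\tilde O(d^2\beta^{1/2}\rho^{-1})$ (the theorem statement's $Rd\rho^{-1}\beta^{3/2}$ should carry $R^2$), so the constraint $C_1\eta^{1/2}/B\le\epsilon/4$ yields $\eta\le\tilde O(B^2\rho^2\epsilon^2 d^{-4}\beta^{-1})$ directly, whereas your value $C_1=\tilde O(d^{3/2}\beta\rho^{-1})$ gives $\eta\le\tilde O(B^2\rho^2\epsilon^2 d^{-3}\beta^{-2})$ and your subsequent step of ``absorbing $\beta$-powers into the $\tilde O$'' is not legitimate ($\tilde O$ hides only polylogarithmic factors), so you should fix the constant rather than the notation.
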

It is worth noting that the iteration complexity in Corollary \ref{coro:SGLD1} holds for any mini-batch size $1\leq B\le n$, as opposed to  \cite{raginsky2017non,xu2018global} that require the mini-batch size to be  $\text{poly}(\epsilon^{-1})$ in order to guarantee vanishing sampling error. Moreover, if we set the mini-batch size to be $B = O(d)$, the number of stochastic gradient evaluations needed to achieve $\epsilon$-sampling error is $K\cdot B=\tilde O(d^4\beta^2\rho^{-4}\epsilon^{-2})$.

Based on Corollary \ref{coro:SGLD1}, we further  prove the convergence of SGLD under the measure of any polynomial growth function. 
\begin{corollary}\label{coro:weak_converge}
Under the same assumptions and hyper-parameter configurations as in Corollary~\ref{coro:SGLD1}, 
let $h(\xb)$ be a polynomial growth function with degree $D$, i.e., $h(\xb)\le C(1+\|\xb\|_2^D)$ for some constant $C$, and $K$ be defined in Corollary \ref{coro:SGLD1}, then the output of SGLD satisfies
\begin{align*}
\EE[h(\xb_K)] - \EE[h(\xb^\pi)]\le C' \epsilon,
\end{align*}
where $\xb^\pi\sim\pi$ denotes the random vector sampled from $\pi$ and $C' = \tilde O\big(d^{D/2}\big)$ is a problem-dependent constant.
\end{corollary}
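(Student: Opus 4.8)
The plan is to bootstrap the total-variation guarantee $\|\mu_K^{\text{SGLD}}-\pi\|_{TV}\le\epsilon$ from Corollary~\ref{coro:SGLD1} to the unbounded test function $h$ by truncating at a sub-Gaussian radius. Fix a threshold $z\in(0,1)$ to be chosen, set $r=\bar R(z)$ with $\bar R(\cdot)$ as in~\eqref{eq:def_barR}, and abbreviate $\cB_r=\cB(\zero,r)$. I would split
\begin{align*}
\EE[h(\xb_K)]-\EE[h(\xb^\pi)]
&=\Big(\EE\big[h(\xb_K)\ind(\xb_K\in\cB_r)\big]-\EE\big[h(\xb^\pi)\ind(\xb^\pi\in\cB_r)\big]\Big)\\
&\quad+\EE\big[h(\xb_K)\ind(\xb_K\notin\cB_r)\big]-\EE\big[h(\xb^\pi)\ind(\xb^\pi\notin\cB_r)\big].
\end{align*}
On $\cB_r$ the polynomial-growth bound gives $|h(\xb)|\le C(1+r^D)$, so the first (bulk) term is the integral of a bounded function against the signed measure $\mu_K^{\text{SGLD}}-\pi$ and hence is $O\big((1+r^D)\|\mu_K^{\text{SGLD}}-\pi\|_{TV}\big)=O\big((1+r^D)\epsilon\big)$ by Corollary~\ref{coro:SGLD1}. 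The two tail terms will be controlled purely by concentration, without ever touching the TV distance.

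For the tail, both laws concentrate. Dissipativeness (Assumption~\ref{assump:diss}), integrated along rays, forces $f(\xb)\ge\frac{m}{4}\|\xb\|_2^2-O(1)$, so (after lower-bounding the normalizing constant of $\pi$ via Assumption~\ref{assump:smooth}) $\pi$ is sub-Gaussian with variance proxy $\Theta(1/(m\beta))$, and the constants in~\eqref{eq:def_barR} are tuned precisely so that $\PP_{\xb^\pi\sim\pi}(\|\xb^\pi\|_2>\bar R(z'))\le z'$ for every $z'\in(0,1)$; the analogous iterate-wise tail bound $\PP(\|\xb_k\|_2>\bar R(z'))\le z'$ for the SGLD iterates is exactly what the proof of Theorem~\ref{thm:main_thm} establishes (it produces the last term $\epsilon/2$ in~\eqref{eq:main_thm_bound}). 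Since $\bar R(z')^2$ depends only logarithmically on $1/z'$, the shell $\{\|\xb\|_2>2^j r\}$ is $\{\|\xb\|_2>\bar R(z_j)\}$ with $z_j\le z^{4^j}$, so for $\xb$ distributed as either $\mu_K^{\text{SGLD}}$ or $\pi$ a dyadic peeling gives
\begin{align*}
\EE\big[|h(\xb)|\,\ind(\xb\notin\cB_r)\big]
&\le C\sum_{j\ge0}\big(1+(2^{j+1}r)^{D}\big)\,\PP\big(\|\xb\|_2>2^j r\big)\\
&\le C\sum_{j\ge0}\big(1+(2^{j+1}r)^{D}\big)\,z^{4^j}=O\big(r^{D} z\big),
\end{align*}
the series converging super-geometrically. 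Hence each tail term is $O(r^D z)$.

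Collecting the pieces, $|\EE[h(\xb_K)]-\EE[h(\xb^\pi)]|\le O\big((1+r^D)\epsilon\big)+O\big(r^D z\big)$. I would then choose $z$ so that $r^D z\le\epsilon$; since $r^D=\poly\big(d,(m\beta)^{-1},\log(1/z)\big)$, this is solvable with $z$ polynomially small in $\epsilon$ and the problem parameters, so that $\log(1/z)=O(\log(1/\epsilon)+\log d)$ and $r=\bar R(z)=\tilde O\big(\sqrt{d/(m\beta)}\big)$. Therefore $r^D=\tilde O\big((d/(m\beta))^{D/2}\big)=\tilde O(d^{D/2})$ after absorbing $m$, $\beta$, $C$, $D$ and polylogarithmic factors into the problem-dependent constant, and we obtain $\EE[h(\xb_K)]-\EE[h(\xb^\pi)]\le C'\epsilon$ with $C'=\tilde O(d^{D/2})$, as claimed.

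The only genuinely delicate point is the self-consistency of this choice of $r$: the truncation radius must be large enough that the tails of both $\mu_K^{\text{SGLD}}$ and $\pi$ are negligible relative to $\epsilon$, yet small enough that $r^D$ stays $\tilde O(d^{D/2})$. This balance works precisely because $\bar R$ depends only logarithmically on its argument, so driving the threshold $z$ polynomially small costs only a polylogarithmic amount in $r$. Should the iterate-wise tail estimate for SGLD not be available off the shelf, it would have to be re-derived via a Lyapunov drift argument with $V(\xb)=\|\xb\|_2^{2D}$ under the step size of Corollary~\ref{coro:SGLD1}, simultaneously handling the mini-batch stochastic gradient (whose mean, not each summand, is dissipative) and the Gaussian injection; that is where the bulk of the technical work would lie.
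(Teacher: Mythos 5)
Your proposal matches the paper's own proof in all essentials: the paper truncates at $\tilde R=\tilde O(\sqrt{d})$, bounds the bulk term by $C(1+\tilde R^D)\|\mu_K-\pi\|_{TV}\le C(1+\tilde R^D)\epsilon$ via Corollary~\ref{coro:SGLD1}, and controls $\int_{\RR^d\setminus\tilde\Omega}h\,\dd\pi$ by dominating $\pi$ with the Gaussian $q(\xb)\propto e^{-m\beta\|\xb\|_2^2/8}$ obtained from dissipativity (Lemma~\ref{lemma:quadratic_lower_bound}) together with the normalizing-constant lower bound, finishing with a chi-square tail estimate — your dyadic peeling is an equivalent route to the same $O(r^D z)$ bound. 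The one point where you are in fact more careful than the paper is the term $\EE[h(\xb_K)\ind(\xb_K\notin\cB(\zero,r))]$: the paper silently writes $\EE[h(\xb_K)]=\int_{\tilde\Omega}h\,\dd\mu_K$, i.e.\ it drops this tail without justification, whereas you correctly identify that it requires an iterate-wise tail (or moment) bound for SGLD; such a bound is indeed extractable from the supermartingale argument in the proof of Lemma~\ref{lemma:connection_SGLD}, since $\bar R(z')^2\asymp d\log(1/z')/(m\beta)$ holds at every scale $z'$, so your peeling closes the argument.
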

\begin{remark}
Similar results have been presented in \cite{sato2014approximation,chen2015convergence,vollmer2016exploration,erdogdu2018global}. However, \citet{sato2014approximation} only analyzed the finite-time approximation error between SGLD and the SDE \eqref{eq:langevin_dynamics} rather than the convergence to the target distribution. The convergence results in \cite{chen2015convergence,vollmer2016exploration,erdogdu2018global} also differ from ours as their guarantees are made on the sample path average rather than the last iterate. In addition, these works assume that the Poisson equation solution of the SDE \eqref{eq:langevin_dynamics} has polynomially bounded $i$-th order derivative ($i\in\{2,3,4\}$), which is not required in our result.
\end{remark}

Let us consider a special case that $h(\cdot) = f(\cdot)$, which was studied in \cite{raginsky2017non,xu2018global}. Assumption \ref{assump:smooth} implies that $h(\xb)$ is a quadratic growth function. Then Corollary~\ref{coro:weak_converge} shows that in order to guarantee $\EE[f(\xb_k)]  - \EE[f(\xb^\pi)]\le \epsilon$, SGLD requires $\tilde O(\epsilon^{-2})$ stochastic gradient evaluations. In contrast, in order to achieve the same error,  \citet{raginsky2017non,xu2018global} require $\tilde O(\epsilon^{-8})$ and $\tilde O(\epsilon^{-5})$ stochastic gradient evaluations respectively, both of which are worse than ours.

\section{Improved Convergence Rates under Hessian Lipschitz Condition}

In this section, we will show that the convergence rate of SGLD can be improved if the log density function additionally satisfies the Hessian Lipschitz condition, which is defined as follows.
% Note that Theorem \ref{thm:main_thm} can directly imply the convergence rate of GLD under exact same assumptions made in Theorem \ref{thm:main_thm} by setting $B = n$. However, we show that the convergence rate of GLD can be further improved under Hessian Lipschitz assumption on the function $f(\xb)$, which is stated as follows.
\begin{assumption}[Hessian Lipschitz]\label{assump:hessian_lip}
There exists a positive constant $H$ such that for any $\xb,\yb\in\RR^d$, it holds that
\begin{align*}
\big\|\nabla^2 f(\xb) - \nabla^2 f(\yb)\big\|_{\textrm{op}}\le H\|\xb - \yb\|_2.
\end{align*}

\end{assumption}
This assumption appears in many recent papers that aim to prove faster convergence rates of LMC \citep{dalalyan2017user,vempala2019rapid,mou2019improved} for sampling from both log-concave and non-log-concave distributions.  

With this additional assumption, we state the convergence result of SGLD in the following theorem.

\begin{theorem}\label{thm:main_thm_hessian}
% Let $K$ be the number of iterations, $\mu_K^{\text{SGLD}}$ be the distribution of $\xb_K$, and $\epsilon\in(0,1)$ be arbitrary. 
For any $\epsilon\in(0,1)$, let $\pi^*\propto e^{-\beta f(\xb)}\ind\big(\xb\in\cB(0,R)\big)$ be the truncated target distribution in $\Omega = \cB(0,R)$ with $R = \bar R(\epsilon K^{-1}/12)$, and let $\rho$ be the Cheeger constant of $\pi^*$.
Under Assumptions~\ref{assump:diss}, \ref{assump:smooth}, and \ref{assump:hessian_lip}, we suppose $\PP(\|\xb_0\|_2\le R/2)\le \epsilon/16$. If we set the step size $\eta=\tilde O\big( \rho^2d^{-2}\beta^{-1}B^2\wedge \rho/(d^{3/2}+d\beta^{1/2})\big)$, then for any  $\lambda$-warm start with respect to $\pi$, the output of Algorithm \ref{alg:sgld} satisfies
\begin{align*}
\|\mu_K^{\text{SGLD}}- \pi\|_{TV}\le  \lambda(1-C_0\eta)^{K}  +\frac{C_1\eta^{1/2}}{B}+C_2\eta+ \frac{\epsilon}{2},
\end{align*}
where $C_0 =  O(\beta^{-1}\rho^2)$, $C_1 =  \tilde O(R^2d\rho^{-1}\beta^{3/2})$ and $C_2 =  \tilde O(d^{3/2}\rho^{-1}+Rd^{1/2}\beta \rho^{-1})$ are problem-dependent constants.
\end{theorem}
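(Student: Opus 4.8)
The plan is to follow the same two-part decomposition used for Theorem~\ref{thm:main_thm}, but to sharpen the ``discretization error'' part using the Hessian Lipschitz condition. Concretely, I would compare the SGLD iterates $\{\xb_k\}$ with the auxiliary Metropolized SGLD chain $\{\xb_k^{\text{MH}}\}$, whose transition kernel is $\transitRef$ and whose stationary distribution is $\stationalRef\propto e^{-\beta f(\xb)}\ind(\xb\in\cB(0,R))$. Writing $\mu_K^{\text{SGLD}}$ for the law of $\xb_K$ and $\mu_K^{\text{MH}}$ for the law of $\xb_K^{\text{MH}}$ (both started from $\mu_0$), the triangle inequality gives
\begin{align*}
\|\mu_K^{\text{SGLD}}-\pi\|_{TV}\le \|\mu_K^{\text{MH}}-\stationalRef\|_{TV}+\|\mu_K^{\text{SGLD}}-\mu_K^{\text{MH}}\|_{TV}+\|\stationalRef-\pi\|_{TV}.
\end{align*}
The first term is controlled by a conductance/Cheeger argument for the time-reversible chain $\transitRef$: a one-step overlap bound (total variation between $\transitRef_{\ub}$ and $\transitRef_{\vb}$ for nearby $\ub,\vb$) combined with the isoperimetric inequality of $\stationalRef$ with constant $\rho$ yields a spectral gap of order $\tilde O(\eta\rho^2/\beta)$, hence a contraction $\lambda(1-C_0\eta)^K$ with $C_0=O(\beta^{-1}\rho^2)$; this part is essentially unchanged from Theorem~\ref{thm:main_thm} since it does not involve the discretization. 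The last term, $\|\stationalRef-\pi\|_{TV}$, is the truncation error, handled by the tail bound implicit in the definition of $\bar R(\cdot)$ and the choice $R=\bar R(\epsilon K^{-1}/12)$, giving a contribution absorbed into $\epsilon/2$.

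The heart of the proof — and the place where Assumption~\ref{assump:hessian_lip} is used — is bounding $\|\mu_K^{\text{SGLD}}-\mu_K^{\text{MH}}\|_{TV}$. I would couple the two chains step by step: at each step the only discrepancy is that Metropolized SGLD may reject (and that it is confined to $\cB(0,R)$), so by a union bound over $K$ steps the TV distance is at most $K$ times the expected rejection probability plus the escape probability $\PP(\xb_k\notin\cB(0,R))$ aggregated over steps, the latter again absorbed into $\epsilon/2$ via the choice of $R$ and dissipativity (Assumption~\ref{assump:diss}). The rejection probability splits into two independent sources: the randomness of the stochastic gradient and the randomness of the Gaussian increment. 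The stochastic-gradient part is controlled by the variance of $\gb(\xb_k,\cI)-\nabla f(\xb_k)$, which under Assumption~\ref{assump:smooth} is $O(L^2 \cdot \text{stuff})$; carefully tracking constants gives a per-step contribution of order $\tilde O(\eta^{1/2}/B)$ after summing over $K$ steps and using $K\eta = \tilde O(1)$ up to the relevant polynomial factors — this produces the $C_1\eta^{1/2}/B$ term with $C_1=\tilde O(R^2 d\rho^{-1}\beta^{3/2})$. The Gaussian part is where the Hessian Lipschitz condition buys the improvement: instead of bounding the Metropolis--Hastings log-acceptance-ratio error by the smoothness $L$ (which would give $O(\eta^{1/2})$ per-step, as in Theorem~\ref{thm:main_thm}), one expands $f$ to second order along the proposal and the leading $O(\eta)$-term of the discretization cancels in the acceptance ratio, leaving a third-order remainder controlled by $H\eta^{3/2}\|\bepsilon_k\|^3$; taking expectations and summing over $K=\tilde O(\eta^{-1}\cdot\text{poly})$ steps yields a $C_2\eta$ term rather than $C_2\eta^{1/2}$, with $C_2=\tilde O(d^{3/2}\rho^{-1}+Rd^{1/2}\beta\rho^{-1})$.

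The main obstacle I anticipate is making the Gaussian-part estimate rigorous: one must bound $\EE[\,|\,\log(\transitRef_{\xb}(\yb)/\transitRef_{\yb}(\xb))\,|\,]$ where $\yb=\xb-\eta\nabla f(\xb)+\sqrt{2\eta/\beta}\,\bepsilon$, show that the $O(\eta)$ terms genuinely cancel using $\nabla^2 f$ being Lipschitz (which requires care because the proposal mean uses $\nabla f(\xb)$ not $\nabla f$ at the midpoint), and then obtain a sub-exponential tail bound on the resulting cubic-in-$\bepsilon$ remainder so that its expectation scales like $H\eta^{3/2}d^{3/2}$ and its deviations are controlled with high probability on the good event $\{\xb_k\in\cB(0,R)\}$. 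A secondary technical point is that the step-size constraint $\eta=\tilde O(\rho^2 d^{-2}\beta^{-1}B^2\wedge \rho/(d^{3/2}+d\beta^{1/2}))$ must be chosen precisely so that all the error terms above are simultaneously small and the one-step overlap bound needed for conductance still holds; verifying that this choice is consistent with the stated $C_0,C_1,C_2$ and with the warm-start requirement $\PP(\|\xb_0\|_2\le R/2)\le\epsilon/16$ is bookkeeping-heavy but not conceptually hard. Once these pieces are assembled, collecting the four contributions gives exactly the claimed bound.
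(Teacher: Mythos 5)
Your treatment of the truncation error, the conductance-based contraction of the Metropolized chain, and the role of Assumption~\ref{assump:hessian_lip} (second-order expansion of $f$ along the proposal so that the log-acceptance ratio has only a cubic remainder of order $H\eta^{3/2}\|\bepsilon\|_2^3$, improving the $Ld\eta$ term to $Hd^{3/2}\beta^{-1/2}\eta^{3/2}$) all match the paper's Lemmas~\ref{lemma:approximate_target}, \ref{lemma:lowerbound_phi} and \ref{lemma:approximation_gld}. The gap is in how you convert the per-step closeness into a bound on $\|\mu_K^{\text{SGLD}}-\mu_K^{\text{MH}}\|_{TV}$. You propose a step-by-step coupling with a union bound, i.e.\ $K$ times the expected per-step rejection probability. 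The per-step rejection probability is of order $\delta$ with $\delta=\tilde O\big(Hd^{3/2}\beta^{-1/2}\eta^{3/2}+\beta(LR+G)^2d\eta/B\big)$, while $K=\tilde O\big(1/(C_0\eta)\big)$, so $K\delta=\tilde O\big(\eta^{1/2}\cdot\mathrm{poly}+ B^{-1}\cdot\mathrm{poly}\big)$: the stochastic-gradient contribution accumulates to a \emph{constant} in $\eta$ of order $1/B$, not to $C_1\eta^{1/2}/B$. Your claim that ``summing over $K$ steps and using $K\eta=\tilde O(1)$'' produces $\eta^{1/2}/B$ does not follow — summing a per-step error of order $\eta/B$ over $\Theta(1/\eta)$ steps cannot produce anything that vanishes with $\eta$. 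This is exactly the divergence-in-$K$ phenomenon the paper identifies as the weakness of earlier LMC-based couplings.

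The paper avoids this by never bounding $\|\mu_K^{\text{SGLD}}-\mu_K^{\text{MH}}\|_{TV}$ at all. It first replaces SGLD by Projected SGLD on a high-probability event (Lemma~\ref{lemma:connection_SGLD}), and then proves convergence of the \emph{non-reversible} Projected SGLD chain directly: Lemma~\ref{lemma:approximation_gld} gives the multiplicative closeness $(1-\delta)\transitRef_\ub(\cA)\le\transitSGLD_\ub(\cA)\le(1+\delta)\transitRef_\ub(\cA)$ of the kernels, and the approximate-conductance recursion of Lemmas~\ref{lemma:convergence_approximate} and \ref{lemma:contraction} shows that the per-step discrepancy is damped by the contraction of the chain, so the total bias is $16\delta/\phi$ with $\phi\ge c_0\rho\sqrt{\eta/\beta}$ — i.e.\ $\delta/\sqrt{\eta}$ rather than $\delta/\eta$ — which is what yields $C_1\eta^{1/2}/B+C_2\eta$. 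Without this mechanism (or an equivalent one, such as a bias bound for perturbed Markov kernels in terms of the spectral gap), your decomposition cannot reach the stated rate; this is the missing idea you would need to supply.
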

% \begin{remark}
The four terms in the upper bound in Theorem \ref{thm:main_thm_hessian} have the same interpretation as those in Theorem \ref{thm:main_thm}.
% the first two terms correspond to the sampling error of Metropolized SGLD, and the last two terms correspond to the approximation error.
Compared with the convergence result in Theorem \ref{thm:main_thm}, the improvement brought by the Hessian Lipschitz condition lies in the approximation error between the transition distributions of SGLD and Metropolized SGLD, which is improved from $O(\eta^{1/2})$ to $O\big(B^{-1}\eta^{1/2}+\eta\big)$.

Under the same Hessian Lipschitz condition, \citet{dalalyan2017user,mou2019improved,vempala2019rapid} improved the convergence rate of LMC \eqref{eq:def_lmc}. However, \citet{dalalyan2017user} only focused on strongly log-concave distributions and the theoretical results in \citet{mou2019improved,vempala2019rapid} cannot be easily extended to SGLD. 
% Moreover, we would like to emphasize that compared with these prior works, our proof (see Lemma \ref{lemma:approximation_gld} for more details) to show such an improvement is much simpler and more intuitive, which may be of independent interest.

% Specifically, the approximation error can be improved from $O(\eta^{1/2})$ to $O\big(B^{-1}\eta^{1/2}+\eta\big)$.
% the improvements of the sampling error proved in Theorem \ref{thm:main_thm_hessian} lies in the approximation error between the Markov chains generated by SGLD and Metropolized SGLD. In particular, 
% \end{remark}
\begin{corollary}\label{coro:sgld2}
Under the same assumptions made in Theorem \ref{thm:main_thm_hessian}, we use Gaussian initialization $\mu_0 = N\big(\zero, \Ib/(2\beta L)\big)$.  For any mini-batch size $B\le n$,  if 
set the step size and maximum iteration number as
\begin{align*}
\eta &= \tilde O\bigg(\frac{\rho^2B^2\epsilon^2}{d^2\beta}\wedge \frac{\rho\epsilon}{d^{3/2}+d\beta^{1/2}}\bigg),\\
K&= \tilde O\bigg(\frac{d^5\beta^2}{\rho^4B^2\epsilon^2}+\frac{d^{5/2}\beta + d^2\beta^{3/2}}{\rho^3\epsilon}\bigg),
\end{align*}
then SGLD can achieve an $\epsilon$ sampling error in terms of total variation distance.      
\end{corollary}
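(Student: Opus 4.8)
The plan is to instantiate the general bound of Theorem~\ref{thm:main_thm_hessian} with the stated Gaussian initialization and then solve for the step size $\eta$ and iteration number $K$ that make each of the four error terms at most $\epsilon/4$ (the fourth term is already $\epsilon/2$ by construction, so really we need the first three to be $O(\epsilon)$). First I would control the warmness parameter $\lambda$: for $\mu_0 = N(\zero, \Ib/(2\beta L))$ against the target $\pi \propto e^{-\beta f}$, Assumption~\ref{assump:smooth} gives $f(\xb) \le f(\zero) + \la \nabla f(\zero),\xb\ra + \tfrac{L}{2}\|\xb\|_2^2$, so $\pi$ has density lower bounded by a Gaussian of covariance $\Ib/(\beta L)$ up to a normalizing factor; comparing the two Gaussian densities and the normalizing constants (the latter controlled via Assumption~\ref{assump:diss}, which bounds $\int e^{-\beta f}$ from below) yields $\lambda = e^{\tilde O(d)}$, i.e.\ $\log\lambda = \tilde O(d)$. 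This is the same warm-start computation used in the proof of Corollary~\ref{coro:SGLD1}, so I would simply reuse it. I would also check that $R = \bar R(\epsilon K^{-1}/12) = \tilde O(\sqrt{d/(m\beta)})$ under this choice of $K$, since $\bar R(z)$ depends only logarithmically on $1/z$, and verify $\PP(\|\xb_0\|_2 \le R/2) \le \epsilon/16$ holds for the Gaussian initialization (a standard Gaussian tail/anti-concentration estimate, using that $R^2 \gtrsim d/(m\beta)$ while $\mu_0$ has variance $1/(2\beta L) \le 1/(2\beta m)$, so $\|\xb_0\|_2$ concentrates well below $R/2$).

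Next I would handle the linear-convergence term $\lambda(1-C_0\eta)^K$ with $C_0 = O(\beta^{-1}\rho^2)$: requiring $\lambda(1-C_0\eta)^K \le \epsilon$ amounts to $K \ge (C_0\eta)^{-1}\log(\lambda/\epsilon) = \tilde O\big(\beta \rho^{-2}\eta^{-1}\big)$, using $\log\lambda = \tilde O(d)$ and $\log(1/\epsilon)$ absorbed into $\tilde O$. Then I would impose the two remaining requirements, $C_1\eta^{1/2}/B \le \epsilon$ and $C_2\eta \le \epsilon$, which read
\begin{align*}
\eta \le \frac{B^2\epsilon^2}{C_1^2} = \tilde O\bigg(\frac{B^2\epsilon^2}{R^4 d^2\beta^3}\bigg) = \tilde O\bigg(\frac{B^2\epsilon^2\rho^2 \cdot \rho^2 m^2}{d^4\beta}\bigg), \qquad \eta \le \frac{\epsilon}{C_2} = \tilde O\bigg(\frac{\rho\epsilon}{d^{3/2}+ R d^{1/2}\beta}\bigg) = \tilde O\bigg(\frac{\rho\epsilon}{d^{3/2}+d\beta^{1/2}}\bigg),
\end{align*}
where I substituted $R = \tilde O(\sqrt{d/(m\beta)})$ and suppressed the absolute constant $m$ into $\tilde O$; combining these with the step-size cap already demanded by Theorem~\ref{thm:main_thm_hessian} gives exactly the stated $\eta = \tilde O\big(\rho^2B^2\epsilon^2 d^{-2}\beta^{-1} \wedge \rho\epsilon(d^{3/2}+d\beta^{1/2})^{-1}\big)$. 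Plugging this $\eta$ back into $K = \tilde O(\beta\rho^{-2}\eta^{-1})$ produces
\begin{align*}
K = \tilde O\bigg(\frac{\beta}{\rho^2}\cdot \Big(\frac{d^2\beta}{\rho^2 B^2\epsilon^2} + \frac{d^{3/2}+d\beta^{1/2}}{\rho\epsilon}\Big)\bigg) = \tilde O\bigg(\frac{d^2\beta^2}{\rho^4 B^2\epsilon^2} + \frac{d^{3/2}\beta + d\beta^{3/2}}{\rho^3\epsilon}\bigg),
\end{align*}
which should be massaged to match the displayed $K$; I would double-check the exponents of $d$ here against the claimed $d^5/(\rho^4 B^2\epsilon^2)$ and $(d^{5/2}\beta + d^2\beta^{3/2})/(\rho^3\epsilon)$, since the discrepancy suggests the paper is folding a further factor of $d$ per term into the bound on $R$ or into $C_1,C_2$ that I should track carefully (likely $R^2 = \tilde O(d)$ contributes $d^2$ rather than $d$ once $m,\beta$ dependencies are separated, and the warm-start factor may cost an extra $d$).

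The main obstacle I anticipate is bookkeeping rather than conceptual: carefully tracking the polynomial-in-$d$ (and $\beta$, $\rho$, $m$) dependence of $R$, $C_1$, $C_2$, and $\log\lambda$ simultaneously, and verifying that taking the minimum of the two step-size constraints with the a priori cap from Theorem~\ref{thm:main_thm_hessian} does not create a third binding constraint. A secondary subtlety is that $R$ itself depends on $K$ through $\bar R(\epsilon K^{-1}/12)$, so strictly speaking the choice of $K$ and $R$ is defined implicitly; since $\bar R$ depends on $K$ only via $\log K$, this circularity is benign — one fixes $R = \tilde O(\sqrt{d})$ first (absorbing $\log K = \tilde O(1)$ into the $\tilde O$), derives $K$, and then confirms a posteriori that $\log K$ is indeed polylogarithmic in the problem parameters so the $\tilde O$ notation is consistent. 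No new lemmas beyond Theorem~\ref{thm:main_thm_hessian} and the elementary warm-start and Gaussian-tail estimates are needed.
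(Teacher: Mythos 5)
Your proposal follows the paper's proof exactly: bound $\lambda=e^{O(d)}$ via the Gaussian/quadratic comparison (identical to the computation in Corollary~\ref{coro:SGLD1}), set the three non-$\epsilon/2$ terms of Theorem~\ref{thm:main_thm_hessian} to $O(\epsilon)$, and solve for $\eta$ and $K$, with the circularity in $R=\bar R(\epsilon K^{-1}/12)$ handled exactly as you describe (it enters only through $\log K$). The two exponent discrepancies you flag at the end resolve as you suspect. First, you must not absorb $\log\lambda=O(d)$ into the $\tilde O$ when computing $K$ --- that factor is polynomial, not polylogarithmic --- so the correct relation is $K=\tilde O\big(d\beta\rho^{-2}\eta^{-1}\big)$, which supplies one extra power of $d$ in each term. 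Second, your own computation of the first step-size branch is the right one: $C_1=\tilde O(R^2d\rho^{-1}\beta^{3/2})$ with $R^2=\tilde O(d\beta^{-1})$ gives $C_1=\tilde O(d^2\rho^{-1}\beta^{1/2})$ and hence $\eta\le B^2\epsilon^2/C_1^2=\tilde O(\rho^2B^2\epsilon^2/(d^4\beta))$; this is what the paper's own proof derives (the $d^2$ appearing in the corollary's displayed $\eta$ is inconsistent with the proof and with the displayed $K$). With these two corrections, $K=\tilde O\big(d\beta\rho^{-2}\cdot(d^4\beta\rho^{-2}B^{-2}\epsilon^{-2}+(d^{3/2}+d\beta^{1/2})\rho^{-1}\epsilon^{-1})\big)$ reproduces the stated $\tilde O\big(d^5\beta^2\rho^{-4}B^{-2}\epsilon^{-2}+(d^{5/2}\beta+d^2\beta^{3/2})\rho^{-3}\epsilon^{-1}\big)$ exactly. (Also note the stray extra $\rho^2$ in your intermediate display for the first branch of $\eta$; only one factor of $\rho^2$, coming from $C_1^{-2}$, belongs there.)
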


% \begin{remark}
Note that the required number of stochastic gradient evaluations is $K\cdot B = \tilde O\big(d^5\beta^{2}/(B\rho^4\epsilon^2)+Bd^{5/2}\beta^{3/2}/(\rho^3\epsilon)\big)$. Therefore, if setting the mini-batch size as $B=\tilde O\big([d^{5/2}\beta^{1/2}\rho\epsilon]^{1/2}\big)$, it can be derived that the gradient complexity of SGLD is $\tilde O(d^{15/4}\beta^{7/4}\rho^{-7/2}\epsilon^{-3/2})$. This strictly improves the stochastic gradient complexity (i.e., number of stochastic gradient evaluations to achieve $\epsilon$-sampling error) of SGLD without Assumption \ref{assump:hessian_lip} by a factor of $\tilde O(d^{1/4}\beta^{1/4}\rho^{-1/2}\epsilon^{-1/2})$.
% \end{remark}
% \CC{Comparison with \citet{xu2018global,raginsky2017non}}
% \end{remark}

\section{Proof Outline}\label{sec:proof_main}
In this section, we will sketch the proof of the main results (Theorem \ref{thm:main_thm}). The missing proofs for the other theorems, corollaries and lemmas are deferred to the appendix. We first highlight the key proof technique and its novelty and difference compared with prior works. Then we will go over each of the key steps in detail. %introduce two useful auxiliary sequences: Projected SGLD and Metropolized SGLD, and then 
%illustrate the roadmap of the proof.

% \CC{todo: reorganize the structure of this section: first state the challenge (comment on previous paper), and give the high-level idea (point out the key lemma.). Then give the proof roadmap of the main theorem. }

% The proof technique consists of two major ingredients: the projected SGLD, and the Metropolized SGLD. In particular, unlike \citet{raginsky2017non,xu2018global,gao2018global} that proving the convergence of SGLD by characterizing its discretization error to the continuous-time Langevin diffusion \eqref{eq:langevin_dynamics}, we follow the same idea in \cite{zhang2017hitting} by constructing a projected SGLD algorithm (adding an accept/reject step to the standard SGLD algorithm). In the following we first introduce the projected SGLD algorithm and show that this algorithm is nearly equivalent to SGLD. Then we introduce Metropolized SGLD, which will be used to prove the convergence of Projected SGLD. \CC{include the third and last subsections}

\subsection{Proof Technique and Novelty}
% The proof technique consists of two major ingredients: the projected SGLD, and the Metropolized SGLD. In particular, unlike \citet{raginsky2017non,xu2018global,gao2018global} that proving the convergence of SGLD by characterizing its discretization error to the continuous-time Langevin diffusion \eqref{eq:langevin_dynamics}, we follow the same idea in \cite{zhang2017hitting} by constructing a projected SGLD algorithm (adding an accept/reject step to the standard SGLD algorithm), denoted by \textbf{Projected SGLD}. Then we will further craft an auxiliary sequence by adding a ``virture'' Metropolis Hasting step into Projected SGLD, which we call it \textbf{Metropilized SGLD}. 

\paragraph{Proof Technique.} 
Our proof relies on two sequences (green arrows in Figure \ref{fig:decomposition}): \textbf{Projected SGLD} ($\xb_k^{\text{\tiny Proj-SGLD}}$) and \textbf{Metropolized SGLD} ($\xb_k^{\text{\tiny MH}}$). Projected SGLD is constructed by adding an accept/reject step to the standard SGLD algorithm, which was first studied in \citet{zhang2017hitting}. Metropolized SGLD is a ``virtual'' sequence constructed by further adding a Metropolis Hasting step into Projected SGLD (the Metropolis Hasting step is computationally intractable so that Metropolized SGLD is not a practical algorithm and we only use it for theoretical analysis). Due to such Metropolis Hasting step, Metropolized SGLD is a time-reversible Markov chain and thus enjoys good conductance properties. Based on these two auxiliary sequences, we will prove the convergence of SGLD following three steps: (1) show that the output of Projected SGLD is close to that of SGLD in distribution (see Lemma \ref{lemma:connection_SGLD}); (2) show that the transition distribution of Projected SGLD is close to that of Metropolized SGLD (see Lemma \ref{lemma:approximation}); and (3) prove the convergence of Projected SGLD based on the conductance of Metropolized SGLD (see Lemma \ref{lemma:convergence_approximate}). 

\paragraph{Technical Novelty. }
In order to prove the convergence rate of SGLD, prior works \citep{raginsky2017non,xu2018global} typically make use of the LMC iterates $\xb_k^{\text{\tiny LMC}}$ and decompose the sampling error of SGLD (the error between $\xb_k$ and $\xb^\pi$) into two parts: (1) the error between SGLD iterates and LMC iterates; and (2) the sampling error of LMC (though \citet{raginsky2017non,xu2018global} bound the sampling error of $\xb_k^{\text{\tiny LMC}}$ in different ways). We illustrate the roadmap of different proof techniques in  Figure \ref{fig:decomposition}.
Note that their results on the error between $\xb_k$ and $\xb_k^{\text{\tiny LMC}}$ diverge as $k$ increases, due to the uncertainty of stochastic gradients. This suggests that LMC may not be a good enough auxiliary chain for studying SGLD. In contrast, our constructed auxiliary sequences (i.e., Projected SGLD and Metropolized SGLD) are closer to SGLD since they also cover the randomness of stochastic gradients (this randomness can be included as part of the transition distribution, see Section \ref{sec:metropolized_SGLD} for more details). Therefore, our proof technique can lead to a sharper convergence analysis than those in \citet{raginsky2017non,xu2018global}, which consequently gives a faster convergence rate of SGLD for sampling from non-log-concave distributions.

% i.e., Metropolized SGLD ($\xb_k^{\text{\tiny MH}}$) (green arrows in Figure \ref{fig:decomposition}), which is closer to SGLD as its transition distribution also covers the randomness of stochastic gradients (see Section \ref{sec:metropolized_SGLD} for more details). Besides, Metropolized SGLD is a time-reversible Markov chain and thus provably converges to the (truncated) target distribution exponentially fast. Consequently, combining these results we are able to prove a faster convergence rate.

% by adding a virtual MH step into SGLD, which is  time-reversible and provably converge to the (truncated) target distribution exponentially fast. Note that 

\begin{figure}[!t]
    \centering
    \includegraphics[scale=0.4]{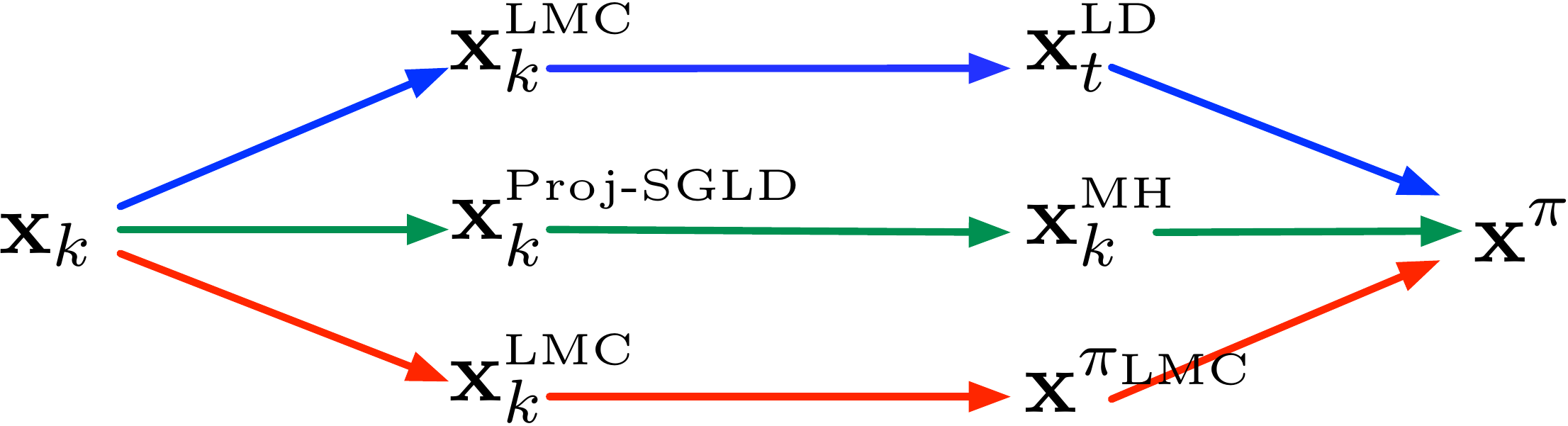}
    % \vskip -0.1in
    \caption{Illustration of the analysis framework of SGLD in different works:  \textcolor{blue}{\textbf{Raginsky et al. (2017)}}, \textcolor{red}{\textbf{Xu et al. (2018)}}, \textcolor{ForestGreen}{\textbf{this work}}. The goal is to prove the convergence of SGLD iterates $\xb_k$ to the point following the target distribution $\xb^\pi$. Note that, $\xb_{k}^{\text{\tiny LMC}}$, $\xb_{k}^{\text{\tiny Proj-SGLD}}$ and $\xb_{k}^{\text{\tiny MH}}$ denote the $k$-th iterates of LMC, Proj-SGLD, and Metropolized SGLD respectively; $\xb_t^{\text{\tiny LD}}$ denotes the solution of  \eqref{eq:langevin_dynamics} at time $t$; $\xb^{\pi_{\text{\tiny LMC}}}$ denotes the point following the stationary distribution of LMC. }
    \label{fig:decomposition}
    % \vspace {-0.25in}
\end{figure}

We would also like to point out that while the construction of Metropolized SGLD follows the same spirit of \citet{zhang2017hitting}, it has a different goal and thus the corresponding analysis is not the same. Specifically, \citet{zhang2017hitting} only characterizes the hitting time of SGLD to a certain set by lower bounding the restricted conductance of SGLD, but does not prove its convergence to $\pi$. In contrast, we focus on the ability of SGLD for sampling from a certain target distribution. Thus we not only need to analyze the conductance of SGLD, but also need to bound the approximation error between the distribution of $\xb_k$ and the target one (see Lemmas \ref{lemma:convergence_approximate} and \ref{lemma:contraction} and their proofs for more details), which is more challenging.
% of the sequence generated by SGLD to its counterpart generated by Metropolized SGLD, and make sure that it can be well controlled even after a large number of iterations. 
As a consequence, we prove that the sampling error of SGLD to the target distribution can be upper bounded by $O(\sqrt{\eta})$, while the analysis in \citet{zhang2017hitting} can only give $O(1)$ sampling error.
% , which cannot guarantee the convergence of SGLD with constant mini-batch size.
% in order to prove the convergence rate and sampling error of SGLD to the target distribution 

\subsection{Projected SGLD and Its Equivalence to SGLD}

%Until now Algorithm \ref{alg:sgld} is identical to standard SGLD. Then there comes a difference. Unlike the standard SGLD algorithm that accepts the candidate iterate $\xb_{k+1}$ with probability $1$, 
Projected SGLD is constructed by adding an extra step in Algorithm \ref{alg:projected_sgld} with the following accept/reject rule:
\begin{align}\label{eq:def_alg_accept}
\begin{split}
\xb_{k+1} =
    \begin{cases}
    \xb_{k+1} & \xb_{k+1}\in\cB(\xb_{k},r)\cap\cB(\zero,R); \\
    \xb_k & \mbox{otherwise}.
    \end{cases}
    \end{split}
\end{align}
This step ensures each new iterate $\xb_{k+1}$ does not go too far away from the current iterate and all iterates are restricted in a (relatively) large region $\cB(\zero,R)$. The entire algorithm is summarized in Algorithm \ref{alg:projected_sgld}.
% In particular, the first region $\cB(\xb_k,r)$ implies that the next iterate should not be too far away from the current one, and the second region $\cB(\zero,R)$ implies that all iterates should stay inside a ball centered at the origin. 
Due to the above accept/reject rule, Projected SGLD is slightly different from the standard SGLD algorithm (see Algorithm \ref{alg:sgld}). However, we can show that Projected SGLD is nearly the same as SGLD given proper choices of $R$ and $r$. In particular, in the following lemma, we will show that the total variation distance between the distributions of the outputs of both algorithms can be arbitrarily small.

\begin{algorithm}[!t]
	\caption{Projected SGLD }
	\label{alg:projected_sgld}
	\begin{algorithmic}
		\STATE \textbf{input:} step size $\eta$; mini-batch size $B$; inverse temperature parameter $\beta$; radius $R$, $r$; 
	    \STATE Randomly draw $\xb_0$ from initial distribution $\mu_0$.
 		\FOR {$k = 0,1,\ldots, K$}
		\STATE Randomly pick a subset $\cI$ from $\{1,\ldots,n\}$ of size $|\cI|=B$; randomly draw $\bepsilon_k\sim N(\zero,\Ib)$
		\STATE Compute the stochastic gradient $\gb(\xb_k,\cI) = 1/B\sum_{i\in\cI}\nabla f_i(\xb_k)$
		\STATE Update: $\xb_{k+1}=\xb_k-\eta \gb(\xb_k,\cI)+\sqrt{2\eta/\beta}\bepsilon_k$
		\IF{$\xb_{k+1}\not\in \cB(\xb_k,r)\cap\cB(\zero,R)$} 
		\STATE $\xb_{k+1}= \xb_k$
% 		\ELSE
% 		\STATE $\xb_{k+1}=\xb_k$
		\ENDIF
		\ENDFOR 
		\STATE \textbf{output: $\xb_K$} 
	\end{algorithmic}

\end{algorithm}
% 	\vspace{-0.1in}

\begin{lemma}\label{lemma:connection_SGLD}
Let $\mu_K^{\text{SGLD}}$ and $\mu_K^{\text{Proj-SGLD}}$ be  distributions of the outputs of  standard SGLD (Algorithm \ref{alg:sgld}) and projected SGLD (Algorithm \ref{alg:projected_sgld}). For any $\epsilon\in(0,1)$, we set
\begin{align*}
R = \bar R(\epsilon K^{-1}/4),\quad
r = \sqrt{2\eta d /\beta}\big(2+\sqrt{2\log(8K/\epsilon)/d}\big).
\end{align*}
Suppose $\PP(\|\xb_0\|_2\le R/2)\le \epsilon/16$ and setting $\eta\le (LR+G)^{-2}\beta^{-1}d$, then we have
\begin{align*}
\big\|\mu_K^{\text{SGLD}}-\mu_K^{\text{Proj-SGLD}}\big\|_{TV}\le \frac{\epsilon}{4}.
\end{align*}
\end{lemma}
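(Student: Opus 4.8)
\textbf{Proof proposal for Lemma~\ref{lemma:connection_SGLD}.}

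The plan is to couple the two algorithms on the same randomness (same mini-batch choices $\cI$ and same Gaussian vectors $\bepsilon_k$) and show that, with high probability, the projection/reject step in Algorithm~\ref{alg:projected_sgld} is never triggered during the first $K$ steps, so the two trajectories coincide. Since a coupling that agrees with probability at least $1-\epsilon/4$ yields $\|\mu_K^{\text{SGLD}}-\mu_K^{\text{Proj-SGLD}}\|_{TV}\le \epsilon/4$, it suffices to bound the probability of a rejection event by $\epsilon/4$. A rejection at step $k$ happens when either (i) the step is too long, i.e.\ $\xb_{k+1}\notin\cB(\xb_k,r)$, or (ii) the new point escapes the ball, i.e.\ $\xb_{k+1}\notin\cB(\zero,R)$. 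I would union-bound over $k=0,\ldots,K-1$ and over these two events, so the target is $O(\epsilon/K)$ per step per event, which is exactly why $R$ and $r$ are chosen with $\log(K/\epsilon)$-type corrections.

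For event (i): conditioned on $\xb_k$, the displacement is $\xb_{k+1}-\xb_k = -\eta\gb(\xb_k,\cI)+\sqrt{2\eta/\beta}\,\bepsilon_k$. Using Assumption~\ref{assump:smooth} (which gives $\|\nabla f_i(\xb_k)\|_2 \le L\|\xb_k\|_2 + G$ for $G := \max_i\|\nabla f_i(\zero)\|_2$, hence $\|\gb(\xb_k,\cI)\|_2\le L\|\xb_k\|_2+G \le LR+G$ on the good event $\|\xb_k\|_2\le R$), the drift contributes at most $\eta(LR+G)$, which is dominated by $\sqrt{2\eta d/\beta}$ under the stated step-size condition $\eta\le (LR+G)^{-2}\beta^{-1}d$. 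For the Gaussian part I would invoke the standard concentration bound $\PP(\|\bepsilon_k\|_2 \ge \sqrt{d}+\sqrt{2\log(1/\delta)}) \le \delta$; taking $\delta = \epsilon/(8K)$ gives $\|\sqrt{2\eta/\beta}\,\bepsilon_k\|_2 \le \sqrt{2\eta/\beta}(\sqrt d + \sqrt{2\log(8K/\epsilon)})$, and adding the drift term shows $\|\xb_{k+1}-\xb_k\|_2 \le \sqrt{2\eta d/\beta}\,(2 + \sqrt{2\log(8K/\epsilon)/d}) = r$ with probability at least $1-\epsilon/(8K)$. Union bound over $k$ costs $\epsilon/8$.

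For event (ii): here I would use the dissipativeness Assumption~\ref{assump:diss} to establish a drift-towards-origin estimate for one SGLD step. A routine computation of $\EE[\|\xb_{k+1}\|_2^2\mid\xb_k]$ using $\la\gb(\xb_k,\cI),\xb_k\ra$ (whose conditional expectation over $\cI$ equals $\la\nabla f(\xb_k),\xb_k\ra \ge m\|\xb_k\|_2^2-b$) shows that outside a ball of radius $O(\sqrt{(d/\beta+b)/m})$ the expected squared norm contracts by a factor $(1-\Theta(\eta m))$ plus an $O(\eta d/\beta)$ additive term; iterating and feeding in the sub-Gaussian tails of the Gaussian increments yields a uniform-in-$k$ tail bound of the form $\PP(\exists k\le K:\|\xb_k\|_2 > R) \le \epsilon/8$ precisely when $R = \bar R(\epsilon K^{-1}/4)$ — the three terms inside the max in the definition of $\bar R(z)$ in \eqref{eq:def_barR} are exactly the thresholds coming from the dissipativity constants, the ratio $L/m$ from the initial transient, and the Gaussian deviation term with $z = \epsilon/(4K)$. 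The hypothesis $\PP(\|\xb_0\|_2\le R/2)\le\epsilon/16$ — presumably meant as $\PP(\|\xb_0\|_2 \ge R/2)\le\epsilon/16$ — controls the contribution of the initial distribution to this bound. Combining: the coupled trajectories differ with probability at most $\epsilon/8+\epsilon/8 = \epsilon/4$, giving the claim.

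The main obstacle is the uniform-in-$k$ norm bound in event (ii): a naive per-step union bound over the Gaussian increments would force $R$ to grow with $\log K$ in a way that still works but must be reconciled with the exact form of $\bar R$, and more importantly one must handle the fact that $\|\xb_k\|_2$ is itself random, so the conditional drift estimate has to be turned into an unconditional supremum bound. I would do this via a supermartingale / Lyapunov argument on $\|\xb_k\|_2^2$ (or on $e^{\gamma\|\xb_k\|_2^2}$ for small $\gamma$) rather than a crude union bound, so that the transient decay of the initial overshoot and the stationary-regime fluctuations are both captured; establishing that this argument yields exactly the threshold $\bar R(\epsilon K^{-1}/4)$ is the delicate bookkeeping step. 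This is essentially a known estimate for dissipative SGLD (cf.\ \citealt{raginsky2017non,xu2018global}), so I would state it as an auxiliary lemma and reuse it.
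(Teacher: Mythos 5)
Your proposal matches the paper's proof in essentially every respect: the same coupling/union-bound decomposition into the two rejection events, the same chi-square concentration for the step-length event (with the identical budget $\epsilon/(8K)$ per step and the same use of $\eta\le(LR+G)^{-2}\beta^{-1}d$ to absorb the drift), the same dissipativity-driven supermartingale argument for the uniform-in-$k$ norm bound, and the same (correct) reading of the hypothesis on $\xb_0$ as a typo for $\PP(\|\xb_0\|_2\ge R/2)\le\epsilon/16$. The only difference is in the execution of the delicate step you flag: the paper runs the supermartingale on $\log(\|\xb_k\|_2^2)$ restricted to excursions outside $\cB(\zero,R/2)$ (so that the increments are sub-Gaussian with a scale $\propto\sqrt{\eta d/\beta}/R$ independent of $\|\xb_k\|_2$) and applies a Hoeffding-type maximal inequality, rather than your $\|\xb_k\|_2^2$ or $e^{\gamma\|\xb_k\|_2^2}$ variants.
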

%\pan{should we keep only one of Prop \ref{prop:connection_SGLD} and Prop \ref{prop:connection_SGLD_v2}?Difan: only keep the lemma in the main part. already changed this to lemma.}

% \pan{could add some of the comments in the proof to here since we only have 8 pages now}

\subsection{Construction of Metropolized SGLD}\label{sec:metropolized_SGLD}
Projected SGLD will approximately generate samples from the following truncated target distribution since it restricts all iterates to the region $\Omega:=\cB(0,R)$,
\begin{align}\label{eq:def_restrict_target}
\begin{split}
\stationalRef(\dd \xb) =   \begin{cases}
    \frac{e^{-\beta f(\xb)}}{\int_{\Omega}e^{-\beta f(\yb)}\dd\yb}\dd\xb & \xb\in\Omega; \\
    0 & \mbox{otherwise}.
    \end{cases}
\end{split}
\end{align}
Then we will  characterize the convergence of Projected SGLD to $\pi^*$. In particular, we will introduce a useful auxiliary Markov chain called Metropolized SGLD, i.e., SGLD with a Metropolis-Hasting step. We will first give the transition distribution of the Markov chain corresponding to Projected SGLD.

% In this subsection, we will provide a detailed characterization on the transition distributions of SGLD and Metropolized SGLD.
% \CC{In this part, we will provide an overview of the method used in this paper.}

% The proof of main theory can be decomposed into two major parts: (1) we first construct a ball $\Omega = \cB(\zero,R)$ for certain $R$, and show that the probability measure of such ball is sufficiently high (i.e., close to $1$) under the target distribution $\pi$; and (2) we prove that the samples generated from Algorithm \ref{alg:projected_sgld} can well approximate the target density restricted in $\Omega$, denoted by $\stationalRef$, in terms of total variation distance.
% Then combining these two steps, we will show that Algorithm \ref{alg:projected_sgld}  can also achieve small sampling error compared with the target density $\pi$ in $\RR^d$. 
% \subsection{Quantify a ball with high probability measure under target distribution}
% \pan{perhaps move to different position}
% Let $\zeta$ be a constant. Define function
% \begin{align*}
% \bar R(\zeta) = \bigg[\max\bigg\{\frac{4d\log(4L/m)+4b}{m\beta}, \frac{4d + 8 \sqrt{d\log(1/\zeta)}+8\log(1/\zeta)}{m\beta}\bigg\}\bigg]^{1/2},
% \end{align*}
% and 
% there has sufficiently large probability measure inside the region $\cB(\zero,R)$ under the distribution $\pi$.

% \pan{the intersection of two balls}

% \subsection{Quantify the sampling error for restricted target density}

\noindent\textbf{Transition distribution of Projected SGLD.} Let $\gb\big(\xb,\cI\big)$ be the stochastic gradient computed at the point $\xb$, where $\cI$ denotes the  mini-batch of data points queried in the stochastic gradient computation. Then it is clear that Algorithm \ref{alg:projected_sgld} can be described as a Markov process. More specifically, let $\ub$ and $\wb$ be the starting point and the point obtained after one-step iteration of Algorithm \ref{alg:projected_sgld}, the Markov chain in this iteration can be formed as $\ub\rightarrow \vb\rightarrow \wb$, where $\vb$ is generated based on the following conditional probability density function,
\begin{align}\label{eq:trans_SGLD}
P(\vb|\ub)= \EE_{\cI}[P(\vb|\ub,\cI)] = \EE_{\cI} \bigg[\frac{1}{(4\pi\eta/\beta)^{d/2}}\exp\bigg(-\frac{\|\vb - \ub + \eta\gb(\ub,\cI)\|_2^2}{4\eta/\beta}\bigg)\bigg|\ub\bigg],
\end{align}%
which is exactly the transition probability of standard SGLD (i.e., without any accept/reject step). Let $R>0$ be a tunable radius and recall that $\Omega = \cB(\zero,R)$. The process $\vb\rightarrow \wb$ can be formulated as
% \begin{align*}
% \wb = \left\{\begin{array}{ll}
%     \vb & \wb\in\cB(\ub,\sqrt{4\eta/d})\cap\cB(\zero,R); \\
%     \ub & \mbox{otherwise}.
% \end{array}
% \right.
% \end{align*}
\begin{align}\label{eq:markov_chain_v2w}
\wb =
    \begin{cases}
    \vb & \vb\in\cB(\ub,r)\cap\Omega; \\
    \ub & \mbox{otherwise}.
    \end{cases}
\end{align}
Let $p(\ub) = \PP_{\vb\sim P(\cdot|\ub)}[\vb\in\cB(\ub,r)\cap\Omega]$ be the acceptance probability in \eqref{eq:markov_chain_v2w}, and $Q(\wb|\ub)$ be the conditional PDF that describes $\ub\rightarrow \wb$. 
% and $Q(\cdot,\ub)$ be the conditional probability density function defined in \eqref{eq:trans_SGLD} given point $\ub$, 
Then we have
\begin{align*}
Q(\wb|\ub) &= (1-p(\ub))\delta_{\ub}(\wb)+ P(\wb|\ub)\cdot\ind\big[\wb\in\cB(\ub,r)\cap\Omega\big],
\end{align*}
where $P(\wb|\ub)$ is computed by replacing $\vb$ with $\wb$ in \eqref{eq:trans_SGLD}.
Similar to \cite{zhang2017hitting,dwivedi2018log}, we consider the $1/2$-lazy version of the above Markov process, i.e., a Markov process with the following  transition distribution
\begin{align}\label{eq:def_trans_lz_sgld}
\transitSGLD_{\ub}(\wb) = \frac{1}{2}\delta_{\ub}(\wb) + \frac{1}{2}Q(\wb|\ub),
\end{align}
where $\delta_\ub(\cdot)$ is the Dirac-delta distribution at $\ub$.
However, it is difficult to directly prove the ergodicity of the Markov process with transition distribution $\transitSGLD_\ub(\wb)$, and it is also hard to tell whether its stationary distribution exists or not. Besides, SGLD is known to be asymptotically biased \citep{teh2016consistency,vollmer2016exploration}, which does not converge to the target distribution $\pi$ even when it runs for infinite steps. It remains unclear whether Projected SGLD can converge to the target distribution given the formula of its transition distribution.

\noindent\textbf{Metropolized SGLD.}
In order to quantify the sampling error for the output of Projected SGLD in Algorithm \ref{alg:projected_sgld} and prove its convergence, we follow the  idea of \cite{zhang2017hitting}, which constructs an auxiliary Markov process by adding an extra Metropolis-Hasting correction step into Algorithm \ref{alg:projected_sgld}. We call it Metropolized SGLD.
% , which can provably converge to the target density $\pi\propto e^{-\beta f(\xb)}$. 
Given the starting point $\ub$, let $\wb$ be the candidate state generated from the distribution $\transitSGLD_\ub(\cdot)$. Metropolized SGLD will accept the candidate $\wb$ with the following probability, 
% The accept probability $\alpha_{\ub}(\wb)$ is defined as 
\begin{align*}
\alpha_{\ub}(\wb) = \min\bigg\{1, \frac{\transitSGLD_{\wb}(\ub)}{\transitSGLD_{\ub}(\wb)}\cdot\exp\big[-\beta\big(f(\wb) - f(\ub)\big)\big]\bigg\}.
\end{align*}
Let $\transitRef_\ub(\cdot)$ denote the transition distribution of such auxiliary Markov process, i.e.,
\begin{align*}
\transitRef_\ub(\wb) = (1-\alpha_\ub(\wb))\delta(\ub) + \alpha_\ub(\wb)\transitSGLD_\ub(\wb).
\end{align*}
It is easy to verify that the aforementioned Markov process is time-reversible. Due to this Metropolis-Hastings correction step, the Markov chain can converge to a unique stationary distribution $\stationalRef\propto e^{-\beta f(\xb)}\cdot\ind(\xb\in\Omega)$ \citep{zhang2017hitting}. It is worth pointing out that Metropolized SGLD cannot be implemented in practice since we are only allowed to query a subset of the training data in each iteration of SGLD, thus we are not able to 
precisely calculate the accept probability $\alpha_\ub(\wb)$, which involves the expectation computation over the stochastic mini-batch of data points.
% is intractable since it involves the expectation computation over the stochastic mini-batch of data points, which relies on all training data points.
Nevertheless, we will only use this auxiliary Markov chain in our theoretical analysis to show the convergence of Algorithm \ref{alg:projected_sgld}.

We will further show that the transition distribution of Projected SGLD ($\cT_\ub(\cdot)$) can be $\delta$-close to that of Metropolized SGLD ($\cT_\ub^*(\cdot)$) for some small quantity $\delta$ governed by $\eta$, which is provided in the following lemma. 
\begin{lemma}\label{lemma:approximation}
Under Assumption \ref{assump:smooth}, let $G = \max_{i\in[n]} \|\nabla f_i(\zero)\|_2$, and set $r = \sqrt{10\eta d/\beta}\big(1+\sqrt{\log(8K/\epsilon)/d}\big)$, where $K$ is the total number of iterations of Projected SGLD. Then 
% if $\eta\le D^2(10 d\beta)^{-1}\wedge D(LR+G)^{-1}\beta^{-1}$, 
there exists a constant \begin{align*}
\delta &= \Big[10Ld\eta  +10L(LR+G)d^{1/2}\beta^{1/2}\eta^{3/2}+ 12\beta(LR+G)^2 d\eta/B+ 2\beta^2(LR+G)^4\eta^2/B\Big]\notag\\
&\qquad \cdot \big(1+\sqrt{\log(8K/\epsilon)/d}\big)^2
\end{align*}
such that for any set $\cA\subseteq \Omega$ and any point $\ub\in \Omega$,
\begin{align}\label{eq:delta_approximation}
(1-\delta)\transitRef_\ub(\cA)\le \transitSGLD_\ub(\cA)\le (1+\delta)\transitRef_\ub(\cA). 
\end{align}
\end{lemma}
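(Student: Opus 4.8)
The plan is to exploit the fact that the Metropolized SGLD kernel $\transitRef_\ub$ is obtained from $\transitSGLD_\ub$ only through a Metropolis--Hastings rejection, and that every rejection returns the chain to the atom at $\ub$; thus $\transitRef_\ub(\wb)=\alpha_\ub(\wb)\transitSGLD_\ub(\wb)$ for $\wb\neq\ub$ while $\transitRef_\ub$ carries a (weakly) larger atom at $\ub$. First I would reduce \eqref{eq:delta_approximation} to the pointwise estimate $\alpha_\ub(\wb)\ge 1/(1+\delta)$, valid for every $\ub\in\Omega$ and every $\wb$ in the support of the continuous part of $\transitSGLD_\ub$, i.e.\ $\wb\in\cB(\ub,r)\cap\Omega$. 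Writing $\transitSGLD_\ub=q_\ub\delta_\ub+g_\ub$ with atom mass $q_\ub=1-\tfrac12 p(\ub)\ge\tfrac12$ --- this is exactly where the $1/2$-laziness enters --- any $\cA\subseteq\Omega$ is handled by splitting into the cases $\ub\notin\cA$ and $\ub\in\cA$: in the first, $\transitRef_\ub(\cA)=\int_\cA\alpha_\ub g_\ub$ lies between $\tfrac{1}{1+\delta}\transitSGLD_\ub(\cA)$ and $\transitSGLD_\ub(\cA)$; in the second, the extra mass accumulated at $\ub$ by rejection is at most $\tfrac{\delta}{1+\delta}\int g_\ub\le\tfrac\delta2\le\delta\,\transitSGLD_\ub(\cA)$ since $\transitSGLD_\ub(\cA)\ge q_\ub\ge\tfrac12$. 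Either way \eqref{eq:delta_approximation} follows.

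For the pointwise bound, fix $\ub\in\Omega$ and $\wb\in\cB(\ub,r)\cap\Omega$; then also $\ub\in\cB(\wb,r)\cap\Omega$, so both $\transitSGLD_\ub(\wb)=\tfrac12\EE_\cI[P(\wb|\ub,\cI)]$ and $\transitSGLD_\wb(\ub)=\tfrac12\EE_\cI[P(\ub|\wb,\cI)]$ are given by \eqref{eq:trans_SGLD}. Expanding the squared norm in each Gaussian, the prefactor $(4\pi\eta/\beta)^{-d/2}$ and the term $\exp(-\tfrac{\beta}{4\eta}\|\ub-\wb\|_2^2)$ cancel in the ratio, leaving
\[
\frac{\transitSGLD_\wb(\ub)}{\transitSGLD_\ub(\wb)}
=\frac{\EE_\cI\big[\exp\big(\tfrac{\beta}{2}\la\wb-\ub,\gb(\wb,\cI)\ra-\tfrac{\beta\eta}{4}\|\gb(\wb,\cI)\|_2^2\big)\big]}
{\EE_\cI\big[\exp\big(-\tfrac{\beta}{2}\la\wb-\ub,\gb(\ub,\cI)\ra-\tfrac{\beta\eta}{4}\|\gb(\ub,\cI)\|_2^2\big)\big]}.
\]
I would lower bound the numerator by Jensen's inequality (pushing $\EE_\cI$ inside $\exp$), which replaces $\gb(\wb,\cI)$ by $\nabla f(\wb)$ at the price of an extra $-\tfrac{\beta\eta}{4}\EE_\cI\|\gb(\wb,\cI)-\nabla f(\wb)\|_2^2\ge-\beta\eta(LR+G)^2/B$. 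For the denominator I would split off the deterministic part, discard the non-positive noise term $-\tfrac{\beta\eta}{4}\|\gb(\ub,\cI)-\nabla f(\ub)\|_2^2$, and bound the moment generating function $\EE_\cI[\exp(-\tfrac{\beta}{2}\la\wb-\ub+\eta\nabla f(\ub),\gb(\ub,\cI)-\nabla f(\ub)\ra)]$: since $\|\nabla f_i(\xb)\|_2\le LR+G$ for all $\xb\in\Omega$ by Assumption~\ref{assump:smooth} and the definition of $G$, the mini-batch noise is bounded by $2(LR+G)$ almost surely, so Hoeffding's lemma together with the classical domination of sampling without replacement by sampling with replacement yields a sub-Gaussian bound with variance proxy $O\big((LR+G)^2\|\wb-\ub+\eta\nabla f(\ub)\|_2^2/B\big)$. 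Using $\|\wb-\ub\|_2\le r$ and $r^2\asymp\eta d/\beta$, this contributes exactly the $\beta(LR+G)^2 d\eta/B$ and $\beta^2(LR+G)^4\eta^2/B$ terms in $\delta$.

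What then remains is the deterministic ``full-gradient'' discrepancy
\[
\tfrac{\beta}{2}\la\wb-\ub,\nabla f(\wb)+\nabla f(\ub)\ra-\beta\big(f(\wb)-f(\ub)\big)-\tfrac{\beta\eta}{4}\big(\|\nabla f(\wb)\|_2^2-\|\nabla f(\ub)\|_2^2\big),
\]
i.e.\ the familiar MALA acceptance error. Setting $\psi(t)=\la\wb-\ub,\nabla f(\ub+t(\wb-\ub))-\nabla f(\ub)\ra$, which is $L\|\wb-\ub\|_2^2$-Lipschitz with $\psi(0)=0$ by Assumption~\ref{assump:smooth}, the first two terms equal $\beta\int_0^1(t-\tfrac12)\psi'(t)\,\dd t$ and are hence $O(\beta L\|\wb-\ub\|_2^2)=O(Ld\eta)$ --- the $Ld\eta$ term; the last term is at most $2L(LR+G)\|\wb-\ub\|_2$ by factoring $\|\nabla f(\wb)\|_2^2-\|\nabla f(\ub)\|_2^2=\la\nabla f(\wb)-\nabla f(\ub),\nabla f(\wb)+\nabla f(\ub)\ra$, giving the $L(LR+G)d^{1/2}\beta^{1/2}\eta^{3/2}$ term. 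Collecting the four groups of terms into their sum $\delta_0$, multiplying by the factor $(1+\sqrt{\log(8K/\epsilon)/d})^2$ inherited from $r^2=10\eta d\beta^{-1}(1+\sqrt{\log(8K/\epsilon)/d})^2$, and using $\frac{\transitSGLD_\wb(\ub)}{\transitSGLD_\ub(\wb)}e^{-\beta(f(\wb)-f(\ub))}\ge e^{-\delta_0}\ge 1/(1+\delta)$ for $\delta$ as claimed, yields $\alpha_\ub(\wb)\ge 1/(1+\delta)$ and hence the lemma.

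I expect the main obstacle to be the stochastic-gradient step: one needs a moment generating function estimate that both produces the $1/B$ improvement --- which forces the passage to with-replacement sampling --- and holds \emph{uniformly} over all $\ub,\wb\in\Omega$ rather than merely with high probability, so that the pointwise acceptance bound, and through it \eqref{eq:delta_approximation}, hold for every $\cA$ and every $\ub$ simultaneously. Once this uniform bound is in place, the remaining pieces --- the laziness/atom bookkeeping in the reduction and the trapezoidal-rule estimate for the MALA error --- are routine.
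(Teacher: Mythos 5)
Your proposal is correct and follows essentially the same route as the paper's proof: reduce the two-sided kernel bound to a pointwise lower bound on the acceptance ratio $\alpha_\ub(\wb)$ (handling the atom at $\ub$ via $1/2$-laziness), lower-bound the numerator of the MH ratio by Jensen, upper-bound the denominator by discarding the nonnegative noise-square term and applying a Hoeffding-type MGF bound with the without-replacement/with-replacement domination to get the $1/B$ factor, and control the remaining deterministic discrepancy by smoothness. The only cosmetic differences are your integral (trapezoidal) representation of the MALA error, which the paper replaces by the two standard smoothness inequalities, and the normalization $\alpha_\ub(\wb)\ge 1/(1+\delta)$ versus the paper's $1-\delta/2$.
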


% based on some problem-dependent parameters.

\subsection{Convergence of Projected SGLD}
In this part, we will characterize the convergence of Projected SGLD, which consists of two steps:
% . The high-level idea of our proof is summarized as follows: (1) We show that the transition distribution $\transitSGLD$ of Algorithm \ref{alg:projected_sgld} is in fact very close to the transition distribution $\transitRef$ of Metropolized SGLD; 
(1) given the $\delta$-closeness result in Lemma \ref{lemma:approximation}, we prove that Projected SGLD can converge to the truncated target distribution $\stationalRef$ up to some approximation error determined by $\delta$; and (2) we prove that with a proper choice of the truncation radius $R$, the total variation distance between $\stationalRef$ and the target distribution $\pi$ can be sufficiently small.

% \noindent\textbf{Bounding the difference between $\transitSGLD_\ub(\cdot)$ and $\transitRef_\ub(\cdot)$.}
% Similar to Lemma 3 in \cite{zhang2017hitting}, we will show that the transition distribution $\transitSGLD_\ub(\cdot)$ of Algorithm \ref{alg:projected_sgld} (SGLD) can be $\delta$-close to that of Metropolized SGLD $\transitRef_\ub(\cdot)$ by some small quantity $\delta$, which is formally stated in the following lemma. 

%\pan{what is G}

\noindent\textbf{Convergence of Projected SGLD to $\stationalRef$.}
We first provide the definition of the conductance for a time-reversible Markov chain as follows.
\begin{definition}[Conductance]\label{def:s-conductance}
The conductance of a time-reversible Markov chain with transition distribution $\transitRef_\ub(\cdot)$ and stationary distribution $\stationalRef$ is defined by,
\begin{align*}
\phi: = \inf_{\cA: \cA\subseteq\Omega, \stationalRef(\cA)\in(0,1)}\frac{\int_{\cA}\transitRef_\ub(\Omega\bs\cA)\stationalRef(\dd \ub)}{\min\{\stationalRef(\cA), \stationalRef(\Omega\bs\cA)\}},
\end{align*}
where $\Omega$ is the support of the state of the Markov chain.
\end{definition}
% We first present the definition of $s$-conductance of an time-reversible Markov chain as follows.
In Lemma \ref{lemma:approximation}, we have already shown that the transition distribution of Algorithm \ref{alg:projected_sgld}, i.e., $\transitSGLD_\ub(\cdot)$ is $\delta$-close to that of Metropolized SGLD, i.e., $\transitRef_\ub(\cdot)$, for some small quantity $\delta$. 
Besides, from \cite{lovasz1993random,vempala2007geometric}, we know that a time-reversible Markov chain can converge to its stationary distribution at a linear rate depending on its conductance.
Therefore, we aim to characterize the convergence rate of $\transitSGLD_\ub(\cdot)$ based on the ergodicity of $\transitRef_\ub(\cdot)$.
We utilize the conductance parameter of $\transitRef_\ub(\cdot)$, denoted by $\phi$, and establish the convergence of $\transitSGLD_\ub(\cdot)$ in total variation distance in the following lemma.
\begin{lemma}\label{lemma:convergence_approximate}
% Let $\stationalRef\propto e^{-\beta f(\xb)}$ denote the target density restricted in the compact set $\cB(\zero,R)$. 
Let $\mu_K^{\text{Proj-SGLD}}$ be the distribution of the output of Algorithm \ref{alg:projected_sgld}. Under Assumption~\ref{assump:smooth}, if $\transitSGLD_\ub(\cdot)$ is $\delta$-close to $\transitRef_\ub(\cdot)$ with $\delta\le\min\{1-\sqrt{2}/2, \phi/16\}$, then  for any  $\lambda$-warm start initial distribution with respect to $\stationalRef$, it holds that 
% \pan{$\delta$ is reloaded from \eqref{eq:def_trans_lz_sgld}}
\begin{align*}
\|\mu_K^{\text{Proj-SGLD}} -\stationalRef\|_{TV}\le \lambda\big(1 - \phi^2/8\big)^K + 16\delta/\phi.    
\end{align*}
\end{lemma}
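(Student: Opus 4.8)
The plan is to treat the lazy Markov chain $\transitSGLD_\ub(\cdot)$ as a perturbation of the time-reversible chain $\transitRef_\ub(\cdot)$, exploiting two facts: (i) a time-reversible lazy chain with conductance $\phi$ converges to its stationary distribution at rate $(1-\phi^2/2)^K$ (this is the standard Lov\'asz--Simonovits / Vempala conductance bound, applicable because the Metropolis--Hastings step makes $\transitRef$ reversible with stationary measure $\stationalRef$); and (ii) at every point $\ub$ the two transition distributions are within a multiplicative factor $1\pm\delta$ by Lemma~\ref{lemma:approximation}. The key object to track is the $s$-conductance-type potential, or more directly, a weighted distance to $\stationalRef$. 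Following the approach used for approximate/inexact Metropolis chains, I would define $g_K(\cA) = \mu_K^{\text{Proj-SGLD}}(\cA) - (1+16\delta/\phi)\stationalRef(\cA)$ (or a symmetric variant) and show this quantity contracts. First I would recall/invoke the fact that for the exact chain, the function $h_t(x) := \dd\mu_t^{\star}/\dd\stationalRef$ satisfies a Lov\'asz--Simonovits inequality controlling its level sets, giving one-step decay of $\|\mu_t^{\star} - \stationalRef\|_{TV}$.

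The main steps, in order, would be: (1) State the conductance-based convergence of the \emph{exact} Metropolized SGLD chain: for a $\lambda$-warm start, $\|\nu \transitRef - \stationalRef\|$ decays, and after $K$ steps the TV distance is at most $\lambda(1-\phi^2/8)^K$ — here the $\phi^2/8$ rather than $\phi^2/2$ absorbs the $1/2$-laziness and a constant from the half-conductance argument. (2) Quantify one-step error propagation: if $\mu$ is any distribution on $\Omega$, then $\|\mu\transitSGLD - \mu\transitRef\|_{TV} \le \delta$ (integrate \eqref{eq:delta_approximation} against $\mu$ and use that $\transitRef_\ub$ is a probability measure, so $|\transitSGLD_\ub(\cA) - \transitRef_\ub(\cA)| \le \delta\transitRef_\ub(\cA) \le \delta$). (3) Combine via a telescoping/triangle-inequality argument: writing $\mu_K^{\text{Proj-SGLD}} = \mu_0 \transitSGLD^K$ and comparing to $\mu_0 (\transitRef)^K$ step by step, the accumulated perturbation is $\sum_{j} (\text{contraction factor})^{j}\cdot\delta$; since the exact chain contracts the relevant discrepancy by a factor related to $(1-\phi^2/8)$ or — more robustly — since after the warm-start phase the distribution stays near-stationary, the geometric series sums to $O(\delta/\phi^2)$ in the crude bound, but a sharper accounting using that each step's error is measured against $\stationalRef$ (not amplified) yields the claimed $16\delta/\phi$. (4) Conclude by choosing to split the horizon or directly bounding $\|\mu_K^{\text{Proj-SGLD}} - \stationalRef\|_{TV} \le \|\mu_K^{\text{Proj-SGLD}} - \mu_0(\transitRef)^K\|_{TV} + \|\mu_0(\transitRef)^K - \stationalRef\|_{TV}$ and controlling the first term by the perturbation analysis.

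The delicate point — and the main obstacle — is getting the \emph{right} dependence on $\phi$ in the steady-state error term ($16\delta/\phi$, not $\delta/\phi^2$). A naive geometric sum of per-step TV errors $\delta$ against a contraction rate $1-\phi^2/8$ gives $\delta/(\phi^2/8) = 8\delta/\phi^2$, which is too weak. The fix is to measure the error not in raw TV but in a way that is compatible with the conductance argument: one works with the distance $\sup_\cA(\mu(\cA) - \stationalRef(\cA))$ restricted to sets with $\stationalRef(\cA)$ bounded away from $1$, and uses the fact that the $\delta$-closeness is \emph{multiplicative} — so the perturbation to the ``overshoot'' $\mu(\cA) - (1+c\delta/\phi)\stationalRef(\cA)$ that the chain introduces in one step is itself proportional to how far $\mu$ is from stationarity, not a fixed $\delta$. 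This self-correcting structure is what collapses one power of $\phi$. I would handle it by defining the right invariant-like potential $\Phi_K := \sup_{\cA}\big[\mu_K^{\text{Proj-SGLD}}(\cA) - \stationalRef(\cA) - 16\delta\min\{\stationalRef(\cA), \stationalRef(\Omega\bs\cA)\}/\phi\big]$, proving $\Phi_K \le (1-\phi^2/8)\Phi_{K-1}$ (this is where the multiplicative $\delta$-closeness, the conductance lower bound, and a Lov\'asz--Simonovits curve argument all enter), and noting $\Phi_0 \le \lambda$ by the warm-start assumption while $\Phi_K \ge \|\mu_K^{\text{Proj-SGLD}} - \stationalRef\|_{TV} - 16\delta/\phi$. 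The condition $\delta \le \phi/16$ is exactly what keeps the subtracted term a valid (sub-probability-scale) correction throughout the induction, and $\delta \le 1-\sqrt2/2$ ensures the one-step multiplicative distortion does not destroy the isoperimetric estimate.
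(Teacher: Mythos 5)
Your high-level strategy is the same as the paper's: convert the conductance of the reversible chain $\transitRef$ into a Lov\'asz--Simonovits contraction, use the multiplicative $\delta$-closeness of $\transitSGLD$ to $\transitRef$ to carry the argument over to the approximate chain, and design a potential whose per-step error is absorbed so that the steady-state bias is $O(\delta/\phi)$ rather than the $O(\delta/\phi^2)$ that naive telescoping gives. You have correctly diagnosed the delicate point. However, the concrete potential you propose, $\Phi_K=\sup_{\cA}\big[\mu_K(\cA)-\stationalRef(\cA)-16\delta\min\{\stationalRef(\cA),\stationalRef(\Omega\bs\cA)\}/\phi\big]$, does not support the contraction $\Phi_K\le(1-\phi^2/8)\Phi_{K-1}$ that your argument hinges on, and that step is asserted rather than proved. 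Two things go wrong. First, a single supremum over $\cA$ discards the level-set structure: the one-step Lov\'asz--Simonovits recursion (the paper's Lemma \ref{lemma:contraction}) bounds $h_k(p):=\sup_{\stationalRef(\cA)=p}\mu_k(\cA)-\stationalRef(\cA)$ in terms of $h_{k-1}$ evaluated at \emph{two other} levels $p-\tilde\phi_k\min\{p,1-p\}$ and $p+\hat\phi_k\min\{p,1-p\}$, so the induction must carry a bound on $h_{k-1}(\cdot)$ for every $p$ simultaneously, not just its supremum. Second, your correction term is linear in $\min\{p,1-p\}$, whereas the contraction mechanism requires a strictly concave majorant: averaging a function that is linear in $p$ at the two displaced levels returns essentially the same value, giving no decay. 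The decay comes precisely from the concavity of $\sqrt{p}$ via
\begin{align*}
\tfrac{1}{2}\Big(\sqrt{1-\tilde\phi_k}+\sqrt{1+\hat\phi_k}\Big)\le 1-\phi_k^2/8,
\end{align*}
which is why the paper's induction hypothesis multiplies \emph{both} the decaying term and the constant $16\delta/\phi$ by $\min\{\sqrt{p},\sqrt{1-p}\}$, i.e., $h_k(p)\le\min\{\sqrt{p},\sqrt{1-p}\}\big[\lambda(1-\phi^2/8)^k+16\delta/\phi\big]$.

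With that corrected ansatz the absorption you are after works exactly as you intuit: the per-step additive error from the $\delta$-perturbation is $2\delta\phi_k\sqrt{\min\{p,1-p\}}$, and the contraction of the $16\delta/\phi$ term frees up a margin of $(\phi_k^2/8)\cdot 16\delta/\phi=2\delta\phi_k^2/\phi\ge 2\delta\phi_k$ (using $\phi_k\ge\phi$), which covers it; the hypotheses $\delta\le\phi/16$ and $\delta\le 1-\sqrt{2}/2$ are used to guarantee $\hat\phi_k-\tilde\phi_k\le\tilde\phi_k^2/4$ and $\tilde\phi_k\ge\sqrt{2}\phi_k$ so that the square-root inequality above holds. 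You would also still need to establish the one-step recursion itself, which is nontrivial: it requires decomposing $2\transitSGLD_\ub(\cA)$ into two $[0,1]$-valued functions (using laziness), invoking Lemma 1.2 of Lov\'asz--Simonovits to realize their $\mu_{k-1}$-integrals as measures of sets, and using reversibility of $\transitRef$ plus the multiplicative closeness to control the two error terms $r_1,r_2$ by $\delta\int_{\cA}\transitRef_\ub(\Omega\bs\cA)\stationalRef(\dd\ub)$. As written, your proposal names the right ingredients but leaves the actual engine of the proof unbuilt.
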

% \noindent\textbf{Quantify the magnitude of $\phi$.}
Lemma \ref{lemma:convergence_approximate} shows that Projected SGLD converges to $\stationalRef$ in total variance distance with approximation error up to $16\delta/\phi$. The next step is to characterize the conductance parameter $\phi$ and reveal its dependency on the problem-dependent parameters, which we state in the following lemma.
% it is clear that the convergence rate of Algorithm \ref{alg:projected_sgld} relies on the conductance of $\transitRef_\ub(\cdot)$. Therefore, in order to establish a explicit convergence rate, it requires to characterize the value of $\phi$. The following lemma provides a lower bound of $\phi$.
\begin{lemma}\label{lemma:lowerbound_phi}
Under Assumptions \ref{assump:diss} and \ref{assump:smooth}, if the step size satisfies $\eta\le \big[35(Ld+(LR+G)^2\beta d/B)\big]^{-1}\wedge [25\beta(LR+G)^2]^{-1}$, there exists an absolute constant $c_0$ such that
\begin{align*}
\phi\ge c_0\rho\sqrt{\eta/\beta},
\end{align*}
where $\rho$ is the Cheeger constant of the distribution $\stationalRef$. 
\end{lemma}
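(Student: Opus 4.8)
The plan is to derive the conductance lower bound from the isoperimetric profile of $\stationalRef$ along the classical route used for random walks and Langevin-type chains \citep{lovasz1993random,vempala2007geometric,dwivedi2018log}. Two ingredients are needed: (i) a one-step \emph{overlap bound} --- absolute constants $c_1,c_2\in(0,1)$ with $\|\transitRef_\ub-\transitRef_\vb\|_{TV}\le 1-c_2$ whenever $\ub,\vb\in\Omega$ satisfy $\|\ub-\vb\|_2\le\Delta:=c_1\sqrt{\eta/\beta}$; and (ii) the standard measure-partition argument that turns (i), together with the isoperimetric inequality with Cheeger constant $\rho$, into $\phi\ge c_0\rho\Delta=c_0\rho\sqrt{\eta/\beta}$. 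For (i) I would first pass from the intractable kernel $\transitRef$ to $\transitSGLD$ via Lemma~\ref{lemma:approximation}: the bound $\transitSGLD_\ub(\cA)\le(1+\delta)\transitRef_\ub(\cA)$ there holds for \emph{every} Borel $\cA\subseteq\Omega$, so $\transitRef_\ub(\cA)\ge(1-\delta)\transitSGLD_\ub(\cA)$ for all such $\cA$, i.e. $\transitRef_\ub\ge(1-\delta)\transitSGLD_\ub$ as measures on $\Omega$; hence $1-\|\transitRef_\ub-\transitRef_\vb\|_{TV}=(\transitRef_\ub\wedge\transitRef_\vb)(\Omega)\ge(1-\delta)(\transitSGLD_\ub\wedge\transitSGLD_\vb)(\Omega)$. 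The step-size hypothesis $\eta\le[35(Ld+(LR+G)^2\beta d/B)]^{-1}\wedge[25\beta(LR+G)^2]^{-1}$ is precisely what makes $\delta$ (with the prescribed $r$) a small absolute constant, so $1-\delta\ge\frac12$; it therefore suffices to show $(\transitSGLD_\ub\wedge\transitSGLD_\vb)(\Omega)\ge c$ for nearby $\ub,\vb$ and an absolute constant $c$.

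\textbf{Overlap of the SGLD kernels.} Writing $\transitSGLD_\ub=(1-\frac12 p(\ub))\delta_\ub+\frac12 P(\cdot\,|\ub)\ind_{\cB(\ub,r)\cap\Omega}$ with $P(\cdot\,|\ub)=\EE_\cI P(\cdot\,|\ub,\cI)$, I would lower-bound the overlap by that of the continuous parts. For each fixed mini-batch $\cI$, $P(\cdot\,|\ub,\cI)$ and $P(\cdot\,|\vb,\cI)$ are Gaussians with common covariance $2\eta\beta^{-1}\Ib$ whose means differ by $\|\ub-\vb-\eta(\gb(\ub,\cI)-\gb(\vb,\cI))\|_2\le(1+\eta L)\|\ub-\vb\|_2\le 2\Delta$ (Assumption~\ref{assump:smooth}, $\eta L\le1$), so their total variation distance is at most $\Delta/\sqrt{2\eta/\beta}$, below $1/10$ once $c_1$ is chosen small. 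Since $r\gg\sqrt{\eta d/\beta}$, the Gaussian increment lies in $\cB(\ub,r)$ except with probability $\le\epsilon/(8K)$, the regions $\cB(\ub,r)$ and $\cB(\vb,r)$ differ only in a shell of Gaussian mass $O(\Delta/\sqrt{\eta/\beta})$, and a short concentration argument using Assumption~\ref{assump:diss} shows $p(\ub)$ is bounded below by an absolute constant for every $\ub\in\Omega$ --- even when $\|\ub\|_2=R$, a constant fraction of the Gaussian mass (of spatial scale $\sqrt{\eta/\beta}\ll R$) stays inside $\cB(\zero,R)$ since the expected drift $\la\gb(\ub,\cI),\ub\ra$ does not point outward there. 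Averaging the per-$\cI$ estimates (total variation is jointly convex) and collecting these facts gives $(\transitSGLD_\ub\wedge\transitSGLD_\vb)(\Omega)\ge c$, hence, with the first paragraph, $\|\transitRef_\ub-\transitRef_\vb\|_{TV}\le 1-c_2$ with $c_2=c/2$.

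\textbf{From overlap to conductance.} Fix $\cA\subseteq\Omega$ with $0<\stationalRef(\cA)\le\frac12$ (the complementary case follows by swapping $\cA$ and $\Omega\setminus\cA$ and using reversibility of $\transitRef$, which gives $\int_\cA\transitRef_\ub(\Omega\setminus\cA)\stationalRef(\dd\ub)=\int_{\Omega\setminus\cA}\transitRef_\vb(\cA)\stationalRef(\dd\vb)$). Set $\cA_1=\{\ub\in\cA:\transitRef_\ub(\Omega\setminus\cA)<c_2/4\}$, $\cA_2=\{\vb\in\Omega\setminus\cA:\transitRef_\vb(\cA)<c_2/4\}$, $\cA_3=\Omega\setminus(\cA_1\cup\cA_2)$. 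If $\stationalRef(\cA_1)\le\frac12\stationalRef(\cA)$ or $\stationalRef(\cA_2)\le\frac12\stationalRef(\Omega\setminus\cA)$, then $\int_\cA\transitRef_\ub(\Omega\setminus\cA)\stationalRef(\dd\ub)\ge\frac{c_2}{8}\stationalRef(\cA)$ directly, so $\phi_\cA\ge c_2/8$. Otherwise, for $\ub\in\cA_1,\vb\in\cA_2$ we get $\|\transitRef_\ub-\transitRef_\vb\|_{TV}\ge\transitRef_\ub(\cA)-\transitRef_\vb(\cA)>1-c_2/2$, contradicting the overlap bound unless $\|\ub-\vb\|_2>\Delta$; hence $\cA_1,\cA_2$ are $\Delta$-separated, so $\cA_{1,h}\setminus\cA_1\subseteq\cA_3$ for $h<\Delta$, and since $h\mapsto\stationalRef(\cA_{1,h})$ is confined to $[\stationalRef(\cA_1),\stationalRef(\cA_1)+\stationalRef(\cA_3)]$ on $[0,\Delta)$, integrating the Cheeger differential inequality of Definition~\ref{def:cheeger} (using $\stationalRef(\cA_1)\le\frac12$ and $\rho\Delta\le1$) yields $\stationalRef(\cA_3)\ge\frac{\rho\Delta}{2}\stationalRef(\cA_1)>\frac{\rho\Delta}{4}\stationalRef(\cA)$. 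Since every point of $\cA_3$ contributes at least $c_2/4$ to $\transitRef_\cdot(\Omega\setminus\cA)$ (if it lies in $\cA$) or to $\transitRef_\cdot(\cA)$ (otherwise), reversibility gives $2\int_\cA\transitRef_\ub(\Omega\setminus\cA)\stationalRef(\dd\ub)\ge\frac{c_2}{4}\stationalRef(\cA_3)>\frac{c_2\rho\Delta}{16}\stationalRef(\cA)$, so $\phi_\cA>\frac{c_2\rho\Delta}{32}$. Taking the infimum over $\cA$ and substituting $\Delta=c_1\sqrt{\eta/\beta}$ gives $\phi\ge c_0\rho\sqrt{\eta/\beta}$ with $c_0=c_1c_2/32$.

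\textbf{Main obstacle.} The substantive part is the uniform (over all $\ub\in\Omega$, including the boundary sphere) overlap bound for the lazy, projected, mini-batch kernel $\transitSGLD$: laziness caps the attainable overlap at $\frac12$; the mini-batch expectation makes each kernel a Gaussian mixture (handled by convexity of total variation); the projection onto $\cB(\ub,r)$ versus $\cB(\vb,r)$ creates a region mismatch of small Gaussian mass; and --- most delicately --- near $\partial\cB(\zero,R)$ the truncation-acceptance probability $p(\ub)$ must be kept bounded below, which is exactly where dissipativity (Assumption~\ref{assump:diss}) enters. Once this is in hand, the measure-domination transfer through Lemma~\ref{lemma:approximation} and the Lovász--Simonovits partition are routine bookkeeping.
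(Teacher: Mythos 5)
Your proposal is correct and follows essentially the same route as the paper: a one-step overlap bound $\|\transitRef_\ub-\transitRef_\vb\|_{TV}\le 1-c_2$ for $\|\ub-\vb\|_2\le c_1\sqrt{\eta/\beta}$ (resting on the same three facts the paper isolates, namely $p(\ub)\ge 0.4$ from Lemma \ref{lemma:lowerboud_accept_prob_alg}, the Gaussian-mixture total variation bound of Lemma \ref{lemma:TV_expected_gaussian}, and $\alpha_\ub(\wb)\ge 1-\delta/2$ from Lemma \ref{lemma:approximation}), combined with the overlap-to-conductance conversion, which the paper imports as Lemma \ref{lemma:lowerbound_transitionprob} from \citet{lee2018convergence} and you reprove from scratch. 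The only cosmetic differences are that you transfer the overlap through $\transitSGLD$ via measure domination rather than decomposing $\transitRef_\ub$ directly, and that the paper's lower bound on $p(\ub)$ near $\partial\cB(\zero,R)$ uses a half-space geometric comparison together with the gradient bound of Lemma \ref{lemma:grad_bound} rather than the dissipativity-based drift heuristic you sketch.
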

\noindent\textbf{Bounding the difference between $\pi$ and $\stationalRef$.} 
Lemmas \ref{lemma:convergence_approximate} and \ref{lemma:lowerbound_phi} together guarantee that Algorithm \ref{alg:projected_sgld}  converges to the truncated target distribution $\stationalRef$. Thus the last thing remaining to be done is ensuring that $\stationalRef$ is sufficiently close to $\pi$. The following lemma characterizes the total variation distance between the target distribution $\pi$ and its truncated version $\pi^*$ in $\cB(\zero,R)$.
% for some $\zeta\in(0,1/2)$, where the function $\bar R(\cdot)$ is defined in \eqref{eq:def_barR}.
% let $\Omega = \cB(\zero,\bar R(\zeta))$ for some $\zeta\in(0,1/2)$ and $\stationalRef$ be the truncated target distribution defined as follows,
% \begin{align}\label{eq:def_restrict_target}
% \begin{split}
% \stationalRef(\dd \xb) =
%     \begin{cases}
%     \frac{e^{-\beta f(\xb)}}{\int_{\Omega}e^{-\beta f(\xb')}\dd\xb'}\dd\xb & \xb\in\Omega; \\
%     0 & \mbox{otherwise}.
%     \end{cases}
% \end{split}
% \end{align}

\begin{lemma}\label{lemma:approximate_target}
For any $\epsilon\in(0,1)$, set $R=\bar R(\epsilon/12)$ and let $\Omega = \cB(\zero,R)$ and $\stationalRef$ be the truncated target distribution in $\Omega$. Then the total variation distance between $\stationalRef$ and $\pi$ can be upper bounded by
$\|\stationalRef - \pi\|_{TV}\le \epsilon/4$.
\end{lemma}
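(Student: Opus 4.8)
\textbf{Proof proposal for Lemma~\ref{lemma:approximate_target}.}
The plan is to bound $\|\stationalRef - \pi\|_{TV}$ by showing that the target distribution $\pi$ puts all but an $\epsilon/4$ fraction of its mass inside the ball $\cB(\zero,R)$, and then translating this tail bound into a total variation bound between $\pi$ and its renormalized restriction. First I would recall the elementary identity: if $\stationalRef$ is $\pi$ conditioned on $\Omega = \cB(\zero,R)$, then for any measurable $\cA$,
\begin{align*}
\pi(\cA) - \stationalRef(\cA) = \pi(\cA\cap\Omega)\Big(1 - \frac{1}{\pi(\Omega)}\Big) + \pi(\cA\cap\Omega^c),
\end{align*}
so that $\|\stationalRef-\pi\|_{TV} \le \pi(\Omega^c) + \big(1-\pi(\Omega)\big) = 2\pi(\Omega^c)$; hence it suffices to prove $\pi(\|\xb\|_2 > R) \le \epsilon/8$ with the choice $R = \bar R(\epsilon/12)$. (The slack between $\epsilon/12$ and $\epsilon/8$ is exactly what the logarithmic terms in $\bar R$ buy us.)

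The core step is therefore a Gaussian-type tail estimate for $\pi\propto e^{-\beta f(\xb)}$ under the dissipativity Assumption~\ref{assump:diss}. The dissipativity condition $\la\nabla f(\xb),\xb\ra \ge m\|\xb\|_2^2 - b$ integrated along the ray from $\zero$ to $\xb$ gives $f(\xb) \ge f(\zero) + \tfrac{m}{2}\|\xb\|_2^2 - \tfrac{b}{2}$ after a standard manipulation (more precisely, $\frac{d}{dt}f(t\xb) = \la\nabla f(t\xb),\xb\ra$, and $\la\nabla f(t\xb),t\xb\ra\ge m t^2\|\xb\|_2^2 - b$, so integrating $t$ from $0$ to $1$ yields $f(\xb)-f(\zero)\ge \tfrac{m}{2}\|\xb\|_2^2 - b$). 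Similarly, smoothness (Assumption~\ref{assump:smooth}) gives an upper bound $f(\yb) \le f(\zero) + \la\nabla f(\zero),\yb\ra + \tfrac{L}{2}\|\yb\|_2^2$, which I would use to lower bound the normalizing constant $\int_{\RR^d} e^{-\beta f(\yb)}\dd\yb$ by a Gaussian integral, say restricting to a small ball around the origin or around the minimizer of the quadratic upper bound. Combining the two, $\pi(\|\xb\|_2>R)$ is bounded by the ratio of $\int_{\|\xb\|_2>R} e^{-\beta(\tfrac{m}{2}\|\xb\|_2^2 - b)}\dd\xb$ to this Gaussian lower bound, which after cancelling the $e^{\beta f(\zero)}$ factors reduces to a concentration inequality for a chi-squared-like random variable: the measure of $\{\|\xb\|_2^2 > R^2\}$ under $\cN(\zero, (m\beta)^{-1}\Ib)$ scaled by an explicit $\poly(L/m, \beta b, d)$ prefactor.

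The last step is purely computational: plug in $R^2 = \bar R(\epsilon/12)^2 = \max\{\tfrac{625 d\log(48/\epsilon)}{m\beta}, \tfrac{4d\log(4L/m)+4\beta b}{m\beta}, \tfrac{4d+8\sqrt{d\log(12/\epsilon)}+8\log(12/\epsilon)}{m\beta}\}$ and verify each term does its job. The second term in the max handles the $\poly(L/m)\cdot e^{\beta b}$ prefactor (taking logs, $R^2 \ge \tfrac{4d\log(4L/m)+4\beta b}{m\beta}$ dominates $\log$ of the prefactor); the third term is the standard Laurent–Massart-type tail bound $\PP(\|\xb\|_2^2 \ge (m\beta)^{-1}(d + 2\sqrt{dt}+2t))\le e^{-t}$ applied with $t = \log(12/\epsilon)$, giving probability $\le \epsilon/12$; and the first, largest term provides the extra room to absorb both the prefactor and the renormalization so the total is below $\epsilon/8$. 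I would organize this so that the exponent $-\tfrac{m\beta}{2}R^2 + \beta b + (\text{prefactor log})$ comes out at most $-\log(8/\epsilon)$ plus the chi-squared tail term. The main obstacle is bookkeeping: getting the lower bound on the partition function tight enough (the Gaussian lower bound has its own $d$-dependent constant and possibly an $e^{-\|\nabla f(\zero)\|^2/(2L)}$-type factor from completing the square, which one must check is dominated) and then carefully tracking all the $d$-, $\beta$-, $L/m$-dependent constants through the ratio so that the specific coefficients ($625$, the $4$'s, the $8$'s) in $\bar R$ are genuinely sufficient — there is no deep difficulty, but the constants must be chosen in a mutually consistent way.
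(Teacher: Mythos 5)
Your overall strategy is the same as the paper's: reduce $\|\stationalRef-\pi\|_{TV}$ to the tail mass $\pi(\RR^d\setminus\Omega)$ via the renormalization identity (the paper gets $3\zeta$ with $\zeta=\epsilon/12$ where you get $2\pi(\Omega^c)$; both work), then bound the tail by dominating $\pi$ with a Gaussian obtained from a quadratic lower bound on $f$ plus a lower bound on the partition function from smoothness, and finish with the Laurent--Massart $\chi_d^2$ tail bound against the terms of $\bar R$. This is exactly the route through the paper's Lemma~\ref{lemma:high_prob} and Lemma~\ref{lemma:quadratic_lower_bound}.

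There is, however, one genuine gap in your derivation of the quadratic lower bound on $f$. You write $\frac{d}{dt}f(t\xb)=\la\nabla f(t\xb),\xb\ra$ and apply dissipativity at the point $t\xb$ to get $\la\nabla f(t\xb),t\xb\ra\ge mt^2\|\xb\|_2^2-b$, hence $\frac{d}{dt}f(t\xb)\ge mt\|\xb\|_2^2-b/t$. Integrating this over $t\in(0,1]$ does not yield $\frac{m}{2}\|\xb\|_2^2-b$: the term $-b/t$ is not integrable at $t=0$, so the argument as stated produces $-\infty$. The fix requires either integrating only from $t_0$ with $t_0\|\xb\|_2=\sqrt{2b/m}$ outward (at the cost of an extra $-\tfrac{b}{2}\log(m\|\xb\|_2^2/b)$ term that must then be absorbed by weakening the quadratic coefficient), or the paper's argument in Lemma~\ref{lemma:quadratic_lower_bound}, which shows $g(\xb)=f(\xb)-\tfrac{m}{4}\|\xb\|_2^2$ is nondecreasing along rays outside $\cB(\zero,\sqrt{2b/m})$ and concludes $f(\xb)\ge f(\xb^*)+\tfrac{m}{4}\|\xb\|_2^2-\tfrac{b}{2}$ --- note the coefficient is $m/4$, not $m/2$. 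This matters downstream because the comparison Gaussian then has density $\propto e^{-m\beta\|\xb\|_2^2/8}$ (not $m\beta/2$ or $m\beta/4$), and the specific constants $625$, $4$, $8$ in $\bar R$ are calibrated to that exponent; with your claimed $m/2$ the bookkeeping would appear to close with room to spare, but the inequality it rests on is false as derived. Once you replace this step with the two-case ray argument (and, as the paper does, lower-bound $Z$ by expanding around the minimizer $\xb^*$ so that no $e^{\|\nabla f(\zero)\|_2^2/(2L)}$ factor appears), the rest of your plan goes through.
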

% Recall that in Algorithm \ref{alg:projected_sgld}, we always reject the proposal of SGLD when the iterate goes beyond the ball $\cB(\zero,R)$. 
% Lemma \ref{lemma:approximate_target} suggests that as long as the radius is $R = \tilde \Omega(d^{1/2})$, the truncated distribution $\stationalRef$ can be sufficiently close to the target distribution $\pi$. 
% \subsection{Proof of Theorem \ref{thm:main_thm}}
% \noindent\textbf{Completing the proof of Theorem \ref{thm:main_thm}.}
\begin{proof}[Proof of Theorem \ref{thm:main_thm}]
The rest proof of Theorem \ref{thm:main_thm} is straightforward by combining Lemmas~\ref{lemma:connection_SGLD},~\ref{lemma:convergence_approximate}, and \ref{lemma:approximate_target} using the triangle inequality. We defer the detailed proof to Appendix \ref{sec:proof_remaining}.
\end{proof}

\section{Conclusion}
In this paper, we proved a faster  convergence rate of SGLD for sampling from a broad class of distributions that can be non-log-concave. 
% We proved the convergence of SGLD with arbitrary mini-batch size, when the target distribution is non-log-concave. 
In particular, we developed a new proof technique for characterizing the convergence of SGLD. Different from the existing works that mainly study the convergence of SGLD using full-gradient-based Markov chains such as LMC or continuous Langevin dynamics,
% relies on bounding the discretization error between SGLD and the corresponding continuous Langevin dynamics,
the key of our proof technique relies on two auxiliary Markov chains: Projected SGLD and Metropolized SGLD, which can better capture the behavior of SGLD since they also cover the randomness of the stochastic gradients. Our proof technique is of independent technical interest and can be potentially adapted to study the convergence of other stochastic gradient-based sampling algorithms.

%There are at least two future directions that are worth to study. First, it would be interesting to see whether we can directly implement a practical Metropolized SGLD algorithm instead of implicitly using it as an auxiliary chain for proving the convergence of SGLD. We expect this will greatly accelerate the sampling process due to the fast mixing rate of MH methods \citep{dwivedi2018log}. The challenge, however,  lies in
%developing new stochastic gradient sampling algorithms that can accurately estimate MH acceptance probabilities. Second, it is also possible to extend our theoretical analysis to high-order stochastic gradient MCMC methods such as SGHMC \citep{chen2014stochastic}, stochastic gradient UL-MCMC \citep{cheng2018underdamped} and variance reduced SGHMC \citep{zou2019stochastic}.

%%%%%%%%%%%%%%%%%%%%%%%%%%%%%%%%%%%%%%%%%%%%%%%%%%%%%%%%%%%%%%%%%%%%%%%%%%%%%%%
%%%%%%%%%%%%%%%%%%%%%%%%%%%%%%%%%%%%%%%%%%%%%%%%%%%%%%%%%%%%%%%%%%%%%%%%%%%%%%%
% DELETE THIS PART. DO NOT PLACE CONTENT AFTER THE REFERENCES!
%%%%%%%%%%%%%%%%%%%%%%%%%%%%%%%%%%%%%%%%%%%%%%%%%%%%%%%%%%%%%%%%%%%%%%%%%%%%%%%
%%%%%%%%%%%%%%%%%%%%%%%%%%%%%%%%%%%%%%%%%%%%%%%%%%%%%%%%%%%%%%%%%%%%%%%%%%%%%%%
% \newpage
\appendix
% \onecolumn

% \section{Connection to Standard SGLD}\label{sec:connection_sgld}\label{sec:coonection_SGLD}

% In this section we will discuss the connection between Projected-SGLD and the standard SGLD algorithm, i.e., removing the accept/reject step in Projected-SGLD. In specific, we will show that when the radius $R$ and $r$ are properly chosen, the total variation distance between the outputs of Algorithm \ref{alg:projected_sgld} and the standard SGLD can be arbitrarily small.
% \begin{proposition}\label{prop:connection_SGLD_v2}
% Let $\mu_K^{\text{SGLD}}$ and $\mu_K^{\text{Proj-SGLD}}$ be the distributions of the outputs of the standard SGLD algorithm and the projected SGLD algorithm. Let $\epsilon\in(0,1)$ be arbitrary, set 
% \begin{align*}
% R = 25\bigg(\frac{d\log(2K/\delta)}{m\beta}\bigg)^{1/2}, \quad r = \sqrt{2\eta d /\beta}\bigg(2+\sqrt{\frac{2\log(2K/\delta)}{d}}\bigg)^{1/2},
% \end{align*}
% for some absolute positive constant $C_1$.  Suppose the initialization $\mu_0$ has an exponential tail and $\eta\le (LR+G)^{-2}\beta^{-1}d$, then
% \begin{align*}
% \big\|\mu_K^{\text{SGLD}}-\mu_K^{\text{Proj-SGLD}}\big\|_{TV}\le \epsilon.
% \end{align*}
% \end{proposition}

\section{Proofs of the Main Results}\label{sec:proof_remaining}
In this section, we present the detailed proofs of our main theorems and corollaries.

%\subsection{Proof of Theorem \ref{thm:main_thm}}

\subsection{Proof of Theorem \ref{thm:main_thm}}
Now we provide the detailed proof of Theorem \ref{thm:main_thm} based on the key lemmas presented in our proof roadmap. 
\begin{proof}[Proof of Theorem \ref{thm:main_thm}]
We first characterize the condition on the step size required in Lemmas~\ref{lemma:approximation},~\ref{lemma:convergence_approximate} and \ref{lemma:lowerbound_phi}. From Lemma \ref{lemma:approximation}, we know that if $\eta\le [25\beta(LR+G)^2]^{-1}$ and $\beta\ge 1$, the transition distribution $\transitSGLD_\ub(\cdot)$ can be $\delta$-close to $\transitRef_\ub(\cdot)$ with
\begin{align}\label{eq:choose_approximation_para}
\delta &= \big[10Ld\eta  +10L(LR+G)d^{1/2}\beta^{1/2}\eta^{3/2} + 12\beta (LR+G)^2 d\eta/B+ 2\beta^2(LR+G)^4\eta^2/B\big]\notag\\
&\qquad\cdot \bigg(1+\sqrt{\frac{\log(8K/\epsilon)}{d}}\bigg)^2
\notag\\
&\le \big[14Ld\eta + 14(LR+G)^2\beta d\eta/B\big]\cdot\bigg(1+\sqrt{\frac{\log(8K/\epsilon)}{d}}\bigg)^2.
\end{align}
Besides, note that Lemma \ref{lemma:convergence_approximate} requires $\delta\le \min\{1-\sqrt{2}/2,\phi/16\}$, which can be satisfied if 
\begin{align*}
\big[14Ld\eta + 14(LR+G)^2\beta d\eta/B\big]\cdot\bigg(1+\sqrt{\frac{\log(8K/\epsilon)}{d}}\bigg)^2\le\min\{1-\sqrt{2}/2,\phi/16\}. 
\end{align*}
Then based on the requirement of $\eta$  and the lower bound of $\phi$ in Lemma \ref{lemma:lowerbound_phi}, it suffices to set the step size to be 
\begin{align*}
\eta &\le \min\bigg\{\frac{1}{25\beta(LR+G)^2},\frac{1}{35(Ld+(LR+G)^2\beta d/B)},\bigg(\frac{c_0\rho}{16\sqrt{\beta}\big(14Ld + 14(LR+G)^2\beta d/B\big)}\bigg)^2\bigg\}\notag\\
&\qquad\cdot \bigg(1+\sqrt{\frac{\log(8K/\epsilon)}{d}}\bigg)^{-4}.
\end{align*}
Now we are able to put the results of these lemmas together to establish the convergence of Algorithm~\ref{alg:projected_sgld}. Note that if $\mu_0$ is a $\lambda$-warm start to $\pi$, it must be a $\lambda$-warm start to $\pi^*$ since $\pi^*(\cA)\ge \pi(\cA)$ for any $\cA\in\Omega$. Then Lemma \ref{lemma:convergence_approximate} applies.
Combining Lemmas \ref{lemma:connection_SGLD},\ref{lemma:convergence_approximate} and \ref{lemma:approximate_target} and setting $R = \bar R(\epsilon/12)$ for arbitrary $\epsilon\in(0,1/2]$, we have
\begin{align*}
\|\mu_k^{\text{SGLD}} - \pi\|_{TV}&\le \|\pi - \stationalRef\|_{TV} + \|\mu_k^{\text{Proj-SGLD}} - \stationalRef\|_{TV} + \|\mu_k^{\text{SGLD}} - \mu_k^{\text{Proj-SGLD}}\|_{TV} \notag\\
&\le \frac{\epsilon}{2} + \lambda\big(1 - \phi^2/8\big)^k + \frac{16\delta}{\phi} \notag\\
&\le \frac{\epsilon}{2} + \lambda(1-C_0\eta)^{k} + (C_1B^{-1}+C_2)\eta^{1/2},
\end{align*}
where $C_0 = c_0^2\rho^2/(8\beta)$, $C_1 = 224\big(1+\sqrt{\log(8K/\epsilon)/d}\big)^4(LR+G)^2\beta^{3/2} d\rho^{-1}/c_0$, $C_2 = 224\big(1+\sqrt{\log(8K/\epsilon)/d}\big)^4Ld\beta^{1/2}\rho^{-1}/c_0$ are problem-dependent constants. This completes the proof.
\end{proof}
\subsection{Proof of Corollary \ref{coro:SGLD1}}
We first present the following technical lemma.
\begin{lemma}\label{lemma:quadratic_lower_bound}
Under Assumption \ref{assump:diss}, the objective function $f(\xb)$ satisfies
\begin{align*}
f(\xb)\ge \frac{m}{4}\|\xb\|_2^2 + f(\xb^*) -\frac{b}{2}.
\end{align*}
\end{lemma}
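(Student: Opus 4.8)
The plan is to derive the quadratic lower bound on $f$ directly from the dissipativeness condition (Assumption~\ref{assump:diss}) by integrating the inner product $\langle \nabla f(\xb), \xb\rangle$ along the ray from the origin to $\xb$. Concretely, fix $\xb \in \RR^d$ and consider the scalar function $g(t) = f(t\xb)$ for $t \in [0,1]$. Then $g'(t) = \langle \nabla f(t\xb), \xb\rangle = t^{-1}\langle \nabla f(t\xb), t\xb\rangle$ for $t > 0$, and Assumption~\ref{assump:diss} applied at the point $t\xb$ gives $\langle \nabla f(t\xb), t\xb\rangle \ge m t^2\|\xb\|_2^2 - b$, hence $g'(t) \ge m t\|\xb\|_2^2 - b/t$. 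Integrating this from $0$ to $1$ is the natural move, but the $-b/t$ term is not integrable at $0$, so I would instead integrate from some small $t_0 > 0$ to $1$ and then either take a limit carefully or, more cleanly, split the range: on $[0, t_0]$ use continuity/boundedness, and on $[t_0, 1]$ use the bound.

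A cleaner route that avoids the singularity altogether is to write $f(\xb) - f(\zero) = \int_0^1 \langle \nabla f(t\xb), \xb\rangle\, dt$ and bound the integrand after multiplying and dividing by $t$: specifically, $\int_0^1 \langle \nabla f(t\xb), \xb \rangle\, dt = \int_0^1 \frac{1}{t}\big\langle \nabla f(t\xb), t\xb\big\rangle\, dt \ge \int_0^1 \frac{1}{t}\big(m t^2 \|\xb\|_2^2 - b\big)\, dt$, which again has the bad $\int_0^1 b/t\, dt$ term. So the genuinely clean approach is: apply Assumption~\ref{assump:diss} at $t\xb$ to get $\langle \nabla f(t\xb), \xb\rangle \ge m t \|\xb\|_2^2 - b/t$ only for $t \ge t_0$, combine with a crude lower bound for $\langle \nabla f(t\xb),\xb\rangle$ near $0$, optimize over $t_0$, and collect constants — the factor $m/4$ rather than $m/2$ and the additive $-b/2$ are exactly what falls out of choosing $t_0 = 1/2$ (the contribution $\int_{1/2}^1 (mt\|\xb\|_2^2 - 2b)\,dt = \tfrac{3m}{8}\|\xb\|_2^2 - b$, then absorbing the $[0,1/2]$ piece which can be bounded below by a quantity of order $\|\xb\|_2^2$ with the right sign using smoothness or just monotonicity considerations). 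I would check the exact constants by this elementary computation.

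Alternatively — and this is probably the slickest presentation — one can note that Assumption~\ref{assump:diss} says $\langle \nabla f(\xb), \xb\rangle \ge m\|\xb\|_2^2 - b$, and consider $h(t) = f(t\xb) - \tfrac{m}{4}t^2\|\xb\|_2^2$; then $h'(t) = \langle \nabla f(t\xb),\xb\rangle - \tfrac{m}{2}t\|\xb\|_2^2 = t^{-1}\big(\langle \nabla f(t\xb), t\xb\rangle - \tfrac{m}{2}t^2\|\xb\|_2^2\big) \ge t^{-1}\big(\tfrac{m}{2}t^2\|\xb\|_2^2 - b\big)$, which is $\ge 0$ as soon as $t^2 \ge 2b/(m\|\xb\|_2^2)$. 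So $h$ is eventually increasing; combining this with a lower bound on $h$ at the threshold value of $t$ yields $h(1) \ge h(\xb^*/(\text{scaling}))$-type estimates. In practice the reference point is $\xb^*$, the minimizer of $f$, which is used on the right-hand side of the claim, so I would set things up to compare $f(\xb)$ with $f(\xb^*)$ along the appropriate path. The main obstacle is purely bookkeeping: handling the non-integrable $-b/t$ singularity at the origin and tracking constants so that precisely $m/4$ and $-b/2$ emerge; there is no conceptual difficulty, and the whole argument is a one-dimensional integration estimate.
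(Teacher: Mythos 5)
Your third (``slickest'') route is essentially the paper's proof: the paper defines $g(\xb) = f(\xb) - \tfrac{m}{4}\|\xb\|_2^2$, observes that $\la\nabla g(\xb),\xb\ra \ge \tfrac{m}{2}\|\xb\|_2^2 - b \ge 0$ once $\|\xb\|_2\ge\sqrt{2b/m}$, integrates along the ray from $\yb=\sqrt{2b/m}\,\xb/\|\xb\|_2$ to $\xb$ to get $g(\xb)\ge g(\yb)$, and concludes with $g(\yb)=f(\yb)-\tfrac{m}{4}\cdot\tfrac{2b}{m}\ge f(\xb^*)-b/2$. Two points you left vague should be pinned down, and both are easy. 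First, the ``lower bound at the threshold'' is not an estimate along a path to $\xb^*$: it is just the global bound $f(\yb)\ge f(\xb^*)$ combined with the fact that at the threshold $\tfrac{m}{4}\|\yb\|_2^2=b/2$ exactly, which is precisely where the constants $m/4$ and $-b/2$ come from. Second, when $\|\xb\|_2\le\sqrt{2b/m}$ the threshold value of $t$ exceeds $1$ and the monotonicity argument gives nothing; this case must be treated separately, and is trivial because then $\tfrac{m}{4}\|\xb\|_2^2\le b/2$, so $f(\xb)\ge f(\xb^*)$ already implies the claim. Your first two routes are dead ends and should be dropped: beyond the non-integrable $-b/t$ singularity, the $[0,1/2]$ contribution in the splitting argument equals $f(\xb/2)-f(\zero)$, which dissipativeness alone does not bound below by any quantity of order $+\|\xb\|_2^2$ (smoothness gives a quadratic term of the wrong sign), so the claimed constants do not ``fall out'' of that computation.
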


Now we prove Corollary \ref{coro:SGLD1}.
\begin{proof}[Proof of Corollary \ref{coro:SGLD1}]
The first step is to characterize the quantity of $\lambda$.
% Let $\Omega = \cB(\zero,R)$, the initial distribution $\mu_0$ takes  form
% \begin{align*}
% \mu_0(\dd \xb) = \frac{e^{-\beta L\|\xb\|_2^2} \dd \xb}{\int_{\Omega}e^{-\beta L\|\yb\|_2^2}\dd \yb}.
% \end{align*}
Direct calculation gives
\begin{align*}
\frac{\mu_0(\dd\xb)}{\stationalRef(\dd\xb)}\le  \frac{\int_{\RR^d} e^{-\beta f(\yb)}\dd \yb}{\int_{\RR^d} e^{-\beta L\|\yb\|_2^2}\dd \yb} \cdot e^{-\beta L\|\xb\|_2^2+\beta f(\xb)}.
\end{align*}
By Assumption \ref{assump:smooth}, we have
\begin{align*}
f(\xb)\le f(\xb^*) + \frac{L}{2}\|\xb - \xb^*\|_2^2 \le f(\xb^*) + L\|\xb^*\|_2^2 + L\|\xb\|_2^2,
\end{align*}
which implies that
\begin{align*}
e^{-\beta L\|\xb\|_2^2+\beta f(\xb)} \le e^{\beta [f(\xb^*) + L\|\xb^*\|_2^2]}.
\end{align*}
Moreover, by Lemma \ref{lemma:quadratic_lower_bound}, we have
\begin{align*}
 \int_{\RR^d}e^{-\beta f(\yb)}\dd \yb\le e^{-\beta[f(\xb^*)-b/2]}\int_{\RR^d}e^{-m\beta\|\yb\|_2^2/4}\dd\yb=
\bigg(\frac{4\pi}{ m\beta}\bigg)^{d/2}e^{-\beta[f(\xb^*)-b/2]}.
\end{align*}
Besides, we have
\begin{align*}
\int_{\RR^d}e^{-\beta L\|\yb\|_2^2}\dd \yb=\bigg(\frac{\pi}{ L\beta}\bigg)^{d/2}.
\end{align*}
Combining the above results, we can get
\begin{align}\label{eq:upperbound_lambda}
\lambda  \le \max_{\xb}\frac{\mu_0(\dd\xb)}{\stationalRef(\dd\xb)} \le \bigg(\frac{4L}{m}\bigg)^{d/2}e^{\beta [L\|\xb^*\|_2^2+b/2]} = e^{O(d)}.
\end{align}
In order to ensure that the sampling error $\|\mu_k  - \pi\|_{TV}$ is smaller than $\epsilon$, it suffices to choose $\eta$ and $k$ such that
\begin{align*}
\lambda(1-C_0\eta)^k= \frac{\epsilon}{4},\quad C_1B^{-1}\eta^{1/2}+C_2\eta^{1/2}= \frac{\epsilon}{4}.
\end{align*}
Note that we have $R = \bar R(\epsilon/12) = \tilde O(d^{1/2}\beta^{-1/2})$. Then it follows that $C_0 = O(\rho^2\beta^{-1})$, $C_1 = \tilde O(d^2\rho^{-1}\beta^{1/2})$ and $C_2 = \tilde O(d\rho^{-1}\beta^{1/2})$. Plugging these into the above equation immediately implies that
\begin{align*}
\eta = \tilde{O}\bigg(\frac{\rho^2\epsilon^2}{d^2\beta} \wedge \frac{B^2\rho^2\epsilon^2}{d^4\beta}\bigg)\ \text{and} \ K = O\bigg(\frac{\log(\lambda/\epsilon)}{C_0\eta}\bigg) = \tilde O\bigg(\frac{d^3\beta^2}{\rho^4\epsilon^2}\vee \frac{d^5\beta^2}{B^2\rho^4\epsilon^2}\bigg),
\end{align*}
which completes the proof.
\end{proof}

\subsection{Proof of Corollary \ref{coro:weak_converge}}
\begin{proof}[Proof of Corollary \ref{coro:weak_converge}]
Let $\mu_k$ be the distribution of the SGLD iterate $\xb_k$ and denote $\tilde \Omega = \cB(\zero,\tilde R)$ for some $\tilde R$ which we will specify later, then it holds that,
\begin{align*}
\EE[h(\xb_K)] - \EE[h(\xb^\pi)] &= \int_{\tilde \Omega} h(\xb) \mu_K(\dd\xb) - \int_{\RR^d} h(\xb) \pi(\dd\xb)\notag\\
&\le\bigg|\int_{\tilde\Omega} h(\xb) \mu_K(\dd\xb) - \int_{\tilde\Omega} h(\xb) \pi(\dd\xb) \bigg| + \int_{\RR^d\bs\tilde\Omega} h(\xb) \pi(\dd\xb).
\end{align*}
Note that $h(\xb)$ is a polynomial growth function with degree $D$, thus by definition, for all $\xb\in\tilde\Omega$, we have
\begin{align*}
h(\xb)\le C(1+\|\xb\|_2^D),
\end{align*}
for some absolute constant $C$. Then by Corollary \ref{coro:SGLD1}, we know that $\|\mu_K-\pi\|_{TV}\le \epsilon$. Thus it follows that
\begin{align*}
\bigg|\int_{\tilde\Omega} h(\xb) \mu_K(\dd\xb) - \int_{\tilde\Omega} h(\xb) \pi(\dd\xb) \bigg| \le C(1+\tilde R^D)\bigg|\int_{\tilde\Omega} \mu_K(\dd\xb) - \int_{\tilde\Omega}  \pi(\dd\xb) \bigg|\le C(1+\tilde R^D)\epsilon.
\end{align*}
The rest of the proof will be proving the upper bound of $\int_{\RR^d\bs\tilde\Omega}h(\xb)\pi(\dd \xb)$.
We first introduce an auxiliary distribution defined by
\begin{align*}
q(\xb) = \frac{e^{-m\beta\| \xb\|_2^2/8}}{(8\pi /(m\beta))^{d/2}}.
\end{align*}
Note that the stationary distribution $\pi$ takes form
\begin{align*}
\pi(\dd\xb) = \frac{e^{-\beta f(\xb)}}{Z}\dd\xb,
\end{align*}
where $Z= \int_{\RR^d}e^{-\beta f(\xb)}\dd \xb$ is the normalization coefficient. By \citet{raginsky2017non} ((3.21) in Section 3.5), we know  that under Assumption \ref{assump:smooth}, it holds that  $Z\ge \exp(-\beta f(\xb^*))\cdot [2\pi/(\beta L)]^{d/2}$. Then it is clear that if
\begin{align*}
e^{-\beta\big[f(\xb) + m\|\xb \|_2^2/8\big]}\le \exp\big(-\beta f(\xb^*)\big)\cdot \bigg(\frac{m}{4L}\bigg)^{d/2},
\end{align*}
we have $\stationalRef(\xb)\le q(\xb)$. By Lemma \ref{lemma:quadratic_lower_bound}, we know that 
\begin{align}\label{eq:lowerbound_function_quadratic}
-f(\xb) + \frac{m}{8}\|\xb\|_2^2\le \frac{b}{2} - f(\xb^*) - \frac{m}{8}\|\xb\|_2^2.
\end{align}
Therefore, it can be guaranteed that $\pi(\xb)\le q(\xb)$ if $\|\xb\|_2^2\ge 4m^{-1}(\beta^{-1}d\log(4L/m)+b)$. Therefore, for any $\tilde R^2\ge 4m^{-1}\big(\beta^{-1}d\log(4L/m)+b\big)$ it holds that,
\begin{align*}
\int_{\RR^d\bs\tilde \Omega} h(\xb) \pi(\dd\xb)&\le \frac{1}{[8\pi/(m\beta)]^{d/2}}\int_{\|\xb\|_2^2\ge \tilde R^2}C(1+\|\xb\|_2^D)\cdot \exp\big(-m\beta\|\xb\|_2^2/8\big)\dd \xb \notag\\
&\le 2C\int_{x\ge m\beta \tilde R^2/4}x^{D/2}\cdot \frac{x^{d/2-1}e^{-x/2}}{2^{d/2}\Gamma(d/2)}\dd x,
\end{align*}
where the second inequality follows from the probability density function of $\chi_d^2$ distribution and the fact that $R\ge 1$. Moreover, assuming $d\ge D$, it is easy to verify that when $x\ge 2Dd$, we have
\begin{align*}
x^{H/2}\cdot \frac{x^{d/2-1}e^{-x/2}}{2^{d/2}\Gamma(d/2)}\le \frac{(x/2)^{d/2-1}e^{-x/4}}{2^{d/2}\Gamma(d/2)}.
\end{align*}
Thus, if $\tilde R^2\ge 8Dd/(m\beta)$, we have
\begin{align*}
\int_{x\ge m\beta \tilde R^2/4}x^{H/2}\cdot \frac{x^{d/2-1}e^{-x/2}}{2^{d/2}\Gamma(d/2)}\dd x\le \int_{x\ge m\beta \tilde R^2/4}\frac{(x/2)^{d/2-1}e^{-x/4}}{2^{d/2}\Gamma(d/2)}\dd x = 2\PP_{z\sim \chi_d^2}[z\ge m\beta \tilde R^2/8].
\end{align*}
By standard tail bound of $\chi_d^2$ distribution, we have 
\begin{align*}
\PP_{z\sim\chi_d^2}\big[z\ge d + 2 \sqrt{d\log(1/\delta)}+2\log(1/\delta)\big]\le \delta.
\end{align*}
Therefore, set $\tilde R = \big[8(d+2\sqrt{d\log(1/\epsilon)}+2\log(1/\epsilon))/(m\beta)\vee8Dd/(m\beta)]^{1/2}=\tilde O(d^{1/2})$, we have
\begin{align*}
\int_{\RR^d\bs\tilde \Omega} h(\xb) \pi(\xb)\le 4C\epsilon. 
\end{align*}
Combining all previous results, we obtain
\begin{align*}
\EE[h(\xb_K)] - \EE[h(\xb^\pi)] \le  C(5+\tilde R^D)\epsilon.
\end{align*}
Applying the fact that $\tilde R = \tilde O(d^{1/2})$, we are able to complete the proof.
\end{proof}

\subsection{Proof of Theorem \ref{thm:main_thm_hessian}}
The main body of the proof of Theorem \ref{thm:main_thm_hessian} is the same as that of Theorem \ref{thm:main_thm}. The only difference/improvement is that provided Assumption \ref{assump:hessian_lip}, a sharper approximation error between the transition distributions $\transitRef_\ub(\cdot)$ and $\transitSGLD_\ub(\cdot)$ can be proved, implying SGLD is closer to its metropolized counterpart. We formally state this result in the following lemma.
\begin{lemma}\label{lemma:approximation_gld}
Under Assumptions \ref{assump:smooth} and \ref{assump:hessian_lip}, let $G =\|\nabla f(\zero)\|_2$ and set $r = \sqrt{10\eta d/\beta}\big(1+\sqrt{\log(8K/\epsilon)/d}\big)$. Then 
% if $\eta\le D^2(10 d\beta)^{-1}\wedge D(LR+G)^{-1}\beta^{-1}$, %$\eta\le[40d\beta(LR+G)^2]^{-1}$, 
there exists a constant 
\begin{align*}
\delta &= \big[28Hd^{3/2}\beta^{-1/2}\eta^{3/2} +10L(LR+G)d^{1/2}\beta^{1/2}\eta^{3/2} + 12\beta (LR+G)^2 d\eta/B+2 \beta^2(LR+G)^4\eta^2/B\big]\notag\\
&\qquad \cdot \bigg(1+\sqrt{\frac{\log(8K/\epsilon)}{d}}\bigg)^{2}
\end{align*}
such that for any set $\cA\subseteq\Omega$ and any point $\ub\in\Omega$, it holds that
\begin{align*}
(1-\delta)\transitRef_\ub(\cA)\le \transitSGLD_\ub(\cA)\le (1+\delta)\transitRef_\ub(\cA). 
\end{align*}

\end{lemma}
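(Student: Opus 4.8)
The plan is to estimate the ratio $\transitSGLD_\ub(\cA)/\transitRef_\ub(\cA)$ by decomposing the one-step transition into the Gaussian proposal kernel and the accept/reject rejections on both sides. Recall $\transitRef_\ub(\wb) = (1-\alpha_\ub(\wb))\delta(\ub) + \alpha_\ub(\wb)\transitSGLD_\ub(\wb)$, so for a set $\cA \subseteq \Omega$ with $\ub \notin \cA$ (the generic case; points at $\ub$ only add a $\delta$-mass that can be handled separately), we have $\transitRef_\ub(\cA) = \int_\cA \alpha_\ub(\wb)\transitSGLD_\ub(\wb)\,d\wb$. Hence it suffices to show $|1-\alpha_\ub(\wb)|$ is at most $\delta$ for essentially all $\wb$ reachable from $\ub$ within the annulus $\cB(\ub,r)\cap\Omega$; then $(1-\delta)\transitSGLD_\ub(\cA)\le \transitRef_\ub(\cA)\le\transitSGLD_\ub(\cA)$, which is even stronger than \eqref{eq:delta_approximation}, and a symmetric bookkeeping for the $\delta_\ub$ atoms closes the two-sided inequality. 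So the crux is bounding the deviation of
$\frac{\transitSGLD_\wb(\ub)}{\transitSGLD_\ub(\wb)}\exp[-\beta(f(\wb)-f(\ub))]$ from $1$.

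To do this I would first control the ratio of Gaussian densities in the proposal. Writing $\transitSGLD_\ub(\wb)$ as the mini-batch expectation of $(4\pi\eta/\beta)^{-d/2}\exp(-\|\wb-\ub+\eta\gb(\ub,\cI)\|_2^2\beta/(4\eta))$, the log-ratio $\log\transitSGLD_\wb(\ub) - \log\transitSGLD_\ub(\wb)$ involves differences of squared norms. The leading term, after expansion, pairs the drift $\wb-\ub$ against $\nabla f$, and the key cancellation is that $-\frac{\beta}{4\eta}(\|\ub-\wb+\eta\nabla f(\wb)\|^2 - \|\wb-\ub+\eta\nabla f(\ub)\|^2) - \beta(f(\wb)-f(\ub))$ is precisely the MALA-type acceptance exponent, which under a smoothness (or Hessian-Lipschitz) hypothesis is $O(\eta^{3/2})$ times dimension-and-smoothness factors — this is where the improvement from $Ld\eta$ to $Hd^{3/2}\beta^{-1/2}\eta^{3/2}$ comes in, via a second-order Taylor expansion of $f$ along the segment from $\ub$ to $\wb$ using Assumption~\ref{assump:hessian_lip} rather than only Assumption~\ref{assump:smooth}. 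Second, the expectation over $\cI$ introduces the variance of the stochastic gradient: I would replace $\gb(\ub,\cI)$ by $\nabla f(\ub)$ and pay the cost of $\EE_\cI$ of an exponential of a mean-zero term, bounded by a moment/sub-Gaussian argument since $\|\gb(\ub,\cI)-\nabla f(\ub)\|_2$ is controlled by $L\|\ub\| + G \le LR+G$ on $\Omega$ and the without-replacement sampling gives a $1/B$ variance reduction; this produces the $12\beta(LR+G)^2 d\eta/B + 2\beta^2(LR+G)^4\eta^2/B$ terms. Throughout, $\|\wb-\ub\|_2\le r = \sqrt{10\eta d/\beta}(1+\sqrt{\log(8K/\epsilon)/d})$ is used to turn powers of $\|\wb-\ub\|$ into powers of $\eta^{1/2}$ and the $(1+\sqrt{\log(8K/\epsilon)/d})^2$ prefactor.

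The main obstacle I anticipate is the mini-batch expectation step: unlike the full-gradient case (Lemma~\ref{lemma:approximation_gld}'s deterministic analogue), here $\alpha_\ub(\wb)$ depends on $\EE_\cI[\transitSGLD_\ub(\wb,\cI)]$, a ratio of expectations of exponentials, not an expectation of ratios, so Jensen-type bounds must be deployed carefully in both directions and combined with concentration of $\gb(\ub,\cI)$ to avoid losing the $1/B$ factor. A secondary technical point is handling the region where $\alpha_\ub(\wb)$ is genuinely bounded away from $1$ (e.g. near the boundary of $\cB(\ub,r)$ or $\Omega$, where $\transitSGLD_\wb(\ub)$ might be small because $\ub\notin\cB(\wb,r)$): one must argue that the proposal kernel assigns negligible mass there given the choice of $r$ as a high-probability radius for the Gaussian increment, so that those configurations contribute only to the lower-order $\log(8K/\epsilon)$-dependent slack. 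Once both issues are dispatched, collecting the error terms and the prefactor yields exactly the stated $\delta$.
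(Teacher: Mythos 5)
Your proposal follows essentially the same route as the paper's proof: reduce the two-sided bound to showing $\alpha_\ub(\wb)\ge 1-\delta/2$ on $\cB(\ub,r)\cap\Omega\setminus\{\ub\}$, bound the MALA-type log-acceptance ratio by combining Jensen's inequality on the numerator with a sub-Gaussian moment-generating-function bound on the mini-batch denominator (yielding the $1/B$ terms), and upgrade the trapezoidal-rule error from $L\|\wb-\ub\|_2^2$ to $O(H\|\wb-\ub\|_2^3)$ via the Hessian Lipschitz condition, with $\|\wb-\ub\|_2\le r$ converting everything into the stated powers of $\eta$. The only point worth noting is that your ``secondary'' worry about $\transitSGLD_\wb(\ub)$ vanishing near the boundary is a non-issue: $\wb\in\cB(\ub,r)\cap\Omega$ implies $\ub\in\cB(\wb,r)\cap\Omega$ by symmetry of the ball condition, so the indicator in the reverse kernel is always $1$ there.
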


% Then we are ready to complete the proof of Theorem \ref{thm:main_thm_hessian}.
\begin{proof}[Proof of Theorem \ref{thm:main_thm_hessian}]
Similar to the proof of Theorem \ref{thm:main_thm}, we first characterize the feasible range of $\eta$ that satisfies all requirements in Lemmas \ref{lemma:approximation_gld}, \ref{lemma:convergence_approximate} and \ref{lemma:lowerbound_phi}. Then by Lemma \ref{lemma:approximation_gld}, we know that if $\eta\le\big[25d\beta(LR+G)^2\big]^{-1}\wedge L^2\beta d^{-1}H^{-2}/25$ and $\beta\ge1$, the transition distribution $\transitSGLD_\ub(\cdot)$ can be $\delta$-close to $\transitRef_\ub(\cdot)$ with
\begin{align*}
\delta &= \big[28Hd^{3/2}\beta^{-1/2}\eta^{3/2} +10L(LR+G)d^{1/2}\beta^{1/2}\eta^{3/2} + 12\beta (LR+G)^2 d\eta/B+2 \beta^2(LR+G)^4\eta^2/B\big]\notag\\
&\qquad \cdot \bigg(1+\sqrt{\frac{\log(8K/\epsilon)}{d}}\bigg)^{2}\notag\\
&\le \Big[14(LR+G)^2\beta d\eta/B + \big[28Hd^{3/2}\beta^{-1/2}+10L(LR+G)d^{1/2}\beta^{1/2}\big]\eta^{3/2}\Big]\cdot\bigg(1+\sqrt{\frac{\log(8K/\epsilon)}{d}}\bigg)^{2}.
% \notag\\
% &\le \big[14Ld\eta + 14(LR+G)^2\beta d\eta/B\big]\cdot\bigg(1+\sqrt{\frac{\log(8K/\epsilon)}{d}}\bigg)^{2}.
\end{align*}
Then based on the requirement of $\eta$  and the lower bound of $\phi$ in Lemma \ref{lemma:lowerbound_phi}, it suffices to set the step size to be 
\begin{align*}
\eta &\le \min\bigg\{\frac{1}{25\beta(LR+G)^2},\frac{1}{35(Ld+(LR+G)^2\beta d/B)},\bigg(\frac{c_0\rho}{224(LR+G)^2\beta^{3/2} d/B}\bigg)^2,\notag\\
&\qquad \qquad\qquad\frac{c_0\rho}{16\sqrt{\beta}\big[28Hd^{3/2}\beta^{-1/2}+10L(LR+G)d^{1/2}\beta^{1/2}\big]}\bigg\}\cdot \bigg(1+\sqrt{\frac{\log(8K/\epsilon)}{d}}\bigg)^{-4}
\end{align*}
Therefore, by Lemma \ref{lemma:approximate_target}, set $R = \bar R(\epsilon/12)$, we have
\begin{align*}
\|\mu_k^{\text{SGLD}} - \pi\|_{TV}&\le \|\pi - \stationalRef\|_{TV} + \|\mu_k^{\text{Proj-SGLD}} - \stationalRef\|_{TV} + \|\mu_k^{\text{SGLD}} - \mu_k^{\text{Proj-SGLD}}\|_{TV} \notag\\
&\le \frac{\epsilon}{2}+\lambda\big(1 - \phi^2/8\big)^k + \frac{16\delta}{\phi} \notag\\
&\le \frac{\epsilon}{2} + \lambda(1-C_0\eta)^{k} + C_1B^{-1}\eta^{1/2}+C_2\eta,
\end{align*}
where $C_0 =c_0^2\rho^2/(8\beta)$, $C_1 = 224\big(1+\sqrt{\log(8K/\epsilon)/d}\big)^4(LR+G)^2\beta^{3/2} d\rho^{-1}/c_0$, $C_2 = \big(1+\sqrt{\log(8K/\epsilon)/d}\big)^4\rho^{-1}\big[448Hd^{3/2}+160L(LR+G)d^{1/2}\beta\big]/c_0$ are problem-dependent constants.
\end{proof}

\subsection{Proof of Corollary \ref{coro:sgld2}}
\begin{proof}[Proof of Corollary \ref{coro:sgld2}]
From \eqref{eq:upperbound_lambda}, we know that $\mu_0$ is a $\lambda$-warm start with respect to $\stationalRef$ with $\lambda = e^{O(d)}$. Then in order to guarantee that the sampling error $\|\mu_k^{\text{SGLD}}-\pi\|_{TV}\le \epsilon$, it suffices to set
\begin{align*}
\lambda(1-C_0\eta)^k = \epsilon/6,\ C_1B^{-1}\eta^{1/2}= \epsilon/6\ , C_2\eta = \epsilon/6.
\end{align*}
Note that we have $R = \bar R(\epsilon/12) = O\big(d^{1/2}\beta^{-1/2}\log^{1/2}(\epsilon)\big)$, $C_0 = O(\rho^2\beta^{-1})$, $C_1 = \tilde O(d^2\rho^{-1}\beta^{1/2})$ and $C_2 = \tilde O\big(d^{3/2}\rho^{-1}+d\beta^{1/2}\rho^{-1}\big)$, plugging these into the above equation gives
\begin{align*}
\eta &= \tilde O\bigg(\frac{\rho^2B^2\epsilon^2}{d^4\beta}\wedge \frac{\rho\epsilon}{d^{3/2}+d\beta^{1/2}}\bigg) \notag\\
k& = O\bigg(\frac{\log(\lambda/\epsilon)}{C_0\eta}\bigg) = \tilde O\bigg(\frac{d^5\beta^2}{\rho^4B^2\epsilon^2}+\frac{d^{5/2}\beta + d^2\beta^{3/2}}{\rho^3\epsilon}\bigg).
\end{align*}
This completes  the proof.
\end{proof}

\section{Proof of Lemmas in Section \ref{sec:proof_main}}\label{sec:proof_keylemma}
In this section, we provide the proof of Lemmas used in Section \ref{sec:proof_main}.

\subsection{Proof of Lemma \ref{lemma:connection_SGLD}}\label{sec:connection_sgld}
The idea is to show if the quantities $R$ and $r$ satisfy
\begin{align}\label{eq:choice_R_r}
R \ge \max\bigg\{25\bigg(\frac{d\log(4K/\delta)}{m\beta}\bigg)^{1/2},4\bigg(\frac{b+G^2+d\beta^{-1}}{m}\bigg)^{1/2}\bigg\},\quad r\ge \sqrt{2\eta d/\beta}(2+\sqrt{2\log(2K/\delta)/d}),
\end{align}
and $\|\xb_0\|_2\le R/2$ holds with probability at least $1-\delta/2$ for arbitrary $\delta\in(0,1)$, Algorithm \ref{alg:projected_sgld} generates the same output as that of the standard SGLD
with probability at least $1-\delta$. 

We first show why this is sufficiently to prove the upper bound on the total variation distance between $\mu_K^{\text{SGLD}}$ and $\mu_K^{\text{Proj-SGLD}}$. In particular, we have
\begin{align*}
\big\|\mu_K^{\text{SGLD}}-\mu_K^{\text{Proj-SGLD}}\big\|_{TV} =\sup_{\cA\in\RR^d}\big|\mu_K^{\text{SGLD}}(\cA) - \mu_K^{\text{Proj-SGLD}}(\cA)\big|.
\end{align*}
Let $\xb_K$ and $\hat \xb_K$ be the outputs of SGLD and projected SGLD, we can rewritten $\hat\xb_K$ as
\begin{align*}
\xb_K = \hat\xb_K\cdot\ind(\xb_k=\hat\xb_K) + \xb_K\cdot\ind(\xb_k\neq\hat\xb_K).
\end{align*}
Then for any set $\cA$, it holds that
\begin{align*}
\ind(\xb_K\in\cA) &= \ind(\hat\xb_K\in\cA)\cdot\ind(\xb_k=\hat\xb_K) + \ind(\xb_K\in\cA)\cdot\ind(\xb_k\neq\hat\xb_K)\notag\\
& = \ind(\hat\xb_K\in\cA) -\ind(\hat\xb_K\in\cA)\cdot\ind(\xb_k\neq\hat\xb_K)+ \ind(\xb_K\in\cA)\cdot\ind(\xb_k\neq\hat\xb_K).
\end{align*}
This further implies that 
\begin{align*}
\ind(\xb_K\in\cA)-\ind(\hat\xb_K\in\cA)&\le \ind(\xb_K\in\cA)\cdot\ind(\xb_k\neq\hat\xb_K)\le \ind(\xb_k\neq\hat\xb_K);\notag\\
\ind(\xb_K\in\cA)-\ind(\hat\xb_K\in\cA)&\ge -\ind(\hat\xb_K\in\cA)\cdot\ind(\hat\xb_k\neq\xb_K)\ge -\ind(\xb_k\neq\hat\xb_K).
\end{align*}
Taking expectation on both sides, we can get that 
\begin{align*}
\big|\mu_K^{\text{SGLD}}(\cA) - \mu_K^{\text{Proj-SGLD}}(\cA)\big|\le \EE[\ind(\xb_k\neq\hat\xb_k)]\le\delta,
\end{align*}
for any $\cA\in\RR^d$, and thus
\begin{align*}
\big\|\mu_K^{\text{SGLD}}-\mu_K^{\text{Proj-SGLD}}\big\|_{TV}\le \delta.
\end{align*}

The next step is to show that Projected-SGLD generates the same outputs as that of the standard SGLD with probability at least $1-\delta$, which suffices to show that with probability at least $1-\delta$, Projected-SGLD will accept all $K$ iterates. In other words, let $\{\xb_k\}_{k=0,\dots, K}$ be the iterates generated by the standard SGLD (without accept/reject step), our goal is to prove that with probability at least $1-\delta$, all $\xb_k$'s stay inside the region $\cB(\boldsymbol{0}, R)$, and $\|\xb_{k}-\xb_{k-1}\|_2\le r$ for all $k\le K$. These properties are summarized in the following two facts. 
\paragraph{Fact 1:} With probability at least $1-\delta/2$, all iterates stay inside  the region $\cB(\boldsymbol{0},R)$ 
\paragraph{Fact 2:} Given \textbf{Fact 1}, with probability at least $1-\delta/2$, $\|\xb_k - \xb_{k-1}\|_2\le r$ for all $k\le K$.

The following lemma will be useful to the proof.
\begin{lemma}[Lemma 3.1 in \citet{raginsky2017non}]\label{lemma:grad_bound} 
Under Assumption \ref{assump:smooth}, there exists a constant $G = \max_{i\in[n]} \|\nabla f_i(\zero)\|_2 $ such that for any $\xb\in\RR^d$ and $i\in[n]$, it holds that
\begin{align*}
\|\nabla f_i(\xb)\|_2\le L\|\xb\|_2 + G.
\end{align*}
\end{lemma}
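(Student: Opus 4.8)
The plan is a direct two-line argument built on the smoothness assumption and the triangle inequality. First I would fix an arbitrary index $i\in[n]$ and an arbitrary point $\xb\in\RR^d$, and apply Assumption~\ref{assump:smooth} with the pair $(\xb,\zero)$: since $f_i$ is $L$-smooth, we get $\|\nabla f_i(\xb)-\nabla f_i(\zero)\|_2\le L\|\xb-\zero\|_2=L\|\xb\|_2$. This is the only substantive ingredient; everything else is bookkeeping.

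Next I would write $\nabla f_i(\xb)=\big(\nabla f_i(\xb)-\nabla f_i(\zero)\big)+\nabla f_i(\zero)$ and invoke the triangle inequality for the Euclidean norm, giving $\|\nabla f_i(\xb)\|_2\le \|\nabla f_i(\xb)-\nabla f_i(\zero)\|_2+\|\nabla f_i(\zero)\|_2\le L\|\xb\|_2+\|\nabla f_i(\zero)\|_2$. Finally, by the definition $G=\max_{j\in[n]}\|\nabla f_j(\zero)\|_2$ we have $\|\nabla f_i(\zero)\|_2\le G$, so $\|\nabla f_i(\xb)\|_2\le L\|\xb\|_2+G$. Since $i$ and $\xb$ were arbitrary, the bound holds for all $i\in[n]$ and all $\xb\in\RR^d$, which is exactly the claim; note also that $G$ is a well-defined finite constant because it is a maximum over the finite index set $[n]$.

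There is essentially no obstacle here: the statement is an immediate corollary of $L$-smoothness, and the only thing worth stating carefully is that $G$ does not depend on $i$ or $\xb$, so that the resulting linear-growth bound is uniform. I would keep the write-up to a few lines and not belabor it.
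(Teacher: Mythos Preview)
Your argument is correct and is exactly the standard elementary proof: apply the $L$-smoothness of $f_i$ at the pair $(\xb,\zero)$, then the triangle inequality, then bound $\|\nabla f_i(\zero)\|_2$ by the finite maximum $G$. The paper does not actually supply its own proof of this lemma---it is quoted verbatim as Lemma~3.1 of \citet{raginsky2017non}---so there is nothing to compare against, but your write-up is the natural one-paragraph justification and would be entirely appropriate here.
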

Now we will proceed to proving these two facts. 

\begin{proof}[Proof of Lemma \ref{lemma:connection_SGLD}]
Regarding \textbf{Fact 1}, we first take a look at $\|\xb_k\|_2^2$. By Assumption \ref{assump:diss}, we have 
\begin{align*}
\EE[\|\xb_{k+1}\|_2^2|\xb_k] &= \EE\big[\big\|\xb_k - \eta\gb(\xb_k, \cI) + \sqrt{2\eta/\beta}\bepsilon_k\big\|_2^2|\xb_k\big]\notag\\
&= \|\xb_k\|_2^2 - 2\eta \EE[ \la \xb_k, \gb(\xb_k, \cI)\ra|\xb_k] + \eta^2\EE[\|\gb(\xb_k, \cI)\|_2^2|\xb_k] + \frac{2d\eta}{\beta}\notag\\
&\le (1-2m\eta)\|\xb_k\|_2^2 +2\eta b + \eta^2(L\|\xb_k\|_2+G)^2 + \frac{2d\eta}{\beta}\notag\\
&\le (1-2m\eta+2L^2\eta^2)\|\xb_k\|_2^2 +2\eta b + 2\eta^2G^2 + \frac{2d\eta}{\beta},
\end{align*}
where the first inequality follows from Assumption \ref{assump:diss} and Lemma \ref{lemma:grad_bound}, and the last inequality is due to Young's inequality. If we choose $\eta\le 1\wedge m/(4L^2) $, the above inequality implies that
\begin{align*}
\EE[\|\xb_{k+1}\|_2^2|\xb_k] &\le (1-3m\eta/2)\|\xb_k\|_2^2 +2\eta b + 2\eta G^2 + \frac{2d\eta}{\beta}.
\end{align*}
Then it is clear that if $\|\xb_k\|_2^2\ge (4b + 4G^2 + 4d\beta^{-1})/m$, the above inequality implies that $\EE[\|\xb_{k+1}\|_2^2|\xb_k]\le (1-m\eta)\|\xb_k\|_2^2$. Note that in order to prove $\|\xb_k\|_2\le R$ for all $k\le K$, we only need to consider $\xb_k$ satisfying $\|\xb_k\|_2^2\ge (4b + 4G^2 + 4d\beta^{-1})/m$ since our choice of $R$ satisfies $R\ge 2\sqrt{(4b + 4G^2 + 4d\beta^{-1})/m}$ (see \eqref{eq:choice_R_r}), otherwise $\|\xb_k\|_2\le \sqrt{(4b + 4G^2 + 4d\beta^{-1})/m}\le R$ naturally holds. Then by the concavity of the function $\log(\cdot)$, for any $\|\xb_k\|_2\ge R/2$, we have
\begin{align}\label{eq:martingale_super}
\EE[\log(\|\xb_{k+1}\|_2^2)|\xb_k] \le \log\big(\EE[\|\xb_{k+1}\|_2^2)|\xb_k]\big) \le \log(1-m\eta) + \log(\|\xb_k\|_2^2)\le  \log(\|\xb_k\|_2^2) - m\eta . 
\end{align}
Besides, by triangle inequality it holds that
\begin{align*}
\|\xb_{k+1}\|_2 - \|\xb_{k}\|_2\le \eta\|\gb(\xb_k,\cI)\|_2 + \sqrt{\eta/\beta}\|\bepsilon_k\|_2.
\end{align*}
Note that $\|\bepsilon_k\|_2$ is the square root of a $\chi(d)$ random variable, which is subgaussian and satisfies $\PP(\|\bepsilon_k\|_2\ge \sqrt{d}+\sqrt{2}t)\le e^{-t^2}$ for all $z\ge 0$. Besides, if $\|\xb_k\|_2\le R$, by Lemma \ref{lemma:grad_bound} we have $\|\gb(\xb_k,\cI)\|_2\le LR+G$, then assume $\eta\le d(LR+G)^{-1}/\beta$, this further implies that  
\begin{align*}
\PP\big(\|\xb_{k+1}\|_2 - \|\xb_{k}\|_2\ge2\sqrt{\eta d/\beta} + \sqrt{2\eta/\beta} t \big)\le e^{-t^2}.
\end{align*}
for all $z\ge 0$.
% $\|\xb_{k+1}\|_2-\|\xb_k\|_2$ is also subgaussian with mean less than $\eta(LR+G)+\sqrt{\eta d/\beta}$.
% if $\|\xb_k\|_2\le R$, we have there exists a constant $b>1$, $c>0$ such that for any $t$,
% \begin{align*}
% \mathbb P\big(\|\xb_{k+1}\|_2 - \|\xb_{k}\|_2\ge \eta(LR+G)+\sqrt{\eta d/\beta} + t\big)\le b \exp(-ct^2). 
% \end{align*}
Further note that we assumed $\|\xb_k\|_2\ge R/2$, it follows that
\begin{align*}
\log(\|\xb_{k+1}\|_2^2) - \log(\|\xb_{k}\|_2^2) = 2\log(\|\xb_{k+1}\|_2/\|\xb_{k}\|_2)\le \|\xb_{k+1}\|_2/\|\xb_{k}\|_2 -1 \le \frac{2\|\xb_{k+1}\|_2 - 2\|\xb_k\|_2}{R}.
\end{align*}
Therefore we have $\log(\|\xb_{k+1}\|_2^2) - \log(\|\xb_{k}\|_2^2)$ is also a subgaussian random variable and satisfies
\begin{align}\label{eq:martingale_difference} 
\PP\big(\log(\|\xb_{k+1}\|_2^2) - \log(\|\xb_{k}\|_2^2)\ge 4\sqrt{\eta d/\beta}R^{-1} +2t\sqrt{2\eta/\beta}R^{-1}\big)\le \exp(-t^2).
\end{align}
We will consider any
subsequence among $\{\xb_k\}_{k=1,\dots,K}$, with all iterates, except the first one, staying outside the region $\cB(\boldsymbol{0}, R/2)$. Denote such subsequence by $\{\yb_{k}\}_{k=0,\dots, K'}$, where $\yb_0$ satisfies  $\|\yb_0\|_2\le R/2$ and $K'\le K$. Then it can be seen that $\yb_k, \yb_{k+1}$ satisfy \eqref{eq:martingale_super} and \eqref{eq:martingale_difference} for all $k\ge 1$. Further note that by our assumption on the initialization $\mu_0$, with probability at least $1-\delta/4$ we have the initial point $\xb_0$ satisfies $\|\xb_0\|_2\le R/2$. Then it suffices to prove that with probability at least $1-\delta/4$ all points in $\{\yb_{k}\}_{k=1,\dots, K'}$ will stay inside the region $\cB(\boldsymbol{0}, R)$. 
Then let $\cE_k$ be the event that $\|\yb_{k'}\|_2\le R$ for all $k'\le k$, and $\cF_k = \{\yb_0,\dots,\yb_k\}$ be the filtration, it is easy to see that $\cE_k\subseteq\cE_{k-1}$ and thus the sequence $\{\ind(\cE_{k-1})\cdot (\log(\|\yb_{k}\|_2^2)+km\eta)|\cF_{k-1}\}_{k=1,\dots,K'}$  is a  super-martingale. Besides, we can show that the martingale difference has a subgaussian tail, i.e., for any $t\ge 0$, 
\begin{align*}
\mathbb P\big(\|\yb_{k+1}\|_2^2+(k+1)m\eta - \log(\|\yb_{k}\|_2^2)-km\eta\ge 5\sqrt{\eta d/\beta}R^{-1}+2t\sqrt{2\eta d/\beta}R^{-1}\big) \le \exp(-t^2),
\end{align*}
where we use the fact that $\eta\le d\beta^{-1}m^{-2}$ and $d\ge 1$. Let $z = 5\sqrt{\eta d/\beta}R^{-1}+2t\sqrt{2\eta d/\beta}R^{-1}$, we have $t = \sqrt{\beta/(8\eta d)}\cdot\big(Rz-5\sqrt{\eta d/\beta}\big)$. Then note that $(a-b)^2\ge \frac{a^2}{4}-b^2/3$ for all $a,b\in\RR$, we have
\begin{align*}
t^2 \ge \frac{\beta R^2 z^2}{32\eta d} - \frac{25}{24}.
\end{align*}
Therefore, for any $z\ge 0$ we have 
\begin{align*}
\mathbb P\big(\|\yb_{k+1}\|_2^2+(k+1)m\eta - \log(\|\yb_{k}\|_2^2)-km\eta\ge z\big) \le \exp\bigg(-\frac{\beta R^2 z^2}{32\eta d} + \frac{25}{24}\bigg)\le 3\exp\bigg(-\frac{\beta R^2 z^2}{32\eta d}\bigg),
\end{align*}
which implies that the martingale difference is subgaussian.
Then by Theorem 2 in \citet{shamir2011variant} (one-side), we have for a given $k$, conditioned on the event $\cE_{k-1}$, with probability at least $1-\delta'$, 
\begin{align*}
\log(\|\yb_{k}\|_2^2) + km\eta\le \log(\|\yb_{0}\|_2^2) + 52\sqrt{k\eta d\log(1/\delta')/(\beta R^2)}
\end{align*}
for some absolute positive constant $R$.
Taking union bound over all $k=1,\dots,K'$ ($K'\le K$) and defining $ \delta=4\delta'K'$, we have with probability at least $1-\delta/4$, for all $k=1,\dots,K'$ it holds that 
\begin{align*}
\log(\|\yb_{k}\|_2^2) &\le 2\log(R/2) +  52\sqrt{k\eta d\log(2K/\delta)/(\beta R^2)} - m k\eta\notag\\
&\le 2\log(R/2) + \frac{700 d\log(2K/\delta)}{m\beta R^2}.
\end{align*}
Applying our choice of $R$ in \eqref{eq:choice_R_r} gives
\begin{align*}
\frac{700 d\log(2K/\delta)}{m\beta R^2} \le 2\log(2).
\end{align*}
Therefore, for all $k=1,\ldots,K$, we have with probability at least $1-\delta/4$ that
\begin{align*}
\log(\|\yb_{k}\|_2^2)\le 2\log(R/2)  + 2\log(2) = \log(R^2),
\end{align*}
which is equivalent to $\|\yb_k\|_2\le R$. Combining with the fact that with probability at least $1-\delta/4$ the initial point $\xb_0$ stays inside $\cB(0,R/2)$, we can conclude that with probability at least $1-\delta/2$,  all iterates stay inside the region $\cB(\boldsymbol{0}, R)$, which completes the proof of \textbf{Fact 1}.

Now we proceed to prove \textbf{Fact 2}, of which  the key is to prove  $\|\xb_{k}-\xb_{k-1}\|_2\le r$ for all $k\ge K$. Note that in each iteration, the proposal distribution of $\xb_{k+1}$ is an expected Gaussian distribution. Besides, note that for all possible mini-batch, the drift term satisfies
\begin{align*}
\eta\|\gb(\xb_k,\cI)\|_2\le \eta (LR + G).
\end{align*}
This implies that the probability that $\xb_{k+1}\notin \cB(\xb_{k},r)$ can be upper bounded by
\begin{align*}
\PP[\xb_{k+1}\notin \cB(\xb_{k},r)]\le \PP_{z\sim \chi_d^2}\big[2\eta \beta^{-1} z\le (r -\eta (LR + G))^2\big] = \PP_{z\sim \chi_d^2}\big[\sqrt{z}\le [r - \eta(LR+G)]/(2\eta\beta^{-1})^{1/2}\big].
\end{align*}
Note that $\eta\le (LR+G)^{-2}\beta^{-1}d$, we have $\eta(LR+G)/(2\eta\beta^{-1})^{1/2}\le d^{1/2}$. Then by standard tail bound of Chi-square distribution and our choice of $r$ in \eqref{eq:choice_R_r},
\begin{align*}
\PP[\xb_{k+1}\notin \cB(\xb_{k},r)]\le\PP_{z\sim \chi_d^2}\big[\sqrt{z}\le d^{1/2}+\sqrt{2\log(2K/\delta)}\big]\le 1 - \delta/(2K).
\end{align*}
Taking union bound over all iterates, we are able to complete the proof of \textbf{Fact 2}.

Combining \textbf{Fact 1} and \textbf{Fact 2} and set $\delta = \epsilon/4$ complete the proof of Lemma \ref{lemma:connection_SGLD}. 
\end{proof}

\subsection{Proof of Lemma \ref{lemma:approximation}}
Before providing the detailed proof of Lemma \ref{lemma:approximation}, we first present the following useful lemma.

\begin{lemma}\label{lemma:subgaussian_without}
Let $\gb(\xb,\cI)$ be the stochastic gradient with mini-batch size $|\cI|= B<n$, then for any vector $\ab$ and $\|\xb\|_2\le R$, there exists a constant $M =LR+G$ such that
\begin{align*}
\EE_{\cI}\big[\exp\big(\la \ab,\gb(\xb,\cI)-\nabla f(\xb)\ra\big)\big] \le \exp(M^2\|\ab\|_2^2/B).
\end{align*}
Moreover, we have $\EE_{\cI}\big[\exp\big(\la \ab,\gb(\xb,\cI)-\nabla f(\xb)\ra\big)\big]=1$ if $B = n$.
\end{lemma}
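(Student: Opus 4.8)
## Proof Proposal for Lemma \ref{lemma:subgaussian_without}

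The plan is to exploit the sampling-without-replacement structure of the mini-batch and reduce the bound to a coordinate-wise (or rather, summand-wise) application of the standard Hoeffding-type bounded-difference MGF estimate. First I would fix $\xb$ with $\|\xb\|_2 \le R$ and a test vector $\ab$, and write $\gb(\xb,\cI) - \nabla f(\xb) = \frac{1}{B}\sum_{i\in\cI}\big(\nabla f_i(\xb) - \nabla f(\xb)\big)$, since $\nabla f(\xb) = \frac{1}{n}\sum_{j=1}^n \nabla f_j(\xb)$ is precisely the mean of the $n$ summands $\nabla f_i(\xb)$. Thus $\la \ab, \gb(\xb,\cI) - \nabla f(\xb)\ra = \frac{1}{B}\sum_{i\in\cI} Y_i$ where $Y_i := \la \ab, \nabla f_i(\xb) - \nabla f(\xb)\ra$ is a deterministic (given $\xb$) real number for each $i\in[n]$, with $\frac{1}{n}\sum_{i=1}^n Y_i = 0$. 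The key boundedness fact, from Lemma \ref{lemma:grad_bound} (Lemma 3.1 of \citet{raginsky2017non}): since $\|\xb\|_2 \le R$ we have $\|\nabla f_i(\xb)\|_2 \le LR + G = M$ for every $i$, and also $\|\nabla f(\xb)\|_2 \le M$ by averaging, so $|Y_i| \le 2M\|\ab\|_2$; more usefully, $|Y_i - Y_j| \le \|\ab\|_2\|\nabla f_i(\xb) - \nabla f_j(\xb)\|_2 \le 2M\|\ab\|_2$, and in fact we should aim to get a range bound of the form $\max_i Y_i - \min_i Y_i \le 2M\|\ab\|_2$ to feed into a Hoeffding estimate.

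Next I would invoke the classical result of Hoeffding (1963) that sampling without replacement only helps: for bounded numbers $Y_1,\dots,Y_n$ with mean zero, sampled $B$ at a time without replacement, the MGF of $\frac{1}{B}\sum_{i\in\cI} Y_i$ is dominated by the corresponding MGF under sampling \emph{with} replacement, which in turn is controlled by Hoeffding's lemma. Concretely, $\EE_{\cI}\big[\exp\big(t\cdot\frac{1}{B}\sum_{i\in\cI}Y_i\big)\big] \le \big(\EE_{j\sim \mathrm{Unif}[n]}\big[\exp\big(\tfrac{t}{B}Y_j\big)\big]\big)^B \le \exp\big(\tfrac{t^2}{B}\cdot\tfrac{(2M\|\ab\|_2)^2}{8}\big) = \exp\big(\tfrac{t^2 M^2\|\ab\|_2^2}{2B}\big)$, using that each $Y_j$ lies in an interval of length at most $2M\|\ab\|_2$. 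Setting $t = 1$ (since the statement has $\la\ab,\cdot\ra$ directly, not $t\la\ab,\cdot\ra$) gives $\EE_\cI[\exp(\la\ab,\gb(\xb,\cI)-\nabla f(\xb)\ra)] \le \exp(M^2\|\ab\|_2^2/(2B))$, which is even slightly stronger than the claimed $\exp(M^2\|\ab\|_2^2/B)$; one could also just use the cruder range bound $|Y_i|\le 2M\|\ab\|_2$ each to arrive comfortably at the stated constant, so there is slack. For the final sentence, when $B = n$ the mini-batch $\cI$ is all of $[n]$ deterministically, so $\gb(\xb,[n]) = \frac{1}{n}\sum_{i=1}^n \nabla f_i(\xb) = \nabla f(\xb)$ exactly, the exponent is zero, and the expectation is $1$.

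The main obstacle is getting the without-replacement MGF domination stated cleanly and rigorously: one must either cite Hoeffding's reduction lemma (sampling without replacement MGF $\le$ with replacement MGF for convex functions of the sum, i.e.\ the Hoeffding–Serfling inequality) or reprove it. I would handle this by citing the standard reference and then applying Hoeffding's lemma to the i.i.d.\ surrogate, keeping the argument short. A secondary minor point is being careful that the "range" used in Hoeffding's lemma is the oscillation of the single-sample random variable $\frac{1}{B}Y_J$ for $J\sim\mathrm{Unif}[n]$, namely $\frac{1}{B}(\max_i Y_i - \min_i Y_i) \le \frac{2M\|\ab\|_2}{B}$; tracking the $\frac{1}{B}$ factors correctly through the $B$-th power is where an off-by-a-factor error could creep in, but the bookkeeping is routine and the stated constant leaves room to spare.
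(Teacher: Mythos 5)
Your proposal is correct and follows essentially the same route as the paper: the paper also bounds the oscillation of the summands by $2M\|\ab\|_2$ via Lemma~\ref{lemma:grad_bound}, reduces sampling without replacement to sampling with replacement (citing Lemma~1.1 of Bardenet and Maillard, which is exactly the Hoeffding reduction you invoke), factorizes the with-replacement MGF over the $B$ independent draws, and applies Hoeffding's lemma to each factor. Your bookkeeping of the $1/B$ factors and the sharper oscillation bound is accurate, and the $B=n$ case is handled identically.
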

\begin{proof}[Proof of Lemma \ref{lemma:approximation}]
Note that the Markov processes defined by $\transitRef_\ub(\cdot)$ and $\transitSGLD_\ub(\cdot)$ are $1/2$-lazy according to \eqref{eq:def_trans_lz_sgld}. We prove the lemma by considering two cases: $\ub\not\in\cA$ and $\ub\in\cA$. %\pan{why these two sentences related?}.
We first prove the lemma in the first case. Note that when $\ub\notin\cA$, we have
\begin{align}\label{eq:relation_Q2P_case1}
    \transitRef_{\ub}(\cA)=\int_{\cA} \transitRef_\ub(\wb)\dd\wb = \int_{\cA} \alpha_\ub(\wb) \transitSGLD_\ub(\wb)\dd\wb.
\end{align}
By \eqref{eq:markov_chain_v2w}, we know that $\wb$ is restricted in $\wb\in \cB(\ub,r)\cap\cB(\zero,R)\bs\{\ub\}$. For sufficiently small step size $\eta$, we can ensure $\delta\le 1/2$. In the rest of this proof we will show that $\alpha_\ub(\wb)\ge 1-\delta/2$ for all $\wb\in \cB(\ub,r)\cap\cB(\zero,R)\bs\{\ub\}$, which together with \eqref{eq:relation_Q2P_case1} implies
\begin{align*}
(1-\delta/2)\transitSGLD_\ub(\cA)\le \transitRef_\ub(\cA)\le \transitSGLD_\ub(\cA), 
\end{align*}
and thus \eqref{eq:delta_approximation} also holds since $\alpha_\ub(\wb)\ge 1-\delta/2$.
Then, it suffices to prove that 
\begin{align}\label{eq:lowerbound_accept}
\frac{\transitSGLD_{\wb}(\ub)}{\transitSGLD_{\ub}(\wb)}\cdot\exp(-\beta(f(\wb) - f(\ub)))\ge 1-\delta/2.
\end{align}
By the definition of $\transitSGLD_\ub(\wb)$, it is equivalent to proving %\pan{why the following $\transitSGLD_{\ub}(\wb)$ is different from the definition? you used the same notation, but omit the Dirac-delta function. } 
\begin{align*}
\frac{\EE_{\cI_1} \bigg[\exp\bigg(-\beta f(\wb) - \frac{\|\ub - \wb + \eta\gb(\wb,\cI_1)\|_2^2}{4\eta/\beta}\bigg)\bigg|\wb\bigg]}{\EE_{\cI_2} \bigg[\exp\bigg(-\beta f(\ub) - \frac{\|\wb - \ub + \eta\gb(\ub,\cI_2)\|_2^2}{4\eta/\beta}\bigg)\bigg|\ub\bigg]}\ge 1- \delta/2,
\end{align*}
where $\cI_1,\cI_2\subseteq[n]$ are two independent mini-batches of data. 
Let $I_1$ and $I_2$ denote the numerator and denominator of the L.H.S. of the above inequality respectively. Then regarding $I_1$, by Jensen's inequality and convexity of the function $\exp(\cdot)$, we have %\pan{condition on $\ub$ or $\wb$. how to define $g$}
\begin{align}\label{eq:bound_I1_1}
I_1&\ge  \exp\bigg(-\beta f(\wb) - \frac{\EE_{\cI_1}[\|\ub - \wb + \eta\gb(\wb,\cI_1)\|_2^2|\wb]}{4\eta/\beta}\bigg) \notag\\
&= \exp\bigg(-\beta f(\wb) - \frac{\|\ub - \wb\|_2^2  + 2\eta\la\ub - \wb, \nabla f(\wb)\ra + \eta^2\EE_{\cI_1}[\|\gb(\wb,\cI_1)\|_2^2|\wb]}{4\eta/\beta}\bigg)\notag\\
&\ge \exp\bigg(-\beta f(\wb) - \frac{\|\ub - \wb\|_2^2  + 2\eta\la\ub - \wb, \nabla f(\wb)\ra + \eta^2\|\nabla f(\wb)\|_2^2+\eta^2M^2d/B}{4\eta/\beta}\bigg),
\end{align}
where the last inequality is by Lemma  \ref{lemma:subgaussian_without}.
Then we move on to upper bounding $I_2$, 
\begin{align*}
I_2 &= \exp\big(-\beta f(\ub)\big) \cdot\EE_{\cI_2} \bigg[\exp\bigg(-\frac{\|\wb - \ub\|_2^2  + 2\eta\la\wb - \ub, \gb(\wb,\cI_2)\ra + \eta^2\|\gb(\ub,\cI_2)\|_2^2}{4\eta/\beta}\bigg)\bigg|\ub\bigg]\notag\\
&=\exp\bigg(-\beta f(\ub) - \frac{\|\wb-\ub\|_2^2+2\eta\la\wb - \ub,\nabla f(\ub)\ra }{4\eta/\beta}\bigg)\notag\\
&\qquad\cdot \underbrace{\EE_{\cI_2} \bigg[\exp\bigg(-\frac{2\beta\la\wb - \ub, \gb(\ub,\cI_2)-\nabla f(\ub)\ra + \beta\eta\|\gb(\ub,\cI_2)\|_2^2}{4}\bigg)\bigg|\ub\bigg]}_{I_3}.
\end{align*}
Note that 
\begin{align*}
\|\gb(\ub,\cI_2)\|_2^2 &= \|\nabla f(\ub)\|_2^2 + 2\la\nabla f(\ub), \gb(\ub,\cI_2)-\nabla f(\ub)\ra + \|\gb(\ub,\cI_2)-\nabla f(\ub)\|_2^2\notag\\
&\ge \|\nabla f(\ub)\|_2^2 + 2\la\nabla f(\ub), \gb(\ub,\cI_2)-\nabla f(\ub)\ra.
\end{align*}
% Therefore, if 
% \begin{align*}
% \eta\le \frac{D^2}{10 d\beta}\wedge \frac{D}{\beta(LR+G)},
% \end{align*}
% we have $\|\beta[\wb - \ub+\eta\nabla f(\xb)]\|_2\le \beta r + \beta\eta(LR+G)\le 2D$, 
Then we have the following regarding $I_3$,
\begin{align*}
I_3 &\le \EE_{\cI_2} \bigg[\exp\bigg(-\frac{2\beta\la\wb - \ub+\eta\nabla f(\ub), \gb(\ub,\cI_2)-\nabla f(\ub)\ra}{4}\bigg)\bigg|\ub\bigg]\cdot\exp\bigg(-\frac{\beta\eta\|\nabla f(\ub)\|_2^2}{4}\bigg)\notag\\
&\le \exp\bigg(\frac{\beta^2M^2\|\wb-\ub+\eta\nabla f(\ub)\|_2^2}{4B}\bigg)\cdot\exp\bigg(-\frac{\beta\eta\|\nabla f(\ub)\|_2^2}{4}\bigg),
\end{align*}
where the last inequality holds due to Lemma \ref{lemma:subgaussian_without}.
%Then by Lemma \ref{lemma:grad_bound}, it is clear that for any $\cI\in[n]$,
% \begin{align}\label{eq:bound_sto_grad}
% \|\gb(\wb,\cI)\|_2 = \Bigg\|\frac{1}{|\cI|}\sum_{i\in\cI}\nabla f_i(\wb)\Bigg\|_2\le L\|\wb\|_2 + G\le LR+G,
% \end{align}
% where $G =\|\nabla f(\zero)\|_2$ and the last inequality is due to the fact that $\|\wb\|_2\le R$.
Then by Young's inequality, $I_3$ can be further upper bounded by
\begin{align*}
I_3&\le \exp\bigg(\frac{\beta^2M^2\big(\|\wb-\ub\|_2^2+\eta^2\|\nabla f(\ub)\|_2^2\big)}{2B}-\frac{\beta\eta\|\nabla f(\ub)\|_2^2}{4}\bigg)\notag\\
&\le \exp\bigg(\frac{\beta^2 M^2\big(\|\wb-\ub\|_2^2+(LR+G)^2\eta^2\big)}{2B}-\frac{\beta\eta\|\nabla f(\ub)\|_2^2}{4}\bigg),
\end{align*}
where the second inequality is by Lemma \ref{lemma:grad_bound}.
Combining the previous results for $I_1$ and $I_2$, we have
\begin{align}\label{eq:lower_bound_I1/I2}
\frac{I_1}{I_2} &\ge \exp\bigg(-\beta\big(f(\wb) - f(\ub)\big) - \frac{\beta\la\ub - \wb, \nabla f(\wb) + f(\ub)\ra}{2}\bigg)\notag\\
&\qquad\cdot \exp\bigg(\frac{\beta\eta\big(\|\nabla f(\ub)\|_2^2 -\|\nabla f(\wb)\|_2^2 \big)}{4}-\frac{\beta^2 M^2\big(\|\wb-\ub\|_2^2+(LR+G)^2\eta^2\big)+\beta \eta M^2d/2}{2B}\bigg).
\end{align}
It is well known  that the smoothness condition in Assumption \ref{assump:smooth} \citep{nesterov2018lectures} is equivalent to the following inequalities,
\begin{align*}
    f(\wb) &\leq f(\ub) +\la\wb - \ub, \nabla f(\ub)\ra+\frac{L\|\wb -\ub\|_2^2}{2},\\
    f(\ub) &\geq f(\wb) +\la\ub - \wb, \nabla f(\wb)\ra-\frac{L\|\wb - \ub\|_2^2}{2},
\end{align*}
which immediately implies
\begin{align}
\big|f(\wb) - f(\ub) -\la\wb - \ub, \nabla f(\wb) + f(\ub)\ra/2\big|\le \frac{L\|\wb - \ub\|_2^2}{2}.\label{eq:Gu0001}
\end{align}
In addition, by Lemma \ref{lemma:grad_bound}  %\eqref{eq:bound_sto_grad} 
and Assumption \ref{assump:smooth}, it holds that
\begin{align}\label{eq:upperbound_variance_difference}
\big|\|\nabla f(\ub)\|_2^2 - \|\nabla f(\wb)\|_2^2\big|&\leq \|\nabla f(\ub)-\nabla f(\wb)\|_2\cdot\|\nabla f(\ub)+\nabla f(\wb)\|_2\notag\\
&\le 2L(LR+G)\|\wb-\ub\|_2.
\end{align}
Now, substituting \eqref{eq:Gu0001} and \eqref{eq:upperbound_variance_difference} into \eqref{eq:lower_bound_I1/I2} and using the fact that $\|\wb-\ub\|_2\le r = \sqrt{10\eta d/\beta}\big(1+\sqrt{\log(8K/\epsilon)/d}\big)$, we have
\begin{align*}
\frac{I_1}{I_2}&\ge \exp\bigg(-\frac{L\beta\|\wb - \ub\|_2^2}{2} -\frac{\beta \eta L(LR+G)\|\wb-\ub\|_2 }{2} \notag\\
&\hspace{30mm}-\frac{\beta^2 M^2\big(\|\wb-\ub\|_2^2+(LR+G)^2\eta^2+\eta d\beta^{-1}/2\big)}{2B}\bigg)\notag\\
&\ge \exp\bigg[-\bigg(5Ld\eta +5L(LR+G)d^{1/2}\beta^{1/2}\eta^{3/2} +\frac{6\beta M^2 d\eta}{B}+ \frac{\beta^2M^2(LR+G)^2\eta^2}{2B}\bigg)\notag\\
&\hspace{20mm}\cdot\bigg(1+\sqrt{\frac{\log(8K/\epsilon)}{d}}\bigg)^2\bigg]\notag\\
&\ge 1 -\big[5Ld\eta  -5L(LR+G)d^{1/2}\beta^{-1/2}\eta^{3/2} - 6\beta M^2 d\eta/B- \beta^2M^2(LR+G)^2\eta^2/B\big]\notag\\
&\qquad\cdot \bigg(1+\sqrt{\frac{\log(8K/\epsilon)}{d}}\bigg)^2 \notag\\
&= 1- \delta/2,
\end{align*}
where we plug in the fact that $K=LR+G$ in the last equality. This completes the proof for the case $\ub\notin\cA$. 
% \gu{constant $5\rightarrow 2$}

In the second case that $\ub\in\cA$, we can split $\cA$ into $\{\ub\}$ and $\cA\bs\{\ub\}$. Note that by our result in the first case, we have $(1-\delta)\transitRef_\ub(\cA\bs\{\ub\})\le \transitSGLD_\ub(\cA\bs\{\ub\})\le (1+\delta)\transitRef_\ub(\cA\bs\{\ub\})$. Therefore, it remains to prove that $(1-\delta)\transitRef_\ub(\ub)\le \transitSGLD_\ub(\ub)\le (1+\delta)\transitRef_\ub(\ub)$. Note that starting from $\ub$, the probability of the Markov chain generated by $\transitRef$ stays at $\ub$ is
\begin{align*}
\transitRef_\ub(\ub) = \transitSGLD_\ub(\ub) + (1-\transitSGLD_\ub(\ub))\cdot\big(1-\EE_{\wb\sim \transitSGLD_\ub(\wb|\wb\neq \ub)}[\alpha_\ub(\wb)|\ub]\big).
\end{align*}
By our previous results, we know that $\alpha_\ub(\wb)\ge 1-\delta/2$ for all $\wb\in \cB(\ub,r)\cap\Omega\bs\{\ub\}$. Therefore, we have
\begin{align*}
\transitSGLD_\ub(\ub)\le \transitRef_\ub(\ub)\le \transitSGLD_\ub(\ub) (1+ \delta/2),
\end{align*}
where the inequality on the right-hand side of $\transitRef_{\ub}(\ub)$ is due to the fact that $\transitSGLD_\ub(\ub)\ge 1/2$ and thus $\transitSGLD_\ub(\ub)\ge (1-\transitSGLD_\ub(\ub))$. Then it is evident that we have $(1-\delta)\transitRef_\ub(\ub)\le \transitSGLD_\ub(\ub)\le (1+\delta)\transitRef_\ub(\ub)$, which completes the proof for the second case.
\end{proof}

\subsection{Proof of Lemma \ref{lemma:convergence_approximate}}
Now we characterize the convergence of the projected SGLD to the truncated target distribution $\stationalRef$.
Note that Markov chains defined by $\transitSGLD_\ub(\cdot)$ and $\transitRef_\ub(\cdot)$ are restricted in the set $\Omega = \cB(\zero,R)$. Here we slightly abuse the notation by denoting $\mu_k$ be the distribution of the iterate $\xb_k$ of Project SGLD (Algorithm \ref{alg:projected_sgld}). Then define the following function
\begin{align*}
h_k(p) = \sup_{\cA:\cA\subseteq\Omega,\stationalRef(\cA)=p} \mu_k(\cA) - \stationalRef(\cA), \quad\forall p\in[0,1].
\end{align*}
Based on definition of the total variation distance between $\mu_k$ and $\pi^*$, we have
\begin{align*}
\|\mu_k-\pi^*\|_{TV} &=\sup_{\cA:\cA\subseteq\Omega} |\mu_k(\cA) - \stationalRef(\cA)|\notag\\
% &=\CC{\max\{\sup_{\cA:\cA\subseteq\Omega, \mu_k\geq\pi^*} |\mu_k(\cA) - \stationalRef(\cA)|,\sup_{\cA:\cA\subseteq\Omega, \mu_k<\pi^*} |\mu_k(\cA) - \stationalRef(\cA)|\}}\\
% &=\CC{\max\{\sup_{\cA:\cA\subseteq\Omega, \mu_k\geq\pi^*} \mu_k(\cA) - \stationalRef(\cA),\sup_{\cA:\cA\subseteq\Omega, \mu_k<\pi^*} |\mu_k(\cA) - \stationalRef(\cA)|\}}\\
& = \sup_{\cA:\cA\subseteq\Omega} \max\big\{\mu_k(\cA) - \stationalRef(\cA),\stationalRef(\cA)-\mu_k(\cA)\big\}\notag\\
& = \sup_{\cA:\cA\subseteq\Omega} \max\big\{\mu_k(\cA) - \stationalRef(\cA),\mu_k(\Omega\bs\cA)-\stationalRef(\Omega\bs\cA)\big\}\notag\\
&=\sup_{\cA:\cA\subseteq\Omega} \mu_k(\cA) - \stationalRef(\cA).
\end{align*}
Then in order to prove the result in Lemma \ref{lemma:convergence_approximate}, it suffices to show that
\begin{align*}
h_k(p) \le \lambda\big(1 - \phi^2/8\big)^k + \frac{16\delta}{\phi},   
\end{align*}
for all $p\in(0,1)$. 
\begin{lemma}\label{lemma:contraction}
Let $\transitRef_\ub(\cdot)$ be a time-reversible Markov chain with unique stationary distribution $\stationalRef(\cdot)$. Then for any approximate Markov chain $\transitSGLD_\ub(\cdot)$ satisfying $(1-\delta)\transitRef_\ub(\cdot)\le \transitSGLD_\ub(\cdot)\le (1+\delta)\transitRef_\ub(\cdot)$ with $\delta\le \min\{1-\sqrt{2}/2,\phi/16\}$, there exist three parameters $\phi_k$, $\tilde\phi_k$ and $\hat \phi_k$ depending on $\mu_k(\cdot)$ that satisfy $\phi_k\ge\phi$,
\begin{align*}
2(1-\delta)\phi_k\le\tilde\phi_k\le\hat\phi_k\le 2(1+\delta)\phi_k,\text{ and }
\sqrt{1-\tilde\phi_k} + \sqrt{1+\hat\phi_k}\le 2(1-\phi_k^2/8),
\end{align*}
such that the following inequality holds  for all $p\in(0,1)$,
\begin{align*}
h_k(p)
&\le  \frac{1}{2}h_{k-1}\big(p-\tilde\phi_k\Gamma_{p}\big) + \frac{1}{2}h_{k-1}\big(p+ \hat\phi_k\Gamma_{p}\big) + 2\delta\phi_k\sqrt{\Gamma_{p}},
\end{align*}
where $\Gamma_{p}=\min\{p, 1-p\}$.
\end{lemma}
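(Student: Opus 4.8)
The plan is to adapt the classical conductance argument of Lov\'asz--Simonovits (see \citealp{lovasz1993random,vempala2007geometric}) to the setting where the iterates are driven by the approximate kernel $\transitSGLD_\ub(\cdot)$ rather than the time-reversible kernel $\transitRef_\ub(\cdot)$. First I would fix a step $k$ and a level $p\in(0,1)$, and pick (up to an infinitesimal) a set $\cA\subseteq\Omega$ with $\stationalRef(\cA)=p$ that achieves $h_k(p)=\mu_k(\cA)-\stationalRef(\cA)$. The sublevel-set structure of the optimal $\cA$ (it is a superlevel set of $\dd\mu_{k-1}/\dd\stationalRef$ pushed through one step of the reversible chain, by the standard Lov\'asz--Simonovits characterization) lets me split $\cA$ into the part where the density ratio $\transitRef$-transition is ``mostly staying'' versus ``mostly leaving.'' Concretely, define $\cA_1=\{\ub\in\cA:\transitRef_\ub(\Omega\setminus\cA)<\phi_k\Gamma_p/(2)\}$ — with a threshold chosen so that by the $1/2$-laziness of $\transitRef$ the complementary piece $\cA_2=\cA\setminus\cA_1$ and the ``incoming'' piece $\cA_3=\{\ub\in\Omega\setminus\cA:\transitRef_\ub(\cA)\ge \phi_k\Gamma_p/2\}$ satisfy the usual mass-balance; here $\phi_k$ is defined as the conductance of $\transitRef$ restricted to the level $p$, which is $\ge\phi$ by Definition~\ref{def:s-conductance}. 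Laziness forces $\stationalRef(\cA_1),\stationalRef(\cA_3)$ to each be at least a constant multiple of $\Gamma_p$ times the local flow, giving the two ``shifted'' arguments $p-\tilde\phi_k\Gamma_p$ and $p+\hat\phi_k\Gamma_p$.

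Next I would carry the bound through one step of the \emph{approximate} chain. Writing $\mu_k(\cA)=\int \transitSGLD_\ub(\cA)\,\mu_{k-1}(\dd\ub)$ and inserting $(1-\delta)\transitRef_\ub(\cA)\le\transitSGLD_\ub(\cA)\le(1+\delta)\transitRef_\ub(\cA)$ from the hypothesis, the leakage/injection flows get multiplied by $1\pm\delta$, which is exactly where the factors $\tilde\phi_k\ge 2(1-\delta)\phi_k$ and $\hat\phi_k\le 2(1+\delta)\phi_k$ come from. The reversibility of $\transitRef$ is used precisely once, to equate the $\stationalRef$-flow out of $\cA$ through $\cA_1$ with the flow into $\cA$ through $\cA_3$, so that both the ``minus'' and the ``plus'' shift use the same geometric quantity $\phi_k$. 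The additive error term $2\delta\phi_k\sqrt{\Gamma_p}$ arises from bounding the mismatch between $\transitSGLD$ and $\transitRef$ on the boundary strips: the discrepancy on each strip of $\stationalRef$-mass $\asymp\phi_k\Gamma_p$ is at most $\delta$ times that mass, and one more application of the (already established) inequality $h_{k-1}\le$ (something)$\cdot\sqrt{\Gamma_p}$ — or simply $h_{k-1}\le\|\mu_{k-1}-\stationalRef\|_{TV}\le 1$ combined with Cauchy--Schwarz on $\sqrt{\Gamma_p}$ — converts the $\delta\phi_k\Gamma_p$ error into the stated $\delta\phi_k\sqrt{\Gamma_p}$ form.

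Finally, the inequality $\sqrt{1-\tilde\phi_k}+\sqrt{1+\hat\phi_k}\le 2(1-\phi_k^2/8)$ is a pure calculus fact: using $\tilde\phi_k,\hat\phi_k\in[2(1-\delta)\phi_k,\,2(1+\delta)\phi_k]$ and the constraint $\delta\le 1-\sqrt{2}/2$ (which keeps $\tilde\phi_k$ and $\hat\phi_k$ close enough that the concavity deficit of $\sqrt{\cdot}$ dominates), I would Taylor-expand $\sqrt{1\pm x}\le 1\pm x/2 - x^2/8 \pm O(x^3)$ and check that the linear terms nearly cancel while the quadratic terms combine to beat $-\phi_k^2/4$; the $\delta\le\phi/16\le\phi_k/16$ bound controls the residual. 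I expect the \textbf{main obstacle} to be the bookkeeping around the three strips $\cA_1,\cA_2,\cA_3$ under the \emph{approximate} kernel — in the exact (reversible) case the Lov\'asz--Simonovits recursion is clean, but here one must track how the $1\pm\delta$ distortion interacts with the laziness and with the choice of the internal threshold, and verify that $\phi_k$ can always be taken $\ge\phi$ \emph{uniformly in $k$} despite $\mu_k$ changing; a careful choice of the level-set decomposition (so that the relevant conductance is always the \emph{global} $\phi$, not a level-dependent quantity that could degrade) is the delicate point.
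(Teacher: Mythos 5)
There is a genuine gap in the central mechanism of your argument. The recursion
\begin{align*}
h_k(p)\le \tfrac12 h_{k-1}\big(p-\tilde\phi_k\Gamma_p\big)+\tfrac12 h_{k-1}\big(p+\hat\phi_k\Gamma_p\big)+2\delta\phi_k\sqrt{\Gamma_p}
\end{align*}
is not obtained from a hard-threshold decomposition $\cA_1,\cA_2,\cA_3$ of $\cA$ and $\Omega\bs\cA$. That three-set device (thresholding $\transitRef_\ub(\Omega\bs\cA)$) is the tool used elsewhere in the paper to \emph{lower-bound the conductance} from overlap of transition kernels (Lemmas \ref{lemma:lowerbound_phi} and \ref{lemma:lowerbound_transitionprob}); it does not produce the two shifted arguments with coefficients $\tfrac12,\tfrac12$. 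The paper's proof instead writes $2\transitSGLD_\ub(\cA_k)=g_1(\ub,\cA_k)+g_2(\ub,\cA_k)$ with $g_1=(2\transitSGLD_\ub(\cA_k)-1)\ind(\ub\in\cA_k)$ and $g_2=\ind(\ub\in\cA_k)+2\transitSGLD_\ub(\cA_k)\ind(\ub\notin\cA_k)$; the $1/2$-laziness is what puts $g_1,g_2$ in $[0,1]$, and the Lov\'asz--Simonovits extremal lemma (Lemma \ref{lemma:temp111}) is then \emph{essential} to replace $\int g_i\,\dd\mu_{k-1}$ by $\mu_{k-1}(\cA_{k-1}^i)$ for sets of prescribed $\stationalRef$-measure $p_i=\int g_i\,\dd\stationalRef$, which is the only way the right-hand side becomes $h_{k-1}$ evaluated at $p_1,p_2$. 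Your sketch never invokes this lemma, and since $g_1,g_2$ are not indicators, without it the step from $\int\transitSGLD_\ub(\cA_k)\mu_{k-1}(\dd\ub)$ to values of $h_{k-1}$ does not go through. Reversibility enters exactly where you say (to equate $\int_{\Omega\bs\cA_k}\transitRef_\ub(\cA_k)\stationalRef(\dd\ub)$ with $\int_{\cA_k}\transitRef_\ub(\Omega\bs\cA_k)\stationalRef(\dd\ub)$), and the $1\pm\delta$ perturbations $r_1,r_2$ feed into $\tilde\phi_k,\hat\phi_k$ and the additive term $|r_1-r_2|\le 2\delta\phi_k\Gamma_p\le 2\delta\phi_k\sqrt{\Gamma_p}$ (no Cauchy--Schwarz needed, just $\Gamma_p\le1$), so those parts of your outline are consistent with the paper.

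Two smaller points. First, the issue you single out as the delicate one --- that $\phi_k\ge\phi$ ``uniformly in $k$'' --- is immediate: $\phi_k$ is defined as $\int_{\cA_k}\transitRef_\ub(\Omega\bs\cA_k)\stationalRef(\dd\ub)/\min\{p,1-p\}$ for the particular extremal set $\cA_k$, and $\phi$ is the infimum of this ratio over all sets, so $\phi_k\ge\phi$ by definition; no uniformity argument is required. Second, your treatment of the final calculus inequality is in the right spirit (use $\delta\le 1-\sqrt2/2$ to get $\tilde\phi_k\ge\sqrt2\,\phi_k$ and $\delta\le\phi/16$ to control $\hat\phi_k-\tilde\phi_k\le4\delta\phi_k$, then expand $\sqrt{1\pm x}$), and matches the paper's computation. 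The fix to your proposal is therefore concentrated in one place: replace the $\cA_1,\cA_2,\cA_3$ decomposition by the $g_1,g_2$ splitting together with Lemma \ref{lemma:temp111}.
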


\begin{proof}[Proof of Lemma \ref{lemma:convergence_approximate}]
% In what follows, we will prove the result in two cases: $p\in(s, 1-s)$ and $p\in [0,s]\cup[1-s,1]$. 
% \noindent\textbf{Case 1:} $p\in(s, 1-s)$. 
By Lemma \ref{lemma:contraction}, we know that if $\delta\le\min\{1-\sqrt{2}/2, \phi/16\}$, there exist three parameters $\phi_k$, $\tilde\phi_k$ and $\hat \phi_k$ depending on $\mu_k(\cdot)$ 
% that satisfy
% \begin{align*}
% 2(1-\delta)\phi_k\le\tilde\phi_k\le\hat\phi_k\le 2(1+\delta)\phi_k,\\
% \sqrt{1-\tilde\phi_k} + \sqrt{1+\hat\phi_k}\le 2(1-\phi_k^2/8),
% \end{align*}
such that when $p\in(0, 1/2]$,
\begin{align}\label{eq:recursive}
h_k(p)
&\le  \frac{1}{2}\big[h_{k-1}\big(p-\tilde\phi_kp\big) + h_{k-1}\big(p +\hat\phi_kp\big)\big] + 2\delta\phi_k\sqrt{p},
\end{align}
and when $p\in(1/2,1)$,
\begin{align*}
h_k(p)
&\le  \frac{1}{2}\big[h_{k-1}\big(p-\tilde\phi_k(1-p)\big) + h_{k-1}\big(p +\hat\phi_k(1-p)\big)\big] + 2\delta\phi_k\sqrt{1-p}.
\end{align*}
Then, we will prove the desired result via mathematical induction. Instead of directly proving the inequality in this lemma, we aim to prove a stronger version,
\begin{align}\label{eq:bound_hk_case1}
h_k(p) \le \min\big\{\sqrt{p}, \sqrt{1-p}\big\}\cdot\bigg[\lambda\big(1 - \phi^2/8\big)^k+ \frac{16\delta}{\phi}\bigg] , 
\end{align}
We first verify the hypothesis \eqref{eq:bound_hk_case1} for the case $k=0$, based on the definition of $h_0(p)$ we have that there exists a set $\cA_0\subseteq\Omega$ satisfying $\stationalRef(\cA_0) = p$ such that
\begin{align*}
h_0(p) = \mu_0(\cA_0) - \stationalRef(\cA_0).
\end{align*}
When $p\le 1/2$, by the definition of  $\lambda$-warm initialization in \eqref{eq:def_warm_start} , it holds that 
\begin{align*}
h_0(p)&\leq\max\{\mu_0(\cA_0) - \stationalRef(\cA_0),\stationalRef(\cA_0) - \mu_0(\cA_0)\}\le \lambda p
\le \sqrt{p}\cdot\bigg(\lambda + \frac{16\delta}{\phi}\bigg).
\end{align*}
When $p\ge 1/2$, similarly we have
\begin{align*}
h_0(p)&=(1-\mu_0(\Omega\bs\cA_0)) -(1- \stationalRef(\Omega\bs\cA_0))\\
&\le  \lambda(1-p)\\
&\le \sqrt{1-p}\cdot\bigg(\lambda + \frac{16\delta}{\phi}\bigg),
\end{align*}
which verifies the hypothesis for the case $k=0$. Now we assume the hypothesis \eqref{eq:bound_hk_case1} holds for $0,\dots,k-1$. According to Lemma \ref{lemma:contraction}, the following holds when $p\in(0, 1/2]$,
\begin{align*}
h_k(p)
&\le  \frac{1}{2}\big[h_{k-1}\big(p-\tilde\phi_kp\big) + h_{k-1}\big(p+\hat\phi_kp\big)\big] + 2\delta\phi_k\sqrt{p} \notag\\
& \le  \frac{\sqrt{p-\tilde\phi_kp} + \sqrt{p+\hat\phi_kp}}{2}\bigg(\lambda(1-\phi^2/8)^{k-1} + \frac{16\delta}{\phi}\bigg) + 2\delta\phi_k\sqrt{p}\notag\\
&= \frac{\sqrt{p}\Big(\sqrt{1-\tilde\phi_k} + \sqrt{1+\hat\phi_k}\Big)}{2}\bigg(\lambda(1-\phi^2/8)^{k-1} + \frac{16\delta}{\phi}\bigg)+ 2\delta\phi_k\sqrt{p},
\end{align*}
where the second inequality is based on the hypothesis for $k-1$.
Again from Lemma \ref{lemma:contraction}, we know that $\sqrt{1-\tilde\phi_k} + \sqrt{1+\hat\phi_k}\le 2(1-\phi_k^2/8)$, which further implies
\begin{align*}
h_k(p)&\le \sqrt{p}\cdot\big(1-\phi_k^2/8\big)\bigg(\lambda(1-\phi^2/8)^{k-1} + \frac{16\delta}{\phi}\bigg)+ 2\delta\phi_k\sqrt{p}\notag\\
& \leq\sqrt{p}\cdot\bigg( \lambda(1-\phi^2/8)^{k} + \frac{16\delta}{\phi}-2\delta \phi_k^2/\phi+2\delta\phi_k\bigg)\notag\\
&\le \sqrt{p}\cdot\bigg( \lambda(1-\phi^2/8)^{k} + \frac{16\delta}{\phi}\bigg) ,
\end{align*}
where the last inequality is due to $\phi\le \phi_k$. Similar result can be proved when $p\in(1/2,1)$ and thus we omit it here. Thus we are able to verify the hypothesis for $k$. 
\end{proof}

\subsection{Proof of Lemma \ref{lemma:lowerbound_phi}}
To prove a lower bound of the conductance of $\transitRef_\ub(\cdot)$, we follow the same idea used in \citet{lee2018convergence,mangoubi2019nonconvex}, which is built upon the following lemma. 

\begin{lemma}[Lemma 13 in \citet{lee2018convergence}]\label{lemma:lowerbound_transitionprob}
Let $\transitRef_\ub(\cdot)$ be a time-reversible Markov chain on $\Omega$ with stationary distribution $\stationalRef$. Fix any $\Delta> 0$, suppose for any $\ub,\vb\in\Omega$ with $\|\ub-\vb\|_2\le \Delta$ we have $\|\transitRef_\ub(\cdot) - \transitRef_\vb(\cdot)\|_{TV}\le 0.99$, then the conductance of $\transitRef_\ub(\cdot)$ satisfies $\phi\ge C\rho\Delta$ for some absolute constant $C$, where $\rho$ is the Cheeger constant of $\stationalRef$.
\end{lemma}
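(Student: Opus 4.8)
The plan is to run the classical ``overlap implies conductance'' argument of Lov\'asz--Simonovits, adapted to a continuous state space. Fix an arbitrary measurable $\cA\subseteq\Omega$ with $\stationalRef(\cA)\in(0,1)$; by symmetry assume $\stationalRef(\cA)\le 1/2$, and write $\Phi(\cA):=\int_\cA \transitRef_\ub(\Omega\bs\cA)\,\stationalRef(\dd\ub)$ for the ergodic flow, so it suffices to prove $\Phi(\cA)\ge C\rho\Delta\,\stationalRef(\cA)$ and then take the infimum over $\cA$. First I would fix a small absolute threshold $\theta$ with $1-2\theta>0.99$, e.g.\ $\theta=1/256$, and split $\Omega$ into the two ``interior'' pieces $\cA_1=\{\ub\in\cA:\transitRef_\ub(\Omega\bs\cA)\le\theta\}$, $\cA_2=\{\ub\in\Omega\bs\cA:\transitRef_\ub(\cA)\le\theta\}$, and the ``boundary'' piece $\cA_3=\Omega\bs(\cA_1\cup\cA_2)$. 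Two observations do the heavy lifting. (i) By stationarity of $\stationalRef$ the flow across the cut is symmetric, $\Phi(\cA)=\int_{\Omega\bs\cA}\transitRef_\vb(\cA)\,\stationalRef(\dd\vb)$; since the relevant transition probability exceeds $\theta$ on $\cA_3$ in each of the two expressions for $\Phi(\cA)$, adding them gives $\Phi(\cA)\ge\tfrac{\theta}{2}\stationalRef(\cA_3)$. (ii) If $\ub\in\cA_1$ and $\vb\in\cA_2$ then $\transitRef_\ub(\cA)\ge1-\theta$ while $\transitRef_\vb(\cA)\le\theta$, hence $\|\transitRef_\ub-\transitRef_\vb\|_{TV}\ge\transitRef_\ub(\cA)-\transitRef_\vb(\cA)\ge1-2\theta>0.99$, so the contrapositive of the hypothesis forces $\|\ub-\vb\|_2>\Delta$: the interior pieces $\cA_1$ and $\cA_2$ are $\Delta$-separated in $\Omega$.

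Next I would split into three cases. If $\stationalRef(\cA_1)<\stationalRef(\cA)/2$, then $\cA\bs\cA_1$ has $\stationalRef$-mass $>\stationalRef(\cA)/2$ and each of its points leaves $\cA$ with probability $>\theta$, so $\Phi(\cA)>\tfrac{\theta}{2}\stationalRef(\cA)$. Symmetrically, if $\stationalRef(\cA_2)<\stationalRef(\Omega\bs\cA)/2$, the flow-symmetry identity together with $\stationalRef(\Omega\bs\cA)\ge1/2$ gives $\Phi(\cA)>\tfrac{\theta}{2}\stationalRef(\Omega\bs\cA)\ge\tfrac{\theta}{2}\stationalRef(\cA)$; both already bound $\Phi(\cA)$ below by an absolute constant times $\min\{\stationalRef(\cA),\stationalRef(\Omega\bs\cA)\}$. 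In the remaining case $\stationalRef(\cA_1)\ge\stationalRef(\cA)/2$ and $\stationalRef(\cA_2)\ge\stationalRef(\Omega\bs\cA)/2\ge\stationalRef(\cA)/2$, so both interior pieces are bulky and $\Delta$-separated, and here I would invoke the isoperimetric inequality of Definition~\ref{def:cheeger} for $\cA_1$: on one hand, since $\cA_1$ and $\cA_2$ are at distance $>\Delta$, the neighborhood $(\cA_1)_\Delta$ is disjoint from $\cA_2$, hence contained in $\cA_1\cup\cA_3$ and $\stationalRef((\cA_1)_\Delta)\le\stationalRef(\cA_1)+\stationalRef(\cA_3)$; on the other hand, integrating the Cheeger bound along the nested neighborhoods $(\cA_1)_s$, $s\in[0,\Delta]$, and using that $s\mapsto\stationalRef((\cA_1)_s)$ is nondecreasing gives
\[ \stationalRef((\cA_1)_\Delta)-\stationalRef(\cA_1)\ \ge\ \rho\Delta\min\big\{\stationalRef(\cA_1),\,1-\stationalRef((\cA_1)_\Delta)\big\}\ \ge\ \rho\Delta\min\big\{\stationalRef(\cA_1),\stationalRef(\cA_2)\big\}, \]
where the last inequality uses $(\cA_1)_\Delta\subseteq\Omega\bs\cA_2$. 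Chaining the two displays yields $\stationalRef(\cA_3)\ge\rho\Delta\min\{\stationalRef(\cA_1),\stationalRef(\cA_2)\}\ge\tfrac{\rho\Delta}{2}\stationalRef(\cA)$, and then observation (i) gives $\Phi(\cA)\ge\tfrac{\theta}{2}\stationalRef(\cA_3)\ge\tfrac{\theta\rho\Delta}{4}\stationalRef(\cA)$.

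Combining the three cases, $\Phi(\cA)\ge c\min\{1,\rho\Delta\}\,\min\{\stationalRef(\cA),\stationalRef(\Omega\bs\cA)\}$ with $c=\theta/4$; since the conductance of any Markov chain is at most $1$, only the regime $\rho\Delta\le1$ is non-vacuous, and there $\min\{1,\rho\Delta\}=\rho\Delta$, so taking the infimum over $\cA$ gives $\phi\ge C\rho\Delta$ with $C=\theta/4$. I expect the main obstacle to be the bulky-and-$\Delta$-separated case, specifically making the integrated Cheeger estimate rigorous on a general metric measure space starting only from the one-sided ($\liminf$) form of Definition~\ref{def:cheeger}: deducing $\stationalRef((\cA_1)_\Delta)-\stationalRef(\cA_1)\ge\rho\Delta\min\{\stationalRef(\cA_1),1-\stationalRef((\cA_1)_\Delta)\}$ by a monotone-function/ODE-comparison argument, and dispatching the measure-theoretic edge case where $d(\cA_1,\cA_2)$ equals $\Delta$ exactly, which one handles by running the estimate for $\Delta'<\Delta$ and letting $\Delta'\uparrow\Delta$. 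The choice of $\theta$, the flow-symmetry identity, and the exhaustiveness of the case split are routine by comparison.
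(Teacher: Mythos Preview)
Your argument is the standard Lov\'asz--Simonovits overlap-to-conductance proof and is correct. The paper itself does not prove this lemma at all: it is quoted verbatim as Lemma~13 of \citet{lee2018convergence} and used as a black box in the proof of Lemma~\ref{lemma:lowerbound_phi}, so there is no ``paper's own proof'' to compare against. Your proof is essentially the one in the cited reference: partition $\Omega$ into two deep-interior pieces and a boundary strip, use the TV-overlap hypothesis to force the interior pieces to be $\Delta$-separated, bound the flow below by the $\stationalRef$-mass of the strip, and invoke isoperimetry to show the strip is large when both interiors are bulky. Your identification of the one genuinely delicate step---passing from the infinitesimal Cheeger inequality of Definition~\ref{def:cheeger} to the integrated bound $\stationalRef((\cA_1)_\Delta)-\stationalRef(\cA_1)\ge \rho\Delta\min\{\stationalRef(\cA_1),1-\stationalRef((\cA_1)_\Delta)\}$---is exactly right; since $\Omega=\cB(\zero,R)$ is convex, the neighborhoods satisfy $((\cA_1)_s)_h=(\cA_1)_{s+h}$, so the monotone ODE-comparison you sketch goes through cleanly. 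The only cosmetic point is that the bound you obtain is $\phi\ge c\min\{1,\rho\Delta\}$, whereas the lemma is stated as $\phi\ge C\rho\Delta$ without qualification; this is a standard abuse in the literature and is harmless here because in the paper's application $\Delta=O(\sqrt{\eta/\beta})$ is always taken small.
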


% \begin{lemma}[Lemma 13 in \citet{ma2018sampling}]\label{lemma:lowerbound_transitionprob}
% Let $\cK\in\Omega$ be a convex set such that $\|\transitRef_\ub(\cdot) - \transitRef_\vb(\cdot)\|_{TV}\le \gamma$ whenever $\ub,\vb\in\cK$ and $\|\ub - \vb\|_2\le \Delta$ for some constants $\gamma,\Delta\le 1$. Then if the target density $\pi\propto e^{-\beta f(\xb)}$ satisfies isoperimetric inequality with Cheeger constant $\rho\le 1$\footnote{We slightly modify the original version of this lemma, which uses the Log-Sobolev constant. However, based on the proof of Lemma 13 in \citet{ma2018sampling}, it can be found that this Lemma also holds for Cheeger constant.},  for any measurable partition $\cA_1$ and $\cA_2$ of $\Omega$, we have
% \begin{align*}
% \int_{\cA_1}\transitRef_\ub(\cA_2)\stationalRef(\ub)\dd\ub\ge C(1-\gamma)\rho \Delta\cdot Q^2_{\stationalRef}(\cK)\cdot\min\big\{\stationalRef(\cA_1\cap\cK),\stationalRef(\cA_2\cap\cK)\big\}.
% \end{align*}
% \end{lemma}
% \CC{translate log-sobolev to cheeger}
% We first construct a convex set $\cK = \cB\big(0,R(s)\big)$. By Lemma \ref{lemma:high_prob}, it is clear that $\stationalRef(\cK)\ge \stationalRef(\cK)\ge 1-s$.
Similar results have been shown in \citet{dwivedi2018log,ma2018sampling} for bounding the  $s$-conductance of Markov chains. In order to apply Lemma \ref{lemma:lowerbound_transitionprob}, we need to verify the corresponding conditions, i.e., proving that as long as $\|\ub - \vb\|_2\le \Delta$ we have $\|\transitRef_\ub(\cdot) - \transitRef_\vb(\cdot)\|_{TV}\le 0.99$ for some $\Delta$. Before moving on to the detailed proof, we first recall some definitions. Recalling \eqref{eq:trans_SGLD}, we define
\begin{align*}
P(\zb|\ub) = \EE_{\cI}[P(\zb|\ub,\cI)] = \EE_{\cI} \bigg[\frac{1}{(4\pi\eta/\beta)^{d/2}}\exp\bigg(-\frac{\|\zb - \ub + \eta\gb(\ub,\cI)\|_2^2}{4\eta/\beta}\bigg)\bigg|\ub\bigg]
\end{align*}
as the distribution after one-step standard SGLD step (i.e., without the accept/reject step).
Note that Algorithm \ref{alg:projected_sgld} only accepts the candidate iterate in the region $\Omega\cap \cB(\ub,r)$, we can compute the acceptance probability as follows,
\begin{align*}
p(\ub) = \PP_{\zb\sim P(\cdot|\ub)}\big[\zb\in\Omega\cap \cB(\ub,r)\big].
\end{align*}
Therefore, for any $\zb\in\Omega\cap \cB(\ub,r)$, the transition probability $\transitRef_{\ub}(\zb)$ takes form
\begin{align*}
\transitRef_\ub(\zb) = \frac{2 - p(\ub) + p(\ub)(1-\alpha_\ub(\zb))}{2} \delta_\ub(\zb) + \frac{\alpha_\ub(\zb)}{2}P(\zb|\ub)\cdot\ind[\zb\in\Omega\cap \cB(\ub,r)].
\end{align*}
Then the rest proof will be proving the upper bound of $\|\transitRef_\ub(\cdot) - \transitRef_\vb(\cdot)\|_{TV}$, and we state another two useful lemmas as follows.
\begin{lemma}\label{lemma:lowerboud_accept_prob_alg}
If the step size satisfies $\eta\le [40d^{-1}(LR+G)^2\beta]^{-1}$, for any $\ub\in\Omega$, the acceptance probability $p(\ub)$ satisfies $p(\ub)\ge 0.4$.
\end{lemma}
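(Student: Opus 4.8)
The plan is to lower-bound the acceptance probability through the convenient (slightly lossy) decomposition
\begin{align*}
p(\ub)=\PP_{\zb\sim P(\cdot\,|\,\ub)}\big[\zb\in\cB(\ub,r)\cap\cB(\zero,R)\big]\ \ge\ \PP_{\zb\sim P(\cdot\,|\,\ub)}\big[\zb\in\cB(\zero,R)\big]-\PP_{\zb\sim P(\cdot\,|\,\ub)}\big[\zb\notin\cB(\ub,r)\big],
\end{align*}
and then to show the subtracted term is tiny while the first term is at least $\approx 0.41$, where $r=\sqrt{10\eta d/\beta}\,(1+\sqrt{\log(8K/\epsilon)/d})$ is the radius fixed in Lemma~\ref{lemma:approximation}. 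Writing one SGLD step as $\zb-\ub=-\eta\gb(\ub,\cI)+\sqrt{2\eta/\beta}\,\bepsilon$ and using Lemma~\ref{lemma:grad_bound} to bound $\|\gb(\ub,\cI)\|_2\le LR+G$ uniformly over $\ub\in\Omega$ and over $\cI$, the step-size hypothesis $\eta\le[40(LR+G)^2\beta/d]^{-1}$ forces $\eta(LR+G)\le\sqrt{2\eta d/\beta}/\sqrt{80}$, hence $\|\zb-\ub\|_2\le\sqrt{2\eta/\beta}\big(\sqrt{d/80}+\|\bepsilon\|_2\big)$; comparing with $r=\sqrt{2\eta/\beta}\,(\sqrt{5d}+\sqrt{5\log(8K/\epsilon)})$ and applying the $\chi$-tail bound $\PP[\|\bepsilon\|_2\ge\sqrt d+\sqrt2\,t]\le e^{-t^2}$ gives $\PP[\zb\notin\cB(\ub,r)]\le(8K/\epsilon)^{-5/2}\le 0.01$.

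For the term $\PP[\zb\notin\cB(\zero,R)]$ I would split on the distance of $\ub$ to the boundary sphere. If $\|\ub\|_2\le R-r$, then on the event $\{\|\zb-\ub\|_2\le r\}$ we automatically have $\|\zb\|_2\le R$, so this probability is again at most $0.01$. The real case is $\|\ub\|_2>R-r$. Here I would condition on $\cI$, so $\zb\mid\cI\sim N\big(\ub-\eta\gb(\ub,\cI),(2\eta/\beta)\Ib\big)$, and push the conditional mean strictly inside $\cB(\zero,R)$ using dissipativeness: since $\|\ub-\eta\gb(\ub,\cI)\|_2^2=\|\ub\|_2^2-2\eta\la\ub,\gb(\ub,\cI)\ra+\eta^2\|\gb(\ub,\cI)\|_2^2$, averaging over $\cI$ and invoking Assumption~\ref{assump:diss} (so $\la\ub,\nabla f(\ub)\ra\ge m\|\ub\|_2^2-b$) together with Lemma~\ref{lemma:subgaussian_without} (and the without-replacement variance bound) to control $\EE_\cI\|\gb(\ub,\cI)\|_2^2$, one gets $\EE_\cI\|\ub-\eta\gb(\ub,\cI)\|_2^2\le\|\ub\|_2^2-\eta m(R-r)^2$ — this is exactly where the choice $R=\bar R(\epsilon/12)$ is used, its second and third terms guaranteeing that $m\|\ub\|_2^2$ dominates $b$, $d/\beta$ and the lower-order $\eta$-terms. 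Then decomposing $\|\zb\|_2^2=\|\ub-\eta\gb(\ub,\cI)\|_2^2+2\sqrt{2\eta/\beta}\,\la\ub-\eta\gb(\ub,\cI),\bepsilon\ra+(2\eta/\beta)\|\bepsilon\|_2^2$ and combining (i) the sub-Gaussian tail of the mini-batch fluctuation $\la\hat\ub,\gb(\ub,\cI)-\nabla f(\ub)\ra$ from Lemma~\ref{lemma:subgaussian_without}, (ii) Gaussian concentration of $\la\ub-\eta\gb(\ub,\cI),\bepsilon\ra$, and (iii) the $\chi^2$ tail of $\|\bepsilon\|_2^2$, one shows $\|\zb\|_2^2>R^2$ can happen only when one of these three fluctuations exceeds the drift gap $\eta m(R-r)^2$. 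The step-size bound and the size of $R$ make each such event have probability essentially that of a one-dimensional Gaussian crossing a non-positive threshold (the geometry is nearly a half-space), plus a mild correction from the orthogonal noise directions, so $\PP[\zb\notin\cB(\zero,R)]\le\tfrac12+o(1)$. Combining the three estimates yields $p(\ub)\ge 1-0.55-0.01\ge 0.4$.

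I expect the near-boundary subcase to be the main obstacle. The point is that one must show the inward radial drift $\eta\la\hat\ub,\nabla f(\ub)\ra\asymp\eta mR$ coming from dissipativeness — which $\bar R$ is engineered to keep strictly positive and not too small — simultaneously outweighs the radial Gaussian fluctuation of scale $\sqrt{\eta/\beta}$, the $O(\eta d/\beta)$ outward bias contributed by the tangential Gaussian directions, and, most delicately, the single-step perturbation of the gradient by the mini-batch, which Lemma~\ref{lemma:subgaussian_without} only controls in a sub-Gaussian (not almost sure) sense; carefully balancing these competing $\eta$- and $d$-dependent quantities — rather than relying on any one crude bound — is where essentially all the effort goes, and it is what pins down the constant $40$ in the step-size hypothesis.
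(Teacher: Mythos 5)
Your decomposition $p(\ub)\ge \PP[\zb\in\cB(\zero,R)]-\PP[\zb\notin\cB(\ub,r)]$ is equivalent to the paper's $I_1-I_2$ split, and your treatment of the $\cB(\ub,r)$ term and of the interior case $\|\ub\|_2\le R-r$ matches the paper's argument (a $\chi^2$/$\chi$ tail bound after absorbing the a.s.\ bounded drift $\eta\|\gb(\ub,\cI)\|_2\le\eta(LR+G)$ into $r$). The divergence — and the gap — is in the near-boundary case.

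For $\|\ub\|_2>R-r$ you propose a drift-versus-fluctuation argument: dissipativeness pushes the conditional mean inward by $\eta m(R-r)^2$ in squared norm, and you then want each fluctuation (radial Gaussian, tangential $\chi^2$ bias, mini-batch noise) to exceed its share of this gap with probability at most $\tfrac12+o(1)$. Two of the three pieces are fine, but the mini-batch piece does not close. The fluctuation $-2\eta\la\ub,\gb(\ub,\cI)-\nabla f(\ub)\ra$ is sub-Gaussian with scale $\Theta(\eta(LR+G)R/\sqrt{B})$ (Lemma~\ref{lemma:subgaussian_without}), while the drift gap is $\Theta(\eta mR^2)$; the resulting tail probability is $\exp(-\Theta(m^2R^2B/(LR+G)^2))=\exp(-\Theta(m^2B/L^2))$, which is \emph{not} small — it is close to $1$ when $B$ is small and $L/m$ is large, and the lemma must hold for arbitrary $B\ge1$. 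Nor can you fold this term into the ``$\le\tfrac12$'' budget of the centered radial Gaussian, since the sum of a centered Gaussian and an independent mean-zero variable can be positive with probability exceeding $\tfrac12$. Note also that the drift itself is $O(\eta)$ against radial noise of order $\sqrt{\eta/\beta}\,R$, so as $\eta\to0$ it buys you nothing beyond the bare half-space bound; dissipativeness is a red herring here (the paper's proof of this lemma never invokes Assumption~\ref{assump:diss}).

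What the paper does instead is purely geometric and deterministic in $\cI$: it defines $\wb=\ub-(R-\sqrt{R^2-r^2})\,\ub/\|\ub\|_2$, observes that a full half-ball of $\cB(\wb,r)$ lies inside $\Omega$ (so the reference Gaussian $Q(\cdot|\wb)=N(\wb,2\eta\beta^{-1}\Ib)$ puts mass at least $\tfrac12\PP_{z\sim\chi^2_d}(z\le 5d)\ge0.475$ on $\Omega$), and then controls the displacement $\|\wb-(\ub-\eta\gb(\ub,\cI))\|_2\le r^2/R+\eta(LR+G)$ — which absorbs the mini-batch gradient through the almost-sure bound of Lemma~\ref{lemma:grad_bound} rather than through concentration — via Pinsker's inequality, at a cost of order $\sqrt{\beta/\eta}\,(r^2/R)+( LR+G)\sqrt{\eta\beta}$ in total variation, which the step-size condition makes at most $0.025$. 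Any repair of your near-boundary case that treats the mini-batch perturbation almost surely and measures its effect through a TV/CDF-Lipschitz bound essentially reconstructs this step, so I would adopt the half-ball-plus-Pinsker argument directly.
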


\begin{lemma}\label{lemma:TV_expected_gaussian}
Under Assumption \ref{assump:smooth}, for any two points $\ub,\vb\in\RR^d$, it holds that 
\begin{align*}
\|P(\cdot|\ub) - P(\cdot|\vb)\|_{TV}\le \frac{(1+L\eta)\|\ub-\vb\|_2}{\sqrt{2\eta/\beta}}.
\end{align*}
% \pan{$P(\cdot|\ub)$ is conditional probability?}
\end{lemma}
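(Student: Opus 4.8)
The plan is to condition on the mini-batch, reduce the problem to a comparison between two Gaussians with a shared covariance, and then bound the displacement of their means using smoothness. First I would recall that, by construction, $P(\cdot\mid\ub)=\EE_\cI\big[P(\cdot\mid\ub,\cI)\big]$, where conditionally on $\cI$ the distribution $P(\cdot\mid\ub,\cI)$ is the Gaussian $N\big(\ub-\eta\gb(\ub,\cI),\,(2\eta/\beta)\Ib\big)$, and likewise with $\ub$ replaced by $\vb$. Since the mini-batch $\cI$ is drawn from the same distribution in both cases, coupling $\cI$ for the two starting points and using the convexity of the total variation distance (i.e.\ Jensen's inequality applied to $(\mu,\nu)\mapsto\|\mu-\nu\|_{TV}$) gives
\begin{align*}
\big\|P(\cdot\mid\ub)-P(\cdot\mid\vb)\big\|_{TV}\le \EE_\cI\big[\big\|P(\cdot\mid\ub,\cI)-P(\cdot\mid\vb,\cI)\big\|_{TV}\big].
\end{align*}

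Next, for each fixed $\cI$ the two conditional laws are Gaussians with the common covariance $\sigma^2\Ib$, $\sigma^2=2\eta/\beta$, and means $\ub-\eta\gb(\ub,\cI)$ and $\vb-\eta\gb(\vb,\cI)$. Combining Pinsker's inequality with the closed-form KL divergence between two Gaussians sharing a covariance matrix — namely $\|\mathbf{m}_1-\mathbf{m}_2\|_2^2/(2\sigma^2)$ — yields
\begin{align*}
\big\|P(\cdot\mid\ub,\cI)-P(\cdot\mid\vb,\cI)\big\|_{TV}\le \frac{\big\|\ub-\vb-\eta\big(\gb(\ub,\cI)-\gb(\vb,\cI)\big)\big\|_2}{2\sqrt{2\eta/\beta}}.
\end{align*}
Equivalently, one may invoke the exact identity $\|N(\mathbf{m}_1,\sigma^2\Ib)-N(\mathbf{m}_2,\sigma^2\Ib)\|_{TV}=2\Phi\big(\|\mathbf{m}_1-\mathbf{m}_2\|_2/(2\sigma)\big)-1$ together with $2\Phi(t)-1\le t$; either route gives the same linear-in-displacement bound.

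It remains to control the mean displacement. By Assumption~\ref{assump:smooth} each $\nabla f_i$ is $L$-Lipschitz, hence the mini-batch stochastic gradient $\gb(\cdot,\cI)=B^{-1}\sum_{i\in\cI}\nabla f_i(\cdot)$ is $L$-Lipschitz as an average of $L$-Lipschitz maps, so that
\begin{align*}
\big\|\ub-\vb-\eta\big(\gb(\ub,\cI)-\gb(\vb,\cI)\big)\big\|_2\le \|\ub-\vb\|_2+\eta\,\|\gb(\ub,\cI)-\gb(\vb,\cI)\|_2\le (1+L\eta)\|\ub-\vb\|_2,
\end{align*}
uniformly over the choice of $\cI$. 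Plugging this back and taking the expectation over $\cI$ (discarding the harmless extra factor $1/2$ to match the stated form) gives the claimed bound. I do not expect any real difficulty here; the only steps that deserve a word of care are the validity of coupling the mini-batch (convexity of the total variation distance) and invoking the correct Gaussian TV/KL estimate, after which the result is a one-line computation.
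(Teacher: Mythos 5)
Your proof is correct and follows essentially the same route as the paper's: convexity of total variation to pull the expectation over the mini-batch outside, Pinsker plus the closed-form Gaussian KL for the conditional laws, and $L$-Lipschitzness of the averaged stochastic gradient to bound the mean displacement by $(1+L\eta)\|\ub-\vb\|_2$. The only difference is that you use the sharper form of Pinsker (gaining a factor $1/2$ that you then discard), whereas the paper applies the looser $\sqrt{2D_{KL}}$ bound directly; both yield the stated inequality.
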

Lemma \ref{lemma:lowerboud_accept_prob_alg} gives a lower bound of the probability $p(\ub)$ and Lemma \ref{lemma:TV_expected_gaussian} provides an upper bound of the total variation distance between the distributions $P(\cdot|\ub)$ and $P(\cdot|\vb)$. Then we are ready to complete the proof of Lemma \ref{lemma:lowerbound_phi} as follows.

\begin{proof}[Proof of Lemma \ref{lemma:lowerbound_phi}]
Let $\cS_\ub = \Omega\cap\cB(\ub,r)$ and $\cS_\vb = \Omega\cap\cB(\vb,r)$, by triangle inequality and the definition of total variation distance, we have there exists a set $\cA\in\Omega$ such that
\begin{align*}
\|\transitRef_\ub(\cdot) - \transitRef_\vb(\cdot)\|_{TV} &= |\transitRef_\ub(\cA) - \transitRef_\vb(\cA)|\notag\\
&\le  \underbrace{\max_{\ub,\zb} \bigg[\frac{2 - p(\ub) + p(\ub)(1-\alpha_\ub(\zb))}{2}\bigg]}_{I_1} \notag\\
\qquad &+ \frac{1}{2}\underbrace{\bigg|\int_{\zb\in\cA} \alpha_\ub(\zb)P(\zb|\ub)\ind(\zb\in\cS_\ub) - \alpha_\vb(\zb)P(\zb|\vb)\ind(\zb\in\cS_\vb)\dd\zb\bigg|}_{I_2}.
\end{align*}
Then we aim to upper bound the quantities $I_1$ and $I_2$ separately. 
In  terms of $I_1$, Lemma \ref{lemma:approximation} combined with \eqref{eq:lowerbound_accept} implies that
\begin{align}\label{eq:lowerbound_accept2}
\max_{\ub,\zb}\alpha_\ub(\zb)\ge 1-\delta/2,
\end{align}
where $\delta$ is the approximation factor between $\transitSGLD_\ub(\cdot)$ and $\transitRef_\ub(\cdot)$ defined in Lemma \ref{lemma:approximation}. 
By Lemma~\ref{lemma:lowerboud_accept_prob_alg}, we know that $p(\ub)\ge 0.4$ for any $\cI$ and $\ub\in\cK$. Then combining with \eqref{eq:lowerbound_accept2}, $I_1$ can be upper bounded by
\begin{align*}
I_1\le 0.8+0.1\delta.
\end{align*}
Regarding $I_2$, by triangle inequality we have 
\begin{align*}
% &\bigg|\int_{\zb\in\cA} p(\ub)\alpha_\ub(\zb)P(\zb|\ub) - p(\vb)\alpha_\vb(\zb)P(\zb|\vb)\dd \zb\bigg| \notag\\
I_2&\le \int_{\zb\in\cA} (1-\alpha_\ub(\zb))P(\zb|\ub)\ind(\zb\in\cS_\ub)\dd\zb+\int_{\zb\in\cA} (1-\alpha_\vb(\zb))P(\zb|\vb)\ind(\zb\in\cS_\vb)\dd\zb  \notag\\
&\qquad + \bigg|\int_{\zb\in\cA} p(\ub)P(\zb|\ub) - p(\vb)P(\zb|\vb)\dd \zb\bigg|\notag\\
&\le \delta + \underbrace{\bigg|\int_{\zb\in\cA} P(\zb|\ub)\ind(\zb\in\cS_\ub) - P(\zb|\vb)\ind(\zb\in\cS_\vb)\dd \zb\bigg|}_{I_3},
\end{align*}
where the last inequality is by \eqref{eq:lowerbound_accept2}.
Regarding $I_3$, we further have,
\begin{align*}
I_3&\le \bigg|\int_{\zb\in\cA} \ind(\zb\in\cS_\vb)\big(P(\zb|\ub)-P(\zb|\vb)\big)\dd \zb\bigg| + \bigg|\int_{\zb\in\cA} \big[\ind(\zb\in\cS_\ub)-\ind(\zb\in\cS_\vb)\big]P(\zb|\ub)\dd \zb\bigg|\notag\\
&\le \|P(\cdot|\ub)-P(\cdot|\vb)\|_{TV} + \max\bigg\{\int_{\zb\in\cS_{\vb}\bs\cS_{\ub}}P(\zb|\ub)\dd \zb,\int_{\zb\in\cS_{\ub}\bs\cS_{\vb}}P(\zb|\ub)\dd \zb \bigg\}\notag\\
&\le \|P(\cdot|\ub)-P(\cdot|\vb)\|_{TV} + \max\bigg\{\int_{\zb\in\RR^d\bs\cS_{\ub}}P(\zb|\ub)\dd \zb,\int_{\zb\in\RR^d\bs\cS_{\vb}}P(\zb|\ub)\dd \zb \bigg\}.
\end{align*}
For any $\cI$, note that $P(\zb|\ub,\cI)$ is a Gaussian distribution with mean $\ub - \eta \gb(\ub,\cI)$ and covariance matrix $2\eta\Ib/\beta$, thus we have
\begin{align*}
\int_{\RR^d\bs\cS_\ub} P(\zb|\ub,\cI)\dd\zb&\le\PP_{z\sim\chi_d^2}\big(z\ge 0.5\beta(r-\eta\|\gb(\ub,\cI)\|_2)^2 /\eta \big) \notag\\
\int_{\RR^d\bs\cS_\vb} P(\zb|\ub,\cI)\dd\zb&\le\PP_{z\sim\chi_d^2}\big(z\ge 0.5\beta(r-\eta\|\gb(\ub,\cI)\|_2 - \|\ub-\vb\|_2)^2 /\eta \big).
\end{align*}
Note that the above inequalities hold for any choice of $\cI$. Thus, if $\|\ub-\vb\|_2\le 0.1r$ and $\eta\le 0.1d\beta^{-1}/(LR+G)^2$, by Lemma \ref{lemma:grad_bound}, we have $r-\eta\|\gb(\ub,\cI)\|_2 - \|\ub-\vb\|_2\ge \sqrt{6.4\eta d/\beta}$ since $r\ge \sqrt{10\eta d/\beta}$, and then
\begin{align*}
\max\bigg\{\int_{\zb\in\RR^d\bs\cS_{\ub}}P(\zb|\ub)\dd \zb,\int_{\zb\in\RR^d\bs\cS_{\vb}}P(\zb|\ub)\dd \zb \bigg\}\le \PP_{z\sim\chi_d^2}\big(z\le 3.2d\big) \le 0.1.
\end{align*}
% Note that we have proved $p(\ub)\ge 0.9$ \CC{should be $0.9\rightarrow 0.4$} and similar result can also be proved for $p(\vb)$. Thus it is easy to show that $|p(\vb) - p(\ub)|\le 0.1$. 
Then combining the above results and apply Lemma \ref{lemma:TV_expected_gaussian}, assume $\eta\le 1/L$, we have 
\begin{align*}
I_3\le 0.1 + \|P(\cdot|\ub)-P(\cdot|\vb)\|_{TV}\le 0.1 +\sqrt{2\beta}\|\ub - \vb\|_2/\sqrt{\eta}.
\end{align*}
This immediately implies that $I_2\le \delta + \sqrt{2\beta}\|\ub-\vb\|_2/\sqrt{\eta}+0.1$ and finally
\begin{align*}
\|\transitRef_\ub(\cdot) - \transitRef_\vb(\cdot)\|_{TV}\le I_1 + I_2/2 \le 0.85+0.1\delta + \frac{\sqrt{\beta}\|\ub-\vb\|_2}{\sqrt{2\eta}}.
\end{align*}
By Lemma \ref{lemma:approximation}, we know that if $\eta\le [25\beta(LR+G)^2]^{-1}$, we have
\begin{align*}
\delta&=\big[10Ld\eta  +10L(LR+G)d^{1/2}\beta^{1/2}\eta^{3/2} + 12\beta (LR+G)^2 d\eta/B+ 2\beta^2(LR+G)^4\eta^2/B\big]\\
&\qquad\cdot\bigg(1+\sqrt{\frac{\log(8K/\epsilon)}{d}}\bigg)^2\notag\\
&\le \big[14Ld\eta + 14(LR+G)^2\beta d\eta/B\big]\cdot\bigg(1+\sqrt{\frac{\log(8K/\epsilon)}{d}}\bigg)^2.
\end{align*}
Thus if 
\begin{align*}
\eta \le \frac{1}{25\beta(LR+G)^2}\wedge \frac{1}{35(Ld+(LR+G)^2\beta d/B)} \quad \mbox{and} \quad \|\ub-\vb\|_2\le \frac{\sqrt{2\eta}}{10\sqrt{\beta}}\le 0.1 r,
\end{align*}
we have $\|\transitRef_\ub(\cdot) - \transitRef_\vb(\cdot)\|_{TV}\le 0.99$. Then by Lemma \ref{lemma:lowerbound_transitionprob}, we have the following lower bound on the conductance of $\transitRef_\ub(\cdot)$
\begin{align*}
\phi \ge c_0\rho\sqrt{\eta/\beta},
\end{align*}
% \begin{align*}
% \int_{\cA}\transitRef_\ub(\Omega\bs\cA)\stationalRef(\ub)\dd\ub\ge \frac{C\rho(1-s)^2\cdot\sqrt{2\eta/\beta}}{80}\cdot\min\big\{\stationalRef(\cA\cap\cK),\stationalRef(\Omega\bs\cA\cap\cK)\big\},
% \end{align*}
where $c_0$ is an absolute constant.
% Moreover, note that 
% \begin{align*}
% \stationalRef(\cA\cap\cK)\ge \stationalRef(\cA) - \stationalRef(\Omega\bs\cK) \ge \stationalRef(\cA)-s
% \end{align*}
% Similarly, we can also prove that $\stationalRef(\Omega\bs\cA\cap\cK)\ge \stationalRef(\Omega\bs\cA)-s$. Therefore, assume $s\le 1-1/2^{1/4}$, it follows that 
% \begin{align*}
% \phi= \inf_{\cA: \cA\subseteq\Omega, \stationalRef(\cA)\in(s,1-s)}\frac{\int_{\cA}\transitRef_\ub(\Omega\bs\cA)\stationalRef(\dd\ub)}{\min\{\stationalRef(\cA)-s, \stationalRef(\Omega\bs\cA)-s\}}\ge \frac{C\rho\sqrt{\eta/\beta}}{80},
% \end{align*}
This completes the proof.
\end{proof}

\subsection{Proof of Lemma \ref{lemma:approximate_target}}
We present the following useful lemma that characterizes the probability measure of the region $\cB(\zero,R)$ under the  target distribution $\pi$.
\begin{lemma}\label{lemma:high_prob}
Under Assumptions \ref{assump:diss} and \ref{assump:smooth}, let $\Omega = \cB(\zero,\bar R(\zeta))$ for some $\zeta\in(0,1)$, it holds that
\begin{align*}
\pi(\Omega)\ge 1-\zeta.
\end{align*}
\end{lemma}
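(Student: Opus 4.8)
The plan is to control the tail mass $\pi(\RR^d\setminus\cB(\zero,R))$ with $R=\bar R(\zeta)$ by dominating $\pi$ outside a ball by a Gaussian, and then to feed the resulting probability into a standard $\chi^2$ tail bound whose threshold is exactly what the three terms of $\bar R$ in~\eqref{eq:def_barR} are designed to absorb. First I would lower bound the partition function $Z=\int_{\RR^d}e^{-\beta f(\yb)}\dd\yb$: Assumption~\ref{assump:smooth} gives $f(\xb)\le f(\xb^*)+\tfrac L2\|\xb-\xb^*\|_2^2$, hence $Z\ge e^{-\beta f(\xb^*)}(2\pi/(\beta L))^{d/2}$, which is the bound of~\citet{raginsky2017non} already used in the proof of Corollary~\ref{coro:weak_converge}. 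For the numerator, Lemma~\ref{lemma:quadratic_lower_bound} gives $f(\xb)\ge\tfrac m4\|\xb\|_2^2+f(\xb^*)-\tfrac b2$, so $\int_{\|\xb\|_2>R}e^{-\beta f(\xb)}\dd\xb\le e^{-\beta f(\xb^*)+\beta b/2}\int_{\|\xb\|_2>R}e^{-m\beta\|\xb\|_2^2/4}\dd\xb$. Dividing cancels the $e^{-\beta f(\xb^*)}$ factors, and the substitution $z=m\beta\|\xb\|_2^2/2$ (so $z\sim\chi_d^2$) reduces the Gaussian normalizers to
\begin{align*}
\pi\big(\RR^d\setminus\cB(\zero,R)\big)\ \le\ e^{\beta b/2}\Big(\frac{2L}{m}\Big)^{d/2}\,\PP_{z\sim\chi_d^2}\big(z> m\beta R^2/2\big).
\end{align*}

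Next I would apply the tail bound $\PP_{z\sim\chi_d^2}(z\ge d+2\sqrt{dt}+2t)\le e^{-t}$ (already invoked in the proof of Corollary~\ref{coro:weak_converge}) with the choice $t=\log(1/\zeta)+\tfrac{\beta b}{2}+\tfrac d2\log(2L/m)$, which is positive since Assumptions~\ref{assump:diss} and~\ref{assump:smooth} together force $m\le L$ and $\zeta<1$. A one-line check shows $e^{\beta b/2}(2L/m)^{d/2}e^{-t}=\zeta$, so it suffices to verify that $A:=m\beta R^2/2\ge d+2\sqrt{dt}+2t$; combined with the display this gives $\pi(\RR^d\setminus\cB(\zero,R))\le\zeta$, i.e.\ $\pi(\Omega)\ge1-\zeta$.

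The step I expect to be the only real obstacle is this last numeric verification, which is precisely why $\bar R$ is defined as a maximum of three terms. Using $2\sqrt{dt}\le d+t$, I would bound $d+2\sqrt{dt}+2t\le2d+3t=\big(2d+3\log(1/\zeta)\big)+\big(\tfrac{3\beta b}{2}+\tfrac{3d}{2}\log(2L/m)\big)$. The second bracket is at most $\tfrac34\big(2d\log(4L/m)+2\beta b\big)$; the first bracket is at most $5d\log(4/\zeta)$ (using $\log(4/\zeta)>1$ and $d\ge1$), hence at most $\tfrac14\cdot\tfrac{625}{2}d\log(4/\zeta)$. Since $R=\bar R(\zeta)$ makes $A$ at least one half of \emph{each} of the quantities $625d\log(4/\zeta)$ and $4d\log(4L/m)+4\beta b$ individually, the first bracket is $\le\tfrac14 A$ and the second is $\le\tfrac34 A$, so $2d+3t\le\tfrac14A+\tfrac34A=A$, which closes the argument. (The third term of the maximum, with its $\sqrt{d\log(1/\zeta)}$, supplies extra slack here and is not needed for this coarse splitting, but is the natural quantity to keep if one chooses not to expand $2\sqrt{dt}$.)
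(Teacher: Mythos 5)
Your proof is correct, and it is built from exactly the same three ingredients as the paper's (the quadratic lower bound of Lemma~\ref{lemma:quadratic_lower_bound}, the partition-function lower bound $Z\ge e^{-\beta f(\xb^*)}(2\pi/(\beta L))^{d/2}$, and the Laurent--Massart $\chi_d^2$ tail bound), but you combine them differently. The paper establishes a \emph{pointwise} domination $\pi(\xb)\le q(\xb)$ by the Gaussian $q(\xb)\propto e^{-m\beta\|\xb\|_2^2/8}$: it sacrifices half of the exponent $e^{-m\beta\|\xb\|_2^2/4}$ to swallow the prefactor $(4L/m)^{d/2}e^{\beta b/2}$, which is why the domination only holds for $\|\xb\|_2^2\ge 4m^{-1}(\beta^{-1}d\log(4L/m)+b)$ (the second term of $\bar R$), and then the tail of $q$ is controlled by the third term of $\bar R$. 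You instead bound the \emph{integrated} ratio, keep the full exponent $e^{-m\beta\|\xb\|_2^2/4}$, and absorb the prefactor $e^{\beta b/2}(2L/m)^{d/2}$ into the deviation parameter $t$ of the tail bound; your numeric verification then consumes the first and second terms of the maximum in \eqref{eq:def_barR} and never touches the third, whereas the paper's proof uses the second and third and never touches the first. Both routes are valid precisely because $\bar R$ is defined as a maximum; your arithmetic (the check that $e^{\beta b/2}(2L/m)^{d/2}e^{-t}=\zeta$, the observation that $m\le L$ makes $t>0$, and the splitting $2d+3t\le\tfrac14A+\tfrac34A$) all checks out. If anything, your version is slightly more self-contained in that it makes explicit where each constant in $\bar R$ is spent, at the cost of a somewhat more delicate choice of $t$.
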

\begin{proof}[Proof of Lemma \ref{lemma:approximate_target}]
According to the definition of total variation distance, we know that there exists a set $\cA\in\RR^d$ such that
\begin{align*}
\|\stationalRef - \pi\|_{TV}  = |\stationalRef(\cA) - \pi(\cA)| &\le |\stationalRef(\cA\cap\Omega) - \pi(\cA\cap\Omega)| + \pi\big(\cA\bs\Omega\big)\notag\\
&\le |\stationalRef(\cA\cap\Omega) - \pi(\cA\cap\Omega)| + \pi\big(\RR^d\bs\Omega\big),
\end{align*}
where the first inequality is by triangle inequality.
By Lemma \ref{lemma:high_prob}, we have 
$\pi(\RR^d\bs\Omega)\le \zeta$. For the first term on the R.H.S. of the above inequality, we have%\pan{somethingmissing in the following}
\begin{align}\label{eq:upperbound_TV2}
|\stationalRef(\cA\cap\Omega) - \pi(\cA\cap\Omega)| = \bigg|\int_{\cA\cap\Omega}\stationalRef(\dd\ub) -\int_{\cA\cap\Omega} \pi(\dd\ub) \bigg| . 
\end{align}
Recall the definition of the truncated distribution $\stationalRef$, for any $\ub\in\Omega$, we have
\begin{align*}
    \pi(\dd\ub)=\frac{e^{-\beta f(\ub)}\dd\ub}{\int_{\RR^d} e^{-\beta f(\xb)}\dd \xb},\quad\text{and } \stationalRef(\dd\ub)=\frac{e^{-\beta f(\ub)}\dd\ub}{\int_{\Omega} e^{-\beta f(\xb)}\dd \xb}=\frac{\pi(\dd\ub)}{\pi(\Omega)},
\end{align*}
which immediately implies
\begin{align*}
\stationalRef(\dd\ub) - \pi(\dd\ub) =\bigg(\frac{1}{\pi(\Omega)}-1\bigg)\pi(\dd\ub)\le \frac{\zeta}{1-\zeta}\pi(\dd\ub).
\end{align*}
Plugging this into \eqref{eq:upperbound_TV2} yields
\begin{align*}
|\stationalRef(\cA\cap\Omega) - \pi(\cA\cap\Omega)|\le \frac{\zeta}{1-\zeta}\int_{\Ab\cap\Omega}\pi(\ub)\dd\ub\le \frac{\zeta}{1-\zeta}.
\end{align*}
Combining the above results, we have for any $\zeta\le 1/2$ that
\begin{align*}
\|\stationalRef - \pi\|_{TV}\le \zeta +  \frac{\zeta}{1-\zeta}\le 3\zeta.
\end{align*}
Set $\zeta=\epsilon/12$ we are able to  complete the proof.
\end{proof}

\section{Proof of Lemmas in Appendices \ref{sec:proof_remaining} and \ref{sec:proof_keylemma}}\label{sec:proof_lemma}
\subsection{Proof of Lemma \ref{lemma:quadratic_lower_bound}}
\begin{proof}[Proof of Lemma \ref{lemma:quadratic_lower_bound}]
We will prove this for two cases: 1) $\|\xb\|_2\le \sqrt{2b/m}$ and 2) $\|\xb\|_2\ge \sqrt{2b/m}$. 
For the first case, it is evident that 
\begin{align*}
f(\xb)\ge f(\xb^*)\ge f(\xb^*) + \frac{m}{4}\|\xb\|_2^2 - \frac{b}{2}.
\end{align*}
where the last inequality is due to the fact that $\|\xb\|_2\le \sqrt{2b/m}$. 
For the second case,  based on Assumption \ref{assump:diss}, define $g(\xb) =f(\xb) - m\|\xb\|_2^2/4 $, it is clear that
\begin{align*}
\big\la\nabla g(\xb),\xb\big\ra = \la\nabla f(\xb),\xb\ra -\frac{m}{2}\|\xb\|_2^2\ge \frac{m}{2}\|\xb\|_2^2 -b.    
\end{align*}
Therefore, if $\|\xb\|_2\ge \sqrt{2b/m}$, we have $\la\nabla g(\xb),\xb\ra\ge 0$ and thus we have $\la\nabla g(\xb),\alpha\xb\ra\ge 0$ for any $\alpha\ge0$. Then, for any $\xb$ with $\|\xb\|_2> \sqrt{2b/m}$, let $\yb = \sqrt{2b/m}\xb/\|\xb\|_2$, we have 
\begin{align}\label{eq:lowerbound_gx}
g(\xb) &= g(\yb)+\int_{0}^1\la\nabla g(\yb+t(\xb - \yb)),\xb-\yb\ra\dd t\ge g(\yb),
\end{align}
where the inequality is due to the facts that $\|\yb+t(\xb - \yb)\|_2\ge \sqrt{2b/m}$ and $\yb+t(\xb - \yb) = \alpha(\xb-\yb)$ with $\alpha = t+\sqrt{2b/m}/(\|\xb\|_2 -\sqrt{2b/m} )$. By the definition of function $g(\cdot)$, we have that for any $\yb$ with $\|\yb\|_2\le \sqrt{2b/m}$,
\begin{align}\label{eq:lowerbound_gz}
g(\yb) = f(\yb) - m\|\yb\|_2^2/4 \ge f(\xb^*) - b/2.
\end{align}
Plugging \eqref{eq:lowerbound_gz} into \eqref{eq:lowerbound_gx} gives
\begin{align*}
g(\xb) \ge g(\yb) \ge f(\xb^*) - b/2.
\end{align*}
thus it follows that
\begin{align}\label{eq:lowerbound_func}
f(\xb)\ge \frac{m}{4}\|\xb\|_2^2 + f(\xb^*) - \frac{b}{2},
\end{align}
which completes the proof.
\end{proof}

\subsection{Proof of Lemma \ref{lemma:approximation_gld}}
\begin{proof}[Proof of Lemma \ref{lemma:approximation_gld}]
Similar to the proof of Lemma \ref{lemma:approximation}, the essential part is to prove that $\alpha_\ub(\wb)\ge 1-\delta/2$ for all $\wb\in\cB(\ub,r)\cap\Omega\bs\{\ub\}$. We will prove that under Assumption \ref{assump:hessian_lip}, the first term on the R.H.S. of \eqref{eq:lower_bound_I1/I2} can be improved.
% Note that we consider using full gradient in this lemma, therefore, we aim to prove
% \begin{align*}
% \frac{\exp\bigg(-\beta f(\wb) - \frac{\|\ub - \wb + \eta\nabla f(\wb)\|_2^2}{4\eta/\beta}\bigg)}{\exp\bigg(-\beta f(\ub) - \frac{\|\wb - \ub + \eta\nabla f(\ub)\|_2^2}{4\eta/\beta}\bigg)}\ge 1- \delta/2,
% \end{align*}
% which is equivalent to prove that 
% \begin{align*}
% \bigg|\underbrace{f(\wb) - f(\ub) + \frac{\|\ub - \wb + \eta\nabla f(\wb)\|_2^2 - \|\wb - \ub + \eta \nabla f(\ub)\|_2^2}{4\eta}}_{I_1}\bigg|\le\frac{1}{\beta}\log\bigg(\frac{1}{1-\delta/2}\bigg).
% \end{align*}
% Since $\log (1+x)\leq x$ for all $x>-1$ and thus $\delta/2\leq\log(1/(1-\delta/2))$, it suffices to prove $|I_1|\leq \delta/(2\beta)$. Expanding $I_1$ gives
% \begin{align*}
% I_1 &= \underbrace{f(\wb) - f(\ub) - \frac{1}{2}\la\wb - \ub,\nabla f(\wb) + f(\ub)\ra}_{I_2} + \frac{\eta}{4}\big(\underbrace{\|\nabla f(\wb)\|_2^2-\|\nabla f(\ub)\|_2^2}_{I_3}\big).
% \end{align*}
By Assumption \ref{assump:hessian_lip} and \citet{nesterov2018lectures}, we know 
\begin{align*}
f(\wb) - f(\ub)&\le \la\wb-\ub,\nabla f(\ub)\ra + \frac{1}{2}(\wb - \ub)^\top\nabla^2 f(\ub) (\wb-\ub)+\frac{H}{6}\|\wb - \ub\|_2^3, \notag\\
f(\ub) - f(\wb)&\ge \la\ub-\wb,\nabla f(\wb)\ra + \frac{1}{2}(\ub - \wb)^\top\nabla^2 f(\wb) (\ub-\wb)-\frac{H}{6}\|\ub - \wb\|_2^3.
\end{align*}
Thus it follows that
\begin{align*}
&\big|2f(\wb) - 2f(\ub)  - \la\wb - \ub,\nabla f(\ub)+f(\wb)\ra\big|\notag\\
&\le \frac{1}{2}\big|(\wb - \ub)^\top\big(\nabla^2 f(\ub) -\nabla^2f(\wb) \big)(\wb-\ub)\big| + \frac{H}{3}\|\wb - \ub\|_2^3\notag\\
&\le \frac{5H}{6}\|\wb - \ub\|_2^3,
\end{align*}
where the second inequality is by Assumption \ref{assump:hessian_lip} as well. Then combining with \eqref{eq:upperbound_variance_difference}, let $I_1$ and $I_2$ be the same as those in the proof of Lemma \ref{lemma:approximation} and note that $\|\wb-\ub\|_2\le r = \sqrt{10\eta d/\beta}\big(1+\sqrt{\log(8K/\epsilon)/d}\big)$, we can derive the following by \eqref{eq:lower_bound_I1/I2},
\begin{align*}
\frac{I_1}{I_2}&\ge \exp\bigg(-\frac{5H\beta\|\wb - \ub\|_2^3}{12} -\frac{\beta \eta L(LR+G)\|\wb-\ub\|_2 }{2} \notag\\
&\hspace{30mm}-\frac{\beta^2 M^2\big(\|\wb-\ub\|_2^2+(LR+G)^2\eta^2+\beta^{-1}\eta d/2\big)}{2B}\bigg)\notag\\
&\ge \exp\bigg[-\bigg(14Hd^{3/2}\beta^{-1/2}\eta^{3/2} -5L(LR+G)d^{1/2}\beta^{1/2}\eta^{3/2} -\frac{6\beta M^2 d\eta}{B}- \frac{\beta^2M^2(LR+G)^2\eta^2}{2B}\bigg)\notag\\
&\hspace{20mm}\cdot\bigg(1+\sqrt{\frac{\log(8K/\epsilon)}{d}}\bigg)^2\bigg]\notag\\
&\ge 1 -\big[14Hd^{3/2}\beta^{-1/2}\eta^{3/2} -5L(LR+G)d^{1/2}\beta^{1/2}\eta^{3/2} - 6\beta (LR+G)^2 d\eta/B- \beta^2(LR+G)^4\eta^2/B\big]  \notag\\
&\qquad\cdot\bigg(1+\sqrt{\frac{\log(8K/\epsilon)}{d}}\bigg)^2\notag\\
&= 1- \delta/2,
\end{align*}
where we use the fact that $M=LR+G$ in the last inequality.
% Regarding $I_3$, we have
% \begin{align*}
% |I_3| &= |\la\nabla f(\wb) +\nabla f(\ub),\nabla f(\wb) - \nabla f(\ub)\ra|\notag\\
% &\le \|\nabla f(\wb) +\nabla f(\ub)\|_2\cdot\|\nabla f(\wb) - \nabla f(\ub)\|_2\notag\\
% &\le 2L(LR+G)\|\wb - \ub\|_2,
% \end{align*}
% where the first inequality is by Cauchy–Schwarz inequality and the last inequality follows from triangle inequality, \eqref{eq:bound_sto_grad}, and Assumption \ref{assump:smooth}.  
% Combining the above results, we have
% \begin{align*}
% |I_1|\le|I_2| + \frac{1}{4}|I_3|\le \frac{5M}{12}\|\wb-\ub\|_2^3 + \frac{\eta L(LR+G)}{2} \|\wb - \ub\|_2.
% \end{align*}
% Note that we have $\|\wb - \ub\|_2\le r = \sqrt{10\eta d/\beta}$, it follows that
% \begin{align*}
% |I_1|\le \big[14M(d/\beta)^{3/2}+2L(LR+G)(d/\beta)^{1/2}\big]\eta^{3/2} = \delta/(2\beta).
% \end{align*}
Then following the same procedure as in the proof of Lemma \ref{lemma:approximation}, we are able to complete the proof.
\end{proof}

% The following property is due to the smoothness and dissipativeness of $f$. 

\subsection{Proof of Lemma \ref{lemma:subgaussian_without}}
\begin{proof}[Proof of Lemma \ref{lemma:subgaussian_without}]
Note that we have $\|\xb\|_2\le R$, then by Lemma \ref{lemma:grad_bound} we know that
\begin{align*}
\|\gb(\xb,\cI_1)-\nabla f(\xb)\|_2 -\|\gb(\xb,\cI_2)-\nabla f(\xb)\|_2\le \|\gb(\xb,\cI_1)-\gb(\xb,\cI_2)\|_2 \le 2LR+2G,
\end{align*}
for all $\xb$. Then by Hoeffding's lemma, we have that there exists a constant $M=LR+G$ such that
\begin{align*}
\EE_{\cI}\big[\exp\big(\la \ab,\gb(\xb,\cI)-\nabla f(\xb)\ra\big)\big]\le \exp(M^2\|\ab\|_2^2)
\end{align*}
for any $\ab\in\RR^d$.
Moreover, note that $\cI$ is uniformly sampled from $[n]$ without replacement. Let $\cI'$ be the stochastic mini-batch sampled from $[n]$ with replacement, by Lemma 1.1 in \citet{bardenet2015concentration} and the convexity of function $\exp(\cdot)$, we have
\begin{align*}
\EE_{\cI}\big[\exp\big(\la \ab,\gb(\xb,\cI)-\nabla f(\xb)\ra\big)\big]\le \EE_{\cI'}\big[\exp\big(\la \ab,\gb(\xb,\cI')-\nabla f(\xb)\ra\big)\big].
\end{align*}
Then based on the fact that each element in $\cI'$ is independently drawn from $[n]$, we have
\begin{align*}
\EE_{\cI'}\big[\exp\big(\la \ab,\gb(\xb,\cI')-\nabla f(\xb)\ra\big)\big] &= \EE_{\cI'}\bigg[\prod_{i\in\cI'}\exp\bigg(\frac{1}{B}\la \ab,\gb(\xb,\{i\})-\nabla f(\xb)\ra\bigg)\bigg]\notag\\
& = \prod_{i\in\cI'}\EE_{i}\bigg[\exp\bigg(\frac{1}{B}\la \ab,\gb(\xb,\{i\})-\nabla f(\xb)\ra\bigg)\bigg]\notag\\
& \le \prod_{i\in\cI'}\exp\big(M^2\|\ab\|_2^2/B^2\big)\notag\\
& = \exp\big(M^2\|\ab\|_2^2/B\big).
\end{align*}
Furthermore, note that if $B = n$, we have $\EE_{\cI}\big[\exp\big(\la \ab,\gb(\xb,\cI)-\nabla f(\xb)\ra\big)\big] = 1$.
This completes the proof.
\end{proof}

\subsection{Proof of Lemma \ref{lemma:contraction}}
\begin{lemma}[Lemma 1.2 in \citet{lovasz1993random}]\label{lemma:temp111}
For any atom-free distributions $\mu$ and $\nu$ on $\Omega$, define function
\begin{align*}
l(p) = \sup_{g:\Omega\rightarrow[0,1]}\int_\Omega g(\xb)\mu(\dd\xb)\quad \text{s.t.}\quad  \int_\Omega g(\xb)\nu(\dd\xb) = p.
\end{align*}
Then there exists a set $\cA\in\Omega$ with $\nu(\cA) = p$ such that $\ell(p) = \mu(\cA)$.
\end{lemma}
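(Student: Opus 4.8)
The plan is to construct the extremal set $\cA$ explicitly by a Neyman--Pearson / ``bathtub'' thresholding argument and then verify optimality directly, which also shows en route that the supremum defining $\ell(p)$ is attained. First I would pass to the common dominating measure $\lambda := \mu + \nu$ and write $\mathrm d\mu = \phi\,\mathrm d\lambda$ and $\mathrm d\nu = \psi\,\mathrm d\lambda$ with $\phi,\psi\ge 0$, $\phi+\psi = 1$ holding $\lambda$-almost everywhere; this avoids assuming $\mu\ll\nu$, which need not hold even for atom-free $\mu,\nu$. Introduce the likelihood ratio $r := \phi/\psi\in[0,+\infty]$, set equal to $+\infty$ on the $\nu$-null set $\Omega_0 := \{\psi = 0\}$, and record that $t\mapsto\nu(\{r > t\})$ is non-increasing with $\nu(\{r>t\})\downarrow\nu(\Omega_0) = 0$ as $t\to\infty$.

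Second, for $p\in(0,1)$ I would choose the threshold $t^* := \inf\{t\ge 0 : \nu(\{r > t\})\le p\}$, which is finite since the displayed limit is $0$, and which satisfies $\nu(\{r > t^*\})\le p\le\nu(\{r\ge t^*\})$; hence $p - \nu(\{r > t^*\})$ lies in $[0,\nu(\{r = t^*\})]$. Here is the only place the hypothesis enters: because $\nu$ is atom-free, so is its restriction to the level set $\{r = t^*\}$, and by the intermediate-value property for non-atomic measures (Sierpi\'nski) there is a measurable $C\subseteq\{r = t^*\}$ with $\nu(C) = p - \nu(\{r > t^*\})$. Setting $\cA := \{r > t^*\}\cup C$ then gives $\nu(\cA) = p$ together with the sandwich $\{r > t^*\}\subseteq\cA\subseteq\{r\ge t^*\}$ and (since $t^*<\infty$) $\Omega_0\subseteq\cA$.

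Third, I would verify that $\ind_\cA$ maximizes the problem. For any measurable $g:\Omega\to[0,1]$ with $\int_\Omega g\,\mathrm d\nu = p$, split $\Omega = \Omega_0\cup\Omega_+$ with $\Omega_+ := \{\psi>0\}$. On $\Omega_0$ one has $\ind_\cA = 1$, so $\ind_\cA - g \ge 0$ and this region contributes nonnegatively to $\mu(\cA)-\int g\,\mathrm d\mu$. On $\Omega_+$ one has $\mathrm d\mu = r\,\mathrm d\nu$, and the sign of $\ind_\cA - g$ together with $\cA\cap\Omega_+\subseteq\{r\ge t^*\}$ and $\Omega_+\setminus\cA\subseteq\{r\le t^*\}$ yields the pointwise inequality $(\ind_\cA - g)\,r\ge(\ind_\cA-g)\,t^*$; integrating and using $\nu(\Omega_0)=0$ gives $\int_{\Omega_+}(\ind_\cA-g)\,\mathrm d\mu\ge t^*\big(\nu(\cA)-p\big) = 0$. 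Adding the two contributions shows $\int_\Omega g\,\mathrm d\mu\le\mu(\cA)$ for every feasible $g$, so $\ell(p)\le\mu(\cA)$; since $\ind_\cA$ is itself feasible, $\mu(\cA)\le\ell(p)$, and equality follows.

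The conceptual content is entirely in the second and third steps --- that a thresholded likelihood ratio is optimal, and that atom-freeness of $\nu$ lets one hit $\nu(\cA)=p$ exactly. I expect the only real work to be measure-theoretic bookkeeping: keeping the $\nu$-null set $\Omega_0$ (on which $\mu$ may carry mass) inside $\cA$, so that the naive picture $\cA = \{\mathrm d\mu/\mathrm d\nu > t\}$ is amended correctly, and dispatching the degenerate endpoints $p\in\{0,1\}$ separately --- for instance at $p = 0$ one takes $\cA = \Omega_0$ and checks directly that $\ell(0) = \mu(\Omega_0)$ because any feasible $g$ vanishes $\mu$-a.e.\ on $\Omega_+$. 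None of these require new ideas.
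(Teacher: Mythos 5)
Your proof is correct. The paper itself gives no argument for this statement---it is imported verbatim as Lemma~1.2 of \citet{lovasz1993random}---and your Neyman--Pearson thresholding of the likelihood ratio $r=\mathrm{d}\mu/\mathrm{d}\nu$ (with the level set $\{r=t^*\}$ split via atom-freeness of $\nu$, and the $\nu$-null set $\{\psi=0\}$ absorbed into $\cA$) is the standard way to establish it; the verification that $(\ind_{\cA}-g)r\ge(\ind_{\cA}-g)t^*$ pointwise and the handling of $p\in\{0,1\}$ are both sound, so this constitutes a complete, self-contained proof of the cited result.
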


\begin{proof}[Proof of Lemma \ref{lemma:contraction}]
Similar to the proof of Lemma 1.3 in \citet{lovasz1993random},
we first define the following functions for all $\ub,\cA\in\Omega$,
\begin{align*}
g_1(\ub,\cA) = \left\{
             \begin{array}{lr}
             2\transitSGLD_\ub(\cA) - 1, & \ub\in\cA\\
             0, & \ub\notin\cA\\
             \end{array}
\right. \qquad 
g_2(\ub,\cA) = \left\{
             \begin{array}{lr}
             1, & \ub\in\cA\\
             2\transitSGLD_\ub(\cA), & \ub\notin\cA\\
             \end{array}
\right.
\end{align*}
It is easy to see that $g_1(\ub,\cA)+g_2(\ub,\cA)=2\transitSGLD_{\ub}(\cA)$ for all $\ub\in\Omega$. In addition, for a $1/2$-lazy Markov process $\transitSGLD_\ub(\cdot)$ defined as in \eqref{eq:def_trans_lz_sgld}, we have $g_1(\cdot,\cdot),g_2(\cdot,\cdot)\in[0,1]$. Based on the above definitions, we can further derive that
\begin{align}\label{eq:stationry_measure1}
\int_{\Omega}g_1(\ub,\cA)\stationalRef(\dd\ub) &= \int_{\cA}[2\transitSGLD_\ub(\cA) -1] \stationalRef(\dd\ub) \notag\\
& = \int_{\cA}[1 - 2\transitSGLD_\ub(\Omega\bs\cA)]\stationalRef(\dd\ub)\notag\\
& = \stationalRef(\cA) - 2\int_{\cA}\transitRef_\ub(\Omega\bs\cA)\stationalRef(\dd\ub) - 2\underbrace{\int_{\cA}\big[\transitSGLD_\ub(\Omega\bs\cA) -\transitRef_\ub(\Omega\bs\cA)\big]\stationalRef(\dd\ub)}_{r_1},
\end{align}
and 
\begin{align}\label{eq:stationry_measure2}
\int_{\Omega} g_2(\ub,\cA)\stationalRef(\dd\ub) &= \stationalRef(\cA) + \int_{\Omega\bs\cA} 2\transitSGLD_\ub(\cA)\stationalRef(\dd\ub)\notag\\
& = \stationalRef(\cA) + 2\int_{\Omega\bs\cA}  \transitRef_\ub(\cA)\stationalRef(\dd\ub) + 2\int_{\Omega\bs\cA}  \big[\transitSGLD_\ub(\cA) - \transitRef_\ub(\cA)\big] \stationalRef(\dd\ub)\notag\\
& = \stationalRef(\cA) + 2\int_{\cA}  \transitRef_\ub(\Omega\bs\cA)\stationalRef(\dd\ub) +2 \underbrace{\int_{\Omega\bs\cA}  \big[\transitSGLD_\ub(\cA) - \transitRef_\ub(\cA)\big] \stationalRef(\dd\ub)}_{r_2},
\end{align}
where the last equality is by the fact that $\transitRef_\ub(\cdot)$ is a time-reversible Markov chain with stationary distribution $\stationalRef(\cdot)$.
Note that we have $(1-\delta)\transitRef_\ub(\cA)\le \transitSGLD_\ub(\cA)\le (1+\delta)\transitRef_\ub(\cA)$, the approximation error terms $r_1$ and $r_2$ can be upper bounded as follows,
\begin{align*}
|r_1|&\le \delta \int_{\cA}\transitRef_\ub(\Omega\bs\cA)\stationalRef(\dd\ub)\notag\\
|r_2|& \le \delta \int_{\Omega\bs\cA}\transitRef_\ub(\cA)\stationalRef(\dd\ub) = \delta \int_{\cA}\transitRef_\ub(\Omega\bs\cA)\stationalRef(\dd\ub).
\end{align*}
Then, combining \eqref{eq:stationry_measure1} and \eqref{eq:stationry_measure2} gives
\begin{align}\label{eq:stationary_measure_total}
\int_{\Omega} [g_1(\ub,\cA)+g_2(\ub,\cA)]\stationalRef(\dd\ub) = 2\stationalRef(\cA) -2r_1 + 2r_2.
\end{align}
Based on the definition of $h_k(p)$, we know that  there exists a set $\cA_k$ satisfying $\stationalRef(\cA_k) = p$ such that
\begin{align}\label{eq:def_hk}
h_k(p) = \mu_k(\cA_k) - \stationalRef(\cA_k).
\end{align}
Moreover, note that the distribution $\mu_k(\cdot)$ is generated by conducting one-step transition (based on transition distribution $\transitSGLD_\ub(\cdot)$) from distribution $\mu_{k-1}(\cdot)$, we have
\begin{align*}
\mu_k(\cA_k) = \int_{\cA_k} \mu_k(\dd\ub) = \int_{\Omega}\transitSGLD_\ub(\cA_k) \mu_{k-1}(\dd\ub).
\end{align*}
Based on the definitions of functions $g_1$ and $g_2$, the above equation can be reformulated as
\begin{align}\label{eq:split_h_k}
\mu_k(\cA_k) = \frac{1}{2}\int_{\Omega} \big[ g_1(\ub,\cA_k) + g_2(\ub,\cA_k)\big] \mu_{k-1}(\dd\ub).
\end{align}
By \eqref{eq:stationary_measure_total}, we know that
\begin{align}\label{eq:stationary_measure_k}
\int_{\Omega} [g_1(\ub,\cA_k)+g_2(\ub,\cA_k)]\stationalRef(\dd\ub) = 2\stationalRef(\cA_k) -2r_1 + 2r_2,
\end{align}
where $r_1$ and $r_2$ are two approximation error terms satisfying
\begin{align}\label{eq:bound_r1r2}
|r_1|, |r_2|&\le \delta \int_{\cA_k}\transitRef_\ub(\Omega\bs\cA_k)\stationalRef(\dd\ub).
\end{align}
Then based on Lemma \ref{lemma:temp111}, we know that there exist two sets $\cA_{k-1}^1,\cA_{k-1}^2\subseteq\Omega$ satisfying
\begin{align}\label{eq:def_p1p2}
p_1:=\stationalRef(\cA_{k-1}^1) = \int_\Omega g_1(\ub,\cA_k) \stationalRef(\dd \ub) \quad \mbox{and}\quad p_2:=\stationalRef(\cA_{k-1}^2) = \int_\Omega g_2(\ub,\cA_k) \stationalRef(\dd \ub),
\end{align}
such that 
\begin{align*}
\int_{\Omega} g_1(\ub,\cA_k)\mu_{k-1}(\dd\ub)\le \mu_{k-1}(\cA_{k-1}^1) \quad \mbox{and}\quad \int_{\Omega} g_2(\ub,\cA_k)\mu_{k-1}(\dd\ub)\le \mu_{k-1}(\cA_{k-1}^2).
\end{align*}
Therefore, based on \eqref{eq:split_h_k} and \eqref{eq:stationary_measure_k}, we have
\begin{align*}
h_k(p) &= \mu_k(\cA_k) - \stationalRef(\cA_k)\notag\\
& = \frac{1}{2}\int_{\Omega} \big[ g_1(\ub,\cA_k) + g_2(\ub,\cA_k)\big] \mu_{k-1}(\dd\ub) - \frac{1}{2}\int_{\Omega} \big[ g_1(\ub,\cA_k) + g_2(\ub,\cA_k)\big] \pi^*(\dd\ub) \notag\\
&\qquad-r_1+r_2\\
&\le \frac{1}{2}\big[\mu_{k-1}(\cA_{k-1}^1)+\mu_{k-1}(\cA_{k-1}^2) - \stationalRef(\cA_{k-1}^1) - \stationalRef(\cA_{k-1}^2)\big]+|r_1-r_2|\notag\\
& \le \frac{1}{2}\big[h_{k-1}(p_1) + h_{k-1}(p_2)\big]+ |r_1-r_2|,
\end{align*}
where the first inequality is by \eqref{eq:stationary_measure_k} and \eqref{eq:def_p1p2} and triangle inequality, and the last equality is by the definition of function $h_{k-1}(\cdot)$. Recalling \eqref{eq:stationry_measure1}, \eqref{eq:stationry_measure2}, and \eqref{eq:def_p1p2} the probabilities $p_1$ and $p_2$ can be reformulate as
\begin{align*}
p_1 = p - \tilde\phi_k\min\{p, 1-p\}\quad \mbox{and}\quad p_2 = p + \hat\phi_k\min\{p, 1-p\},
\end{align*}
where
\begin{align*}
\tilde\phi_k = \frac{2\int_{\cA_k}\transitRef_\ub(\Omega\bs\cA_k)\stationalRef(\dd\ub)-2r_1}{\min\{p, 1-p\}}\quad \mbox{and}\quad \hat\phi_k = \frac{2\int_{\cA_k}\transitRef_\ub(\Omega\bs\cA_k)\stationalRef(\dd\ub)+2r_2}{\min\{p, 1-p\}}.
\end{align*}
We further define
\begin{align*}
\phi_k=\frac{ \int_{\cA_k}\transitRef_\ub(\Omega\bs\cA_k)\stationalRef(\dd \ub)}{\min\{p, 1-p\}}.
\end{align*}
Apparently, according to Definition \ref{def:s-conductance}, it holds that $\phi_k\ge \phi_s$. In addition, by \eqref{eq:bound_r1r2} and our definitions of $\tilde \phi_k$ and $\hat \phi_k$, it can be also derived that 
\begin{align*}
2(1-\delta)\phi_k\le\tilde\phi_k\leq\hat\phi_k\le 2(1+\delta)\phi_k.
\end{align*}
Since the transition kernel $\transitSGLD_\ub(\cdot)$ is $1/2$-lazy, we have $\tilde \phi_k\le 1$. Then, if $\delta\le \min\{1-\sqrt{2}/2,\phi/16\}$, we have $\tilde\phi_k\ge \sqrt{2}\phi_k$ and $\hat\phi_k - \tilde\phi_k\le 4\delta\phi_k\le \tilde\phi_k^2/4$. Moreover, note that $\sqrt{1-x}\le 1 - x/2 - x^2/8$ for all $x\in(0,1)$, we have
\begin{align*}
\sqrt{1-\tilde\phi_k} + \sqrt{1+\hat\phi_k} & = \sqrt{1-\tilde\phi_k} + \sqrt{1+\tilde\phi_k + \hat\phi_k - \tilde\phi_k}\notag\\
& \le 1 - \frac{\tilde\phi_k}{2} - \frac{\tilde\phi_k^2}{8} + 1 + \frac{\tilde\phi_k}{2}\notag\\
&\le 2 - \frac{\phi_k^2}{4}.
\end{align*}
Moreover, \eqref{eq:bound_r1r2} also implies that
\begin{align*}
|r_1-r_2|\le 2\delta\int_{\cA_k} \transitRef_\ub(\Omega\bs\cA_k)\stationalRef(\dd\ub) = 2\delta\phi_k \min\{p, 1-p\}\le 2\delta\phi_k \min\big\{\sqrt{p}, \sqrt{1-p}\big\}.
\end{align*}
Therefore, we can finally upper bound $h_k(p)$ as follows,
\begin{align*}
h_k(p)
&\le  \frac{1}{2}\big[h_{k-1}\big(p-\tilde\phi_k\min\{p, 1-p\}\big) + h_{k-1}\big(p+ \hat\phi_k\min\{p, 1-p\}\big)\big] \notag\\
&+ 2\delta\phi_k\min\big\{\sqrt{p}, \sqrt{1-p}\big\},
\end{align*}
which completes the proof.
\end{proof}

\subsection{Proof of Lemma \ref{lemma:lowerboud_accept_prob_alg}}
\begin{proof}[Proof of Lemma \ref{lemma:lowerboud_accept_prob_alg}]
For any $\cI$, we have
\begin{align}\label{eq:lower_bound_prob}
\int_{\Omega\cap \cB(\ub,r)} P(\zb|\ub,\cI)\dd \zb  &= \int_{\cB(\ub,r)} P(\zb|\ub,\cI) \dd \zb - \int_{\cB(\ub,r)\bs\Omega} P(\zb|\ub,\cI) \dd \zb\notag\\
&\ge\underbrace{ \int_{\cB(\ub,r)} P(\zb|\ub,\cI) \dd \zb}_{I_1} - \underbrace{\int_{ \RR^d\bs\Omega} P(\zb|\ub,\cI) \dd \zb}_{I_2}.
\end{align}
Thus the remaining part is to prove that the R.H.S. of the above inequality is greater than 0.4.
Regarding $I_1$, note that $P(\zb|\ub,\cI)$ is a Gaussian distribution with mean $\ub - \eta \gb(\ub,\cI)$ and covariance matrix $2\eta\Ib/\beta$, thus we have
\begin{align*}
\int_{\cB(\ub,r)} P(\zb|\ub,\cI) \dd \zb \ge \PP_{z\sim\chi_d^2}\big(z\le 0.5\beta(r-\eta\|\gb(\ub,\cI)\|_2)^2/\eta \big).
\end{align*}
By Lemma \ref{lemma:grad_bound}, we know that $\|\gb(\ub,\cI)\|_2\le LR+G$. Therefore, based on our choice $r = \sqrt{10 \eta d/\beta}$, it is clear that if 
\begin{align*}
\eta \le \frac{0.1d}{\beta(LR+G)^2},
\end{align*}
we have $r-\eta\|\gb(\ub,\cI)\|_2\ge \sqrt{8\eta d/\beta}$ and thus
\begin{align}\label{eq:bound_prob1}
\int_{\cB(\ub,r)} P(\zb|\ub,\cI) \dd \zb \ge \PP_{z\sim\chi_d^2}(z\le 4d)\ge 0.95.
\end{align}
Then we will prove the upper bound of $I_2$. Note that the set $\Omega$ is a ball centered at the origin and $\ub\in\Omega$, we can construct a point $\wb$ as follows,
\begin{align*}
\wb = \ub - \frac{\big(R - \sqrt{R^2-r^2}\big)\ub}{\|\ub\|_2}.
\end{align*}
It is easy to see that a half space of $\cB(\wb,r)$ is contained by the set $\Omega$. Let $Q(\zb|\wb) = N(\wb, 2\eta\Ib/\beta)$, it follows that 
\begin{align*}
\int_{\Omega}Q(\zb|\wb)\dd\zb \ge \int_{\Omega\cap \cB(\wb,r)}Q(\zb|\wb)\dd\zb \ge \frac{1}{2}\int_{ \cB(\wb,r)}Q(\zb|\wb)\dd\zb.
\end{align*}
Note that $Q(\zb|\wb)$ is a Gaussian distribution with mean $\wb$ and covariance matrix $2\eta\Ib/\beta$, thus we have 
\begin{align*}
\int_{\Omega}Q(\zb|\wb)\dd\zb\ge\frac{1}{2} \PP_{z\in\chi_d^2}(z\le 5d)\ge 0.475.
\end{align*}
Moreover, by Pinsker's  inequality \citep{cover2012elements}, we have
\begin{align*}
\bigg|\int_{\Omega}P(\zb|\ub,\cI)\dd\zb -  \int_{\Omega}Q(\zb|\wb)\dd\zb\bigg|\le \|P(\cdot|\ub,\cI) - Q(\cdot|\wb)\|_{TV}\le \sqrt{2D_{KL}\big(P(\cdot|\ub,\cI), Q(\cdot|\wb)\big)}.
\end{align*}
Note that $P(\cdot|\ub,\cI)$ and  $Q(\cdot|\wb)$ are Gaussian distributions with the same covariance matrices, we have $D_{KL}\big(P(\cdot|\ub,\cI), Q(\cdot|\wb)\big) = \beta\|\ub-\vb\|_2^2/(4\eta)$. Therefore, it follows that
\begin{align*}
\bigg|\int_{\Omega}P(\zb|\ub,\cI)\dd\zb -  \int_{\Omega}Q(\zb|\wb)\dd\zb\bigg|\le \frac{\|\wb - (\ub - \eta \gb(\ub,\cI))\|_2}{\sqrt{2\eta/\beta}}\le \sqrt{\beta/2\eta} \cdot\big(\|\wb - \ub\|_2 + \eta\|\gb(\ub,\cI)\|_2\big).
\end{align*}
By our construction of $\wb$, we have
\begin{align*}
\|\wb - \ub\|_2 = R - \sqrt{R^2 - r^2} = R\big(1 - \sqrt{1-r^2/R^2}\big) \le r^2/R = \frac{10\eta d}{\beta R}]\cdot\bigg(1+\sqrt{\frac{\log(8K/\epsilon)}{d}}\bigg). 
\end{align*}
By Lemma \ref{lemma:grad_bound}, we know that $\|\gb(\ub,\cI)\|_2\le LR+G$.
Therefore, if the step size satisfies
\begin{align*}
\eta \le \frac{1}{40\big(1+\sqrt{\log(8K/\epsilon)/d}\big)}\big(d\beta^{-1}R^{-1}+(LR+G)\beta^{1/2}\big)^{-1},
\end{align*}
we have $\|P(\cdot|\ub,\cI) - Q(\cdot|\wb)\|_{TV}\le 0.025$, and thus
\begin{align*}
\int_{\RR^d\bs\Omega}P(\zb|\ub,\cI)\dd\zb = 1-\int_{\Omega}P(\zb|\ub,\cI)\dd\zb \le 1- \int_{\Omega}Q(\zb|\wb)\dd\zb+0.025 = 0.55.
\end{align*}
Combining with \eqref{eq:bound_prob1}, we have the following by \eqref{eq:lower_bound_prob},
\begin{align*}
\int_{\Omega\cap \cB(\ub,r)} P(\zb|\ub,\cI)\dd \zb  
&\ge \int_{\cB(\ub,r)} P(\zb|\ub,\cI) \dd \zb - \int_{ \RR^d\bs\Omega} P(\zb|\ub,\cI) \dd \zb
\ge 0.95 - 0.55=0.4.
\end{align*}
This completes the proof.

\end{proof}

\subsection{Proof of Lemma \ref{lemma:TV_expected_gaussian}}
\begin{proof}[Proof of Lemma \ref{lemma:TV_expected_gaussian}]
By the definition of total variation distance, we know there exists a set $\cA\in\RR^d$ such that
\begin{align*}
\|P(\cdot|\ub) - P(\cdot|\vb)\|_{TV} &= |P(\cA|\ub) - P(\cA|\vb)| \notag\\
&= \bigg|\int_{\cA} P(\zb|\ub) - P(\zb|\vb)\dd \zb\bigg|\notag\\
&= \bigg|\EE_\cI\bigg[\int_{\cA} P(\zb|\ub,\cI) - P(\zb|\vb,\cI)\dd \zb\bigg]\bigg|\notag\\
&\le \EE_{\cI}[\|P(\zb|\ub,\cI) - P(\zb|\vb,\cI)\|_{TV}],
\end{align*}
where the last inequality is by triangle inequality and the definition of total variation distance. By Pinsker's inequality, we have
\begin{align*}
\|P(\zb|\ub,\cI) - P(\zb|\vb,\cI)\|_{TV} \le \sqrt{2D_{KL}\big(P(\cdot|\ub,\cI), P(\cdot|\vb,\cI)\big)} = \frac{\|\ub - \eta \gb(\ub,\cI) - (\vb - \eta \gb(\vb,\cI))\|_2}{\sqrt{2\eta/\beta}},
\end{align*}
where the last equality follows from the fact that $P(\cdot|\ub,\cI)$ and $P(\cdot|\vb, \cI)$ are two Gaussian distributions with different means and same covariance matrices. 
By triangle inequality, we have
\begin{align*}
\|\ub - \eta \gb(\ub,\cI) - (\vb - \eta \gb(\ub,\cI))\|_2\le \|\ub - \vb\|_2 + \eta \|\gb(\ub,\cI) - \gb(\vb,\cI)\|_2\le (1+L\eta)\|\ub-\vb\|_2,
\end{align*}
where the second inequality is by Assumption \ref{assump:smooth}.
% Then by Cauchy-Schwarz inequality, we have
% \begin{align*}
% \EE_{\cI}[\|\gb(\ub,\cI) - \gb(\vb,\cI)\|_2]&\le \sqrt{\EE_{\cI}[\|\gb(\ub,\cI) - \gb(\vb,\cI)\|_2^2]}\notag\\
% &=\sqrt{\|\nabla f(\ub) - \nabla f(\vb)\|_2^2+\EE_{\cI}[\|\gb(\ub,\cI) - \gb(\vb,\cI) - \nabla f(\ub) + \nabla f(\vb)\|_2^2]}\notag\\
% &\le\sqrt{\|\nabla f(\ub) - \nabla f(\vb)\|_2^2+2\EE_{\cI}[\|\gb(\ub,\cI)- \nabla f(\ub)\|_2^2+\| \gb(\vb,\cI)  - \nabla f(\vb)\|_2^2]}\notag\\
% &\le \sqrt{L^2\|\ub-\vb\|_2^2+8K^2d/B},
% \end{align*}
% where  the last inequality is by Assumptions \ref{assump:smooth} and \ref{assump:sub_gaussian}, and Lemma \ref{lemma:subgaussian_without}. Then we have
% \begin{align*}
% \EE_\cI[\|\ub - \eta \gb(\ub,\cI) - (\vb - \eta \gb(\ub,\cI))\|_2]&\le (1+L\eta)\|\ub-\vb\|_2 + 2\sqrt{2}Kd^{1/2}\eta/B^{1/2}.
% \end{align*}
% % By Assumption \ref{assump:smooth}, we know that
% % \begin{align*}
% % \|\ub - \eta \gb(\ub,\cI) - (\vb - \eta \gb(\ub,\cI))\|_2\le \|\ub - \vb\|_2 + \eta \|\gb(\ub,\cI) - \gb(\vb,\cI)\|_2\le (1+\eta L) \|\ub - \vb\|_2.
% % \end{align*}
Therefore, we have 
\begin{align*}
\|P(\cdot|\ub) - P(\cdot|\vb)\|_{TV}\le \EE_{\cI}[\|P(\zb|\ub,\cI) - P(\zb|\vb,\cI)\|_{TV}] \le \frac{(1+L\eta)\|\ub-\vb\|_2}{\sqrt{2\eta/\beta}}.
\end{align*}
This completes the proof.
\end{proof}

\subsection{Proof of Lemma \ref{lemma:high_prob}}
\begin{proof}[Proof of Lemma \ref{lemma:high_prob}]
% We first show that under Assumption \ref{assump:diss} $f(\xb)$ can be lower bounded by some quadratic function. More specifically, we will prove that
% \begin{align}\label{eq:quadratic_lowerbound}
% f(\xb)\ge \frac{m}{4}\|\xb\|_2^2 + f(\xb^*) -\frac{b}{2}, \quad\forall\xb\in\RR^d,
% \end{align}
% where $\xb^* = \arg\min_{\xb\in\RR^d}f(\xb)$ denotes the global minimizer of function $f(\xb)$ and $m,b$ are the dissipativeness constants defined in Assumption \ref{assump:diss}.
% Then we aim to prove the main result of this lemma based on \eqref{eq:quadratic_lowerbound}. 
% Our goal is to find an upper bound of the $\psi_2$ norm $\|\xb-\bar \xb\|_{L^{\psi}(\pi)}$, which is defined as follows.
% \begin{align*}
% \|\xb-\bar \xb\|_{L^{\psi}(\pi)} = \inf \Big\{t>0: \int e^{\|\xb-\bar \xb\|_2^2/t^2}\pi(\xb)\dd x\le 2\Big\}.
% \end{align*}
% Under Dissipative condition, it can be derived that
% \begin{align}\label{eq:lowerbound_func}
% f(\xb)\ge \frac{m}{2}\|\xb - \bar\xb\|_2^2 - \phi
% \end{align}
% for some absolute constant $\phi$. Then we consider the distribution
Define a Gaussian distribution $q(\xb) = e^{-m\beta\| \xb\|_2^2/8}/[8\pi /(m\beta)]^{d/2}$. Then by \eqref{eq:lowerbound_function_quadratic} and proof of Corollary \ref{coro:weak_converge}, we have $q(\xb)\ge \pi(\xb)$ if $\|\xb\|_2^2\ge4m^{-1}(\beta^{-1}d\log(4L/m)+b)$. Thus, for any $\alpha \ge 4m^{-1}(\beta^{-1}d\log(4L/m)+b)$, we have
\begin{align*}
\int_{\|\xb\|_2^2\ge\alpha}\pi(\dd\xb)&\le \int_{\|\xb\|_2^2\ge\alpha}q(\xb)\dd \xb = \PP_{z\sim\chi_d^2}[z\ge m\beta\alpha/4],
\end{align*}
where the last equality is due to the fact that $q(\xb)$ is a Gaussian distribution with mean $\zero$ and covariance matrix $4\Ib/(m\beta)$. By standard tail bound of Chi-Square distribution, for any $\delta\in(0,1)$ we have
\begin{align*}
\PP_{z\sim\chi_d^2}\big[z\ge d + 2 \sqrt{d\log(1/\delta)}+2\log(1/\delta)\big]\le \delta.
\end{align*}
Therefore, define by $\Omega = \cB(\zero, \bar R(\zeta))$ with 
\begin{align*}
\bar R(\zeta) = \bigg[\max\bigg\{\frac{4d\log(4L/m)+4\beta b}{m\beta}, \frac{4d + 8 \sqrt{d\log(1/\delta)}+8\log(1/\delta)}{m\beta}\bigg\}\bigg]^{1/2},
\end{align*}
we have 
\begin{align*}
\pi(\Omega)  &= \int_{\|\xb\|_2^2\le \bar R(\zeta)}\pi(\dd\xb) \notag\\
& = 1- \int_{\|\xb\|_2^2\ge \bar R(\zeta)}\pi(\dd\xb) \notag\\
& \ge 1 - \PP_{z\sim\chi_d^2}[z\ge m\beta \bar R(\zeta)/4]\notag\\
&\ge 1- \zeta,
\end{align*}
which completes the proof.
% Assuming $1/t^2\le m/4$, we have
% \begin{align*}
% \int_{\|\xb-\bar\xb\|_2^2\ge\alpha} e^{\|\xb-\bar \xb\|_2^2/t^2}q(\xb)\dd \xb \le \bigg(\frac{m/4}{m/4 - 1/t^2}\bigg)^{d/2}\cdot \frac{1}{[4\pi/(m - 4/t^2)]^{d/2}}\int_{\|\xb - \bar\xb\|_2^2\ge\alpha}e^{-(m/4-1/t^2)\|\xb - \bar\xb\|_2^2}\dd \xb.
% \end{align*}
% Note that the medium of standard $\chi_d^2$ distribution is smaller than $d$ and assuming $1/t^2\le 8/m$, we can set 
% \begin{align*}
% \alpha = 4dm^{-1}\log(m/2L)+\phi\ge\max\big\{2dm^{-1}\log(m/2L)+\phi, 4dm^{-1}\big\},
% \end{align*}
% and obtain
% \begin{align*}
% \int e^{\|\xb-\bar \xb\|_2^2/t^2}\pi(\xb)\dd \xb\le e^{\alpha/t^2} + \frac{1}{2}\bigg(\frac{m/4}{m/4 - 1/t^2}\bigg)^{d/2}.
% \end{align*}
% Therefore, we can set 
% \begin{align*}
% t^2 \ge \max\bigg\{\frac{4dm^{-1}\log(m/2L)+\phi}{\log(5/4)},\frac{4d}{m\log(3/2)}\bigg\}.
% \end{align*}
% Let $\beta$ be the quantity on the R.H.S. of the above inequality.
% % This further implies that $\xb - \bar\xb$ is sub-Gaussian with parameter 
\end{proof}

\bibliographystyle{ims}
\bibliography{MCMC}

\begin{thebibliography}{58}
\expandafter\ifx\csname natexlab\endcsname\relax\def\natexlab#1{#1}\fi
\expandafter\ifx\csname url\endcsname\relax
  \def\url#1{\texttt{#1}}\fi
\expandafter\ifx\csname urlprefix\endcsname\relax\def\urlprefix{URL }\fi

\bibitem[{Bakry et~al.(2008)Bakry, Barthe, Cattiaux, Guillin
  et~al.}]{bakry2008simple}
\textsc{Bakry, D.}, \textsc{Barthe, F.}, \textsc{Cattiaux, P.},
  \textsc{Guillin, A.} \textsc{et~al.} (2008).
\newblock A simple proof of the poincar{\'e} inequality for a large class of
  probability measures.
\newblock \textit{Electronic Communications in Probability} \textbf{13} 60--66.

\bibitem[{Bardenet et~al.(2015)Bardenet, Maillard
  et~al.}]{bardenet2015concentration}
\textsc{Bardenet, R.}, \textsc{Maillard, O.-A.} \textsc{et~al.} (2015).
\newblock Concentration inequalities for sampling without replacement.
\newblock \textit{Bernoulli} \textbf{21} 1361--1385.

\bibitem[{Bou-Rabee et~al.(2018)Bou-Rabee, Eberle and Zimmer}]{bou2018coupling}
\textsc{Bou-Rabee, N.}, \textsc{Eberle, A.} and \textsc{Zimmer, R.} (2018).
\newblock Coupling and convergence for {H}amiltonian monte carlo.
\newblock \textit{arXiv preprint arXiv:1805.00452} .

\bibitem[{Bou-Rabee and Hairer(2013)}]{bou2013nonasymptotic}
\textsc{Bou-Rabee, N.} and \textsc{Hairer, M.} (2013).
\newblock Nonasymptotic mixing of the mala algorithm.
\newblock \textit{IMA Journal of Numerical Analysis} \textbf{33} 80--110.

\bibitem[{Buser(1982)}]{buser1982note}
\textsc{Buser, P.} (1982).
\newblock A note on the isoperimetric constant.
\newblock \textit{Annales scientifiques de l'\'Ecole Normale Sup\'erieure}
  \textbf{Ser. 4, 15} 213--230.

\bibitem[{Chau et~al.(2019)Chau, Moulines, R{\'a}sonyi, Sabanis and
  Zhang}]{chau2019stochastic}
\textsc{Chau, N.~H.}, \textsc{Moulines, {\'E}.}, \textsc{R{\'a}sonyi, M.},
  \textsc{Sabanis, S.} and \textsc{Zhang, Y.} (2019).
\newblock On stochastic gradient {L}angevin dynamics with dependent data
  streams: the fully non-convex case.
\newblock \textit{arXiv preprint arXiv:1905.13142} .

\bibitem[{Chen et~al.(2015)Chen, Ding and Carin}]{chen2015convergence}
\textsc{Chen, C.}, \textsc{Ding, N.} and \textsc{Carin, L.} (2015).
\newblock On the convergence of stochastic gradient mcmc algorithms with
  high-order integrators.
\newblock In \textit{Advances in Neural Information Processing Systems}.

\bibitem[{Chen et~al.(2017)Chen, Wang, Zhang, Su and
  Carin}]{chen2017convergence}
\textsc{Chen, C.}, \textsc{Wang, W.}, \textsc{Zhang, Y.}, \textsc{Su, Q.} and
  \textsc{Carin, L.} (2017).
\newblock A convergence analysis for a class of practical variance-reduction
  stochastic gradient mcmc.
\newblock \textit{arXiv preprint arXiv:1709.01180} .

\bibitem[{Chen et~al.(2020)Chen, Du and Tong}]{chen2020stationary}
\textsc{Chen, X.}, \textsc{Du, S.~S.} and \textsc{Tong, X.~T.} (2020).
\newblock On stationary-point hitting time and ergodicity of stochastic
  gradient {L}angevin dynamics.
\newblock \textit{Journal of Machine Learning Research} \textbf{21} 1--41.

\bibitem[{Chen et~al.(2019{\natexlab{a}})Chen, Chen, Dong, Peng and
  Wang}]{chen2018accelerating}
\textsc{Chen, Y.}, \textsc{Chen, J.}, \textsc{Dong, J.}, \textsc{Peng, J.} and
  \textsc{Wang, Z.} (2019{\natexlab{a}}).
\newblock Accelerating nonconvex learning via replica exchange {L}angevin
  diffusion.
\newblock In \textit{ICLR}.

\bibitem[{Chen et~al.(2019{\natexlab{b}})Chen, Dwivedi, Wainwright and
  Yu}]{chen2019fast}
\textsc{Chen, Y.}, \textsc{Dwivedi, R.}, \textsc{Wainwright, M.~J.} and
  \textsc{Yu, B.} (2019{\natexlab{b}}).
\newblock Fast mixing of metropolized hamiltonian monte carlo: Benefits of
  multi-step gradients.
\newblock \textit{arXiv preprint arXiv:1905.12247} .

\bibitem[{Cheng et~al.(2018)Cheng, Chatterji, Abbasi-Yadkori, Bartlett and
  Jordan}]{cheng2018sharp}
\textsc{Cheng, X.}, \textsc{Chatterji, N.~S.}, \textsc{Abbasi-Yadkori, Y.},
  \textsc{Bartlett, P.~L.} and \textsc{Jordan, M.~I.} (2018).
\newblock Sharp convergence rates for {L}angevin dynamics in the nonconvex
  setting.
\newblock \textit{arXiv preprint arXiv:1805.01648} .

\bibitem[{Chiang et~al.(1987)Chiang, Hwang and Sheu}]{chiang1987diffusion}
\textsc{Chiang, T.-S.}, \textsc{Hwang, C.-R.} and \textsc{Sheu, S.~J.} (1987).
\newblock Diffusion for global optimization in ${R}^n$.
\newblock \textit{SIAM Journal on Control and Optimization} \textbf{25}
  737--753.

\bibitem[{Cousins and Vempala(2014)}]{cousins2014cubic}
\textsc{Cousins, B.} and \textsc{Vempala, S.} (2014).
\newblock A cubic algorithm for computing gaussian volume.
\newblock In \textit{SODA}. SIAM.

\bibitem[{Cover and Thomas(2012)}]{cover2012elements}
\textsc{Cover, T.~M.} and \textsc{Thomas, J.~A.} (2012).
\newblock \textit{Elements of information theory}.
\newblock John Wiley \& Sons.

\bibitem[{Dalalyan(2017{\natexlab{a}})}]{dalalyan2017further}
\textsc{Dalalyan, A.~S.} (2017{\natexlab{a}}).
\newblock Further and stronger analogy between sampling and optimization:
  {L}angevin {M}onte {C}arlo and gradient descent.
\newblock In \textit{COLT}.

\bibitem[{Dalalyan(2017{\natexlab{b}})}]{dalalyan2017theoretical}
\textsc{Dalalyan, A.~S.} (2017{\natexlab{b}}).
\newblock Theoretical guarantees for approximate sampling from smooth and
  log-concave densities.
\newblock \textit{Journal of the Royal Statistical Society: Series B
  (Statistical Methodology)} \textbf{79} 651--676.

\bibitem[{Dalalyan and Karagulyan(2019)}]{dalalyan2017user}
\textsc{Dalalyan, A.~S.} and \textsc{Karagulyan, A.} (2019).
\newblock User-friendly guarantees for the {L}angevin monte carlo with
  inaccurate gradient.
\newblock \textit{Stochastic Processes and their Applications} \textbf{129}
  5278--5311.

\bibitem[{Deng et~al.(2020)Deng, Feng, Gao, Liang and Lin}]{deng2020non}
\textsc{Deng, W.}, \textsc{Feng, Q.}, \textsc{Gao, L.}, \textsc{Liang, F.} and
  \textsc{Lin, G.} (2020).
\newblock Non-convex learning via replica exchange stochastic gradient mcmc.
\newblock In \textit{ICML}.

\bibitem[{Durmus and Moulines(2016)}]{durmus2016sampling}
\textsc{Durmus, A.} and \textsc{Moulines, E.} (2016).
\newblock Sampling from strongly log-concave distributions with the unadjusted
  {L}angevin algorithm .

\bibitem[{Durmus et~al.(2017{\natexlab{a}})Durmus, Moulines and
  Saksman}]{durmus2017convergence}
\textsc{Durmus, A.}, \textsc{Moulines, E.} and \textsc{Saksman, E.}
  (2017{\natexlab{a}}).
\newblock On the convergence of hamiltonian monte carlo.
\newblock \textit{arXiv preprint arXiv:1705.00166} .

\bibitem[{Durmus et~al.(2017{\natexlab{b}})Durmus, Moulines
  et~al.}]{durmus2017nonasymptotic}
\textsc{Durmus, A.}, \textsc{Moulines, E.} \textsc{et~al.}
  (2017{\natexlab{b}}).
\newblock Nonasymptotic convergence analysis for the unadjusted {L}angevin
  algorithm.
\newblock \textit{The Annals of Applied Probability} \textbf{27} 1551--1587.

\bibitem[{Dwivedi et~al.(2018)Dwivedi, Chen, Wainwright and
  Yu}]{dwivedi2018log}
\textsc{Dwivedi, R.}, \textsc{Chen, Y.}, \textsc{Wainwright, M.~J.} and
  \textsc{Yu, B.} (2018).
\newblock Log-concave sampling: Metropolis-hastings algorithms are fast!
\newblock In \textit{COLT}.

\bibitem[{Eberle(2016)}]{eberle2016reflection}
\textsc{Eberle, A.} (2016).
\newblock Reflection couplings and contraction rates for diffusions.
\newblock \textit{Probability theory and related fields} \textbf{166} 851--886.

\bibitem[{Eberle et~al.(2017)Eberle, Guillin and Zimmer}]{eberle2017couplings}
\textsc{Eberle, A.}, \textsc{Guillin, A.} and \textsc{Zimmer, R.} (2017).
\newblock Couplings and quantitative contraction rates for {L}angevin dynamics.
\newblock \textit{arXiv preprint arXiv:1703.01617} .

\bibitem[{Erdogdu et~al.(2018)Erdogdu, Mackey and Shamir}]{erdogdu2018global}
\textsc{Erdogdu, M.~A.}, \textsc{Mackey, L.} and \textsc{Shamir, O.} (2018).
\newblock Global non-convex optimization with discretized diffusions.
\newblock In \textit{NeurIPS}.

\bibitem[{Gao et~al.(2018{\natexlab{a}})Gao, Gurbuzbalaban and
  Zhu}]{gao2018breaking}
\textsc{Gao, X.}, \textsc{Gurbuzbalaban, M.} and \textsc{Zhu, L.}
  (2018{\natexlab{a}}).
\newblock Breaking reversibility accelerates {L}angevin dynamics for global
  non-convex optimization.
\newblock \textit{arXiv preprint arXiv:1812.07725} .

\bibitem[{Gao et~al.(2018{\natexlab{b}})Gao, G{\"u}rb{\"u}zbalaban and
  Zhu}]{gao2018global}
\textsc{Gao, X.}, \textsc{G{\"u}rb{\"u}zbalaban, M.} and \textsc{Zhu, L.}
  (2018{\natexlab{b}}).
\newblock Global convergence of stochastic gradient {H}amiltonian monte carlo
  for non-convex stochastic optimization: Non-asymptotic performance bounds and
  momentum-based acceleration.
\newblock \textit{arXiv preprint arXiv:1809.04618} .

\bibitem[{Langevin(1908)}]{langevin1908theory}
\textsc{Langevin, P.} (1908).
\newblock On the theory of brownian motion.
\newblock \textit{CR Acad. Sci. Paris} \textbf{146} 530--533.

\bibitem[{Ledoux(1994)}]{ledoux1994simple}
\textsc{Ledoux, M.} (1994).
\newblock A simple analytic proof of an inequality by p. buser.
\newblock \textit{Proceedings of the American mathematical society}
  \textbf{121} 951--959.

\bibitem[{Lee et~al.(2018)Lee, Risteski and Ge}]{lee2018beyond}
\textsc{Lee, H.}, \textsc{Risteski, A.} and \textsc{Ge, R.} (2018).
\newblock Beyond log-concavity: Provable guarantees for sampling multi-modal
  distributions using simulated tempering {L}angevin monte carlo.
\newblock In \textit{NeurIPS}.

\bibitem[{Lee and Vempala(2017)}]{lee2017eldan}
\textsc{Lee, Y.~T.} and \textsc{Vempala, S.~S.} (2017).
\newblock Eldan's stochastic localization and the kls hyperplane conjecture: An
  improved lower bound for expansion.
\newblock In \textit{FOCS}. IEEE.

\bibitem[{Lee and Vempala(2018)}]{lee2018convergence}
\textsc{Lee, Y.~T.} and \textsc{Vempala, S.~S.} (2018).
\newblock Convergence rate of riemannian hamiltonian monte carlo and faster
  polytope volume computation.
\newblock In \textit{STOC}.

\bibitem[{Lov{\'a}sz and Simonovits(1990)}]{lovasz1990mixing}
\textsc{Lov{\'a}sz, L.} and \textsc{Simonovits, M.} (1990).
\newblock The mixing rate of markov chains, an isoperimetric inequality, and
  computing the volume.
\newblock In \textit{FOCS}. IEEE.

\bibitem[{Lov{\'a}sz and Simonovits(1993)}]{lovasz1993random}
\textsc{Lov{\'a}sz, L.} and \textsc{Simonovits, M.} (1993).
\newblock Random walks in a convex body and an improved volume algorithm.
\newblock \textit{Random structures \& algorithms} \textbf{4} 359--412.

\bibitem[{Ma et~al.(2018)Ma, Chen, Jin, Flammarion and Jordan}]{ma2018sampling}
\textsc{Ma, Y.-A.}, \textsc{Chen, Y.}, \textsc{Jin, C.}, \textsc{Flammarion,
  N.} and \textsc{Jordan, M.~I.} (2018).
\newblock Sampling can be faster than optimization.
\newblock \textit{arXiv preprint arXiv:1811.08413} .

\bibitem[{Mangoubi and Vishnoi(2018)}]{mangoubi2018dimensionally}
\textsc{Mangoubi, O.} and \textsc{Vishnoi, N.} (2018).
\newblock Dimensionally tight bounds for second-order hamiltonian monte carlo.
\newblock In \textit{NeurIPS}.

\bibitem[{Mangoubi and Vishnoi(2019)}]{mangoubi2019nonconvex}
\textsc{Mangoubi, O.} and \textsc{Vishnoi, N.~K.} (2019).
\newblock Nonconvex sampling with the metropolis-adjusted {L}angevin algorithm.
\newblock In \textit{COLT}.

\bibitem[{Mengersen et~al.(1996)Mengersen, Tweedie et~al.}]{mengersen1996rates}
\textsc{Mengersen, K.~L.}, \textsc{Tweedie, R.~L.} \textsc{et~al.} (1996).
\newblock Rates of convergence of the hastings and metropolis algorithms.
\newblock \textit{The annals of Statistics} \textbf{24} 101--121.

\bibitem[{Mou et~al.(2019)Mou, Flammarion, Wainwright and
  Bartlett}]{mou2019improved}
\textsc{Mou, W.}, \textsc{Flammarion, N.}, \textsc{Wainwright, M.~J.} and
  \textsc{Bartlett, P.~L.} (2019).
\newblock Improved bounds for discretization of {L}angevin diffusions:
  Near-optimal rates without convexity.
\newblock \textit{arXiv preprint arXiv:1907.11331} .

\bibitem[{Neal et~al.(2011)}]{neal2011mcmc}
\textsc{Neal, R.~M.} \textsc{et~al.} (2011).
\newblock {MCMC} using {H}amiltonian dynamics.
\newblock \textit{Handbook of Markov Chain Monte Carlo} \textbf{2} 113--162.

\bibitem[{Nesterov(2018)}]{nesterov2018lectures}
\textsc{Nesterov, Y.} (2018).
\newblock \textit{Lectures on convex optimization}, vol. 137.
\newblock Springer.

\bibitem[{Parisi(1981)}]{parisi1981correlation}
\textsc{Parisi, G.} (1981).
\newblock Correlation functions and computer simulations.
\newblock \textit{Nuclear Physics B} \textbf{180} 378--384.

\bibitem[{Raginsky et~al.(2017)Raginsky, Rakhlin and
  Telgarsky}]{raginsky2017non}
\textsc{Raginsky, M.}, \textsc{Rakhlin, A.} and \textsc{Telgarsky, M.} (2017).
\newblock Non-convex learning via stochastic gradient {L}angevin dynamics: a
  nonasymptotic analysis.
\newblock In \textit{COLT}.

\bibitem[{Roberts and Tweedie(1996)}]{roberts1996exponential}
\textsc{Roberts, G.~O.} and \textsc{Tweedie, R.~L.} (1996).
\newblock Exponential convergence of {L}angevin distributions and their
  discrete approximations.
\newblock \textit{Bernoulli}  341--363.

\bibitem[{Sato and Nakagawa(2014)}]{sato2014approximation}
\textsc{Sato, I.} and \textsc{Nakagawa, H.} (2014).
\newblock Approximation analysis of stochastic gradient {L}angevin dynamics by
  using fokker-planck equation and ito process.
\newblock In \textit{ICML}.

\bibitem[{Shamir(2011)}]{shamir2011variant}
\textsc{Shamir, O.} (2011).
\newblock A variant of azuma's inequality for martingales with subgaussian
  tails.
\newblock \textit{arXiv preprint arXiv:1110.2392} .

\bibitem[{Smith(1984)}]{smith1984efficient}
\textsc{Smith, R.~L.} (1984).
\newblock Efficient monte carlo procedures for generating points uniformly
  distributed over bounded regions.
\newblock \textit{Operations Research} \textbf{32} 1296--1308.

\bibitem[{Teh et~al.(2016)Teh, Thiery and Vollmer}]{teh2016consistency}
\textsc{Teh, Y.~W.}, \textsc{Thiery, A.~H.} and \textsc{Vollmer, S.~J.} (2016).
\newblock Consistency and fluctuations for stochastic gradient {L}angevin
  dynamics.
\newblock \textit{The Journal of Machine Learning Research} \textbf{17}
  193--225.

\bibitem[{Vempala(2007)}]{vempala2007geometric}
\textsc{Vempala, S.} (2007).
\newblock Geometric random walks: a survey.
\newblock In \textit{Combinatorial and Computational Geometry}. Cambridge
  University Press.

\bibitem[{Vempala and Wibisono(2019)}]{vempala2019rapid}
\textsc{Vempala, S.~S.} and \textsc{Wibisono, A.} (2019).
\newblock Rapid convergence of the unadjusted {L}angevin algorithm: Log-sobolev
  suffices.
\newblock \textit{arXiv preprint arXiv:1903.08568} .

\bibitem[{Vollmer et~al.(2016)Vollmer, Zygalakis and
  Teh}]{vollmer2016exploration}
\textsc{Vollmer, S.~J.}, \textsc{Zygalakis, K.~C.} and \textsc{Teh, Y.~W.}
  (2016).
\newblock Exploration of the (non-) asymptotic bias and variance of stochastic
  gradient {L}angevin dynamics.
\newblock \textit{The Journal of Machine Learning Research} \textbf{17}
  5504--5548.

\bibitem[{Wang et~al.(2019)Wang, Zou, Gu and Osher}]{wang2019laplacian}
\textsc{Wang, B.}, \textsc{Zou, D.}, \textsc{Gu, Q.} and \textsc{Osher, S.}
  (2019).
\newblock Laplacian smoothing stochastic gradient markov chain monte carlo.
\newblock \textit{arXiv preprint arXiv:1911.00782} .

\bibitem[{Welling and Teh(2011)}]{welling2011bayesian}
\textsc{Welling, M.} and \textsc{Teh, Y.~W.} (2011).
\newblock Bayesian learning via stochastic gradient {L}angevin dynamics.
\newblock In \textit{ICML}.

\bibitem[{Xu et~al.(2018)Xu, Chen, Zou and Gu}]{xu2018global}
\textsc{Xu, P.}, \textsc{Chen, J.}, \textsc{Zou, D.} and \textsc{Gu, Q.}
  (2018).
\newblock Global convergence of {L}angevin dynamics based algorithms for
  nonconvex optimization.
\newblock In \textit{NeurIPS}.

\bibitem[{Zhang et~al.(2019)Zhang, Akyildiz, Damoulas and
  Sabanis}]{zhang2019nonasymptotic}
\textsc{Zhang, Y.}, \textsc{Akyildiz, {\"O}.~D.}, \textsc{Damoulas, T.} and
  \textsc{Sabanis, S.} (2019).
\newblock Nonasymptotic estimates for stochastic gradient {L}angevin dynamics
  under local conditions in nonconvex optimization.
\newblock \textit{arXiv preprint arXiv:1910.02008} .

\bibitem[{Zhang et~al.(2017)Zhang, Liang and Charikar}]{zhang2017hitting}
\textsc{Zhang, Y.}, \textsc{Liang, P.} and \textsc{Charikar, M.} (2017).
\newblock A hitting time analysis of stochastic gradient {L}angevin dynamics.
\newblock In \textit{COLT}.

\bibitem[{Zou et~al.(2019)Zou, Xu and Gu}]{zou2019sampling}
\textsc{Zou, D.}, \textsc{Xu, P.} and \textsc{Gu, Q.} (2019).
\newblock Sampling from non-log-concave distributions via variance-reduced
  gradient {L}angevin dynamics.
\newblock In \textit{Artificial Intelligence and Statistics}, vol.~89 of
  \textit{Proceedings of Machine Learning Research}. PMLR.

\end{thebibliography}

\end{document}